\newtheorem{theorem}{Theorem}
\newtheorem{lemma}{Lemma}
\newcommand{\I}{\mathbb{I}}
\newcommand{\R}{\mathbb{R}}
\newcommand{\cA}{\mathcal{A}}
\newcommand{\cE}{\mathcal{E}}
\newcommand{\cF}{\mathcal{F}}
\newcommand{\cG}{\mathcal{G}}
\newcommand{\cK}{\mathcal{K}}
\newcommand{\cL}{\mathcal{L}}
\newcommand{\cM}{\mathcal{M}}
\newcommand{\cR}{\mathcal{R}}
\newcommand{\cS}{\mathcal{S}}
\newcommand{\argmax}{\operatornamewithlimits{argmax}}
\mathchardef\mhyphen="2D
\newcommand{\ex}{\mathbb{E}}
\newcommand{\kl}{\textup{KL}}
\newcommand{\var}{\textup{Var}}
\newcommand{\sbr}[1]{\left( #1 \right)}
\newcommand{\mbr}[1]{\left[ #1 \right]}
\newcommand{\lbr}[1]{\left\{ #1 \right\}}
\newcommand{\abr}[1]{\left| #1 \right|}
\newcommand{\univ}{\textup{ground}}
\newcommand{\perm}{\textup{Perm}}
\newcommand{\desweight}{\textup{DesW}}
\newcommand{\algcascadingeuler}{\mathtt{CascadingVI}}
\newcommand{\algcascadingbpi}{\mathtt{CascadingBPI}}
\newcommand{\algbestperm}{\mathtt{BestPerm}}
\newcommand{\algcomment}[1]{{\color{blue} \hfill // #1}}
\newcommand{\bst}{\textup{best}}
\newcommand{\compilehidecomments}{false}
	\newcommand{\srikant}[1]{}
	\newcommand{\wei}[1]{}
	\newcommand{\yihan}[1]{}
	\newcommand{\srikant}[1]{{\color{red} [\text{Srikant:} #1]}}
	\newcommand{\wei}[1]{{\color{purple} [\text{Wei:} #1]}}
	\newcommand{\yihan}[1]{{\color{teal} [\text{Yihan:} #1]}}
\newcommand{\compilehiderevision}{false}
	\newcommand{\reviewerone}[1]{}
	\newcommand{\reviewertwo}[1]{}
	\newcommand{\reviewerthree}[1]{}
	\newcommand{\reviewerfour}[1]{}
	\newcommand{\reviewerone}[1]{{\color{teal} \text{(Reviewer z4b8)} #1}}
	\newcommand{\reviewertwo}[1]{{\color{orange} \text{(Reviewer MkFu)} #1}}
	\newcommand{\reviewerthree}[1]{{\color{purple} \text{(Reviewer Rma4)} #1}}
	\newcommand{\reviewerfour}[1]{{\color{red} \text{(Reviewer 6goX)} #1}}
\title{Cascading Reinforcement Learning}
\author{Yihan Du \\
	Electrical and Computer Engineering\\
	University of Illinois Urbana-Champaign\\
	Urbana, IL 61801, USA \\
	\texttt{yihandu@illinois.edu} \\
	\And
	R. Srikant\thanks{Corresponding authors.} \\
	Electrical and Computer Engineering\\
	University of Illinois Urbana-Champaign\\
	Urbana, IL 61801, USA \\
	\texttt{rsrikant@illinois.edu} \\
	\And
	Wei Chen$^*$ \\
	Microsoft Research \\
	Beijing 100080, China \\
	\texttt{weic@microsoft.com} \\
}
\begin{document}

	\maketitle
	
	\begin{abstract}
		Cascading bandits have gained popularity in recent years due to their applicability to recommendation systems and online advertising. In the cascading bandit model, at each timestep, an agent recommends an ordered subset of items (called an item list) from a pool of items, each associated with an unknown attraction probability. Then, the user examines the list, and clicks the first attractive item (if any), and after that, the agent receives a reward. 
		The goal of the agent is to maximize the expected cumulative reward. However, the prior literature on cascading bandits ignores the influences of user states (e.g., historical behaviors) on recommendations and the change of states as the session proceeds. Motivated by this fact, we propose a generalized cascading RL framework, which considers the impact of user states and state transition into decisions. In cascading RL, we need to select items not only with  large attraction probabilities but also leading to good successor states. This imposes a huge computational challenge due to the combinatorial action space.
		To tackle this challenge, we delve into the properties of value functions, and design an oracle $\algbestperm$ to efficiently find the optimal item list. Equipped with $\algbestperm$, we develop two algorithms  $\algcascadingeuler$ and $\algcascadingbpi$, which are both computation-efficient and sample-efficient, and provide near-optimal regret and sample complexity guarantees. 
		Furthermore, we present experiments to show the improved computational and sample efficiencies of our algorithms compared to straightforward adaptations of existing RL algorithms in practice.
	\end{abstract}

	\section{Introduction}
	
	In recent years, a  model called cascading bandits~\citep{kveton2015cascading,combes2015learning,li2016contextual,vial2022minimax} has received extensive attention in the online learning community, and found various applications such as recommendation systems~\citep{recommender_systems2015} and online advertising~\citep{online_advertising2013}. In this model, an agent is given a ground set of items, each with an unknown attraction probability. 
	At each timestep, the agent recommends an ordered list of items to a user, and the user examines the items one by one, where the probability that each item attracts the user is equal to its attraction probability. Then, the user clicks the first attractive item (if any), and skips the following items.
	If an item is clicked in the list, the agent receives a reward; if no item is clicked, the agent receives no reward.
	The objective of the agent is to maximize the expected cumulative reward.
	
	While the cascading bandit model has been extensively studied, it neglects the influences of user states (e.g., users' past behaviors) on recommendations, and the fact that states can transition as users take actions.
	For example, in personalized video recommendation, the recommendation system usually suggests a list of videos according to the characteristics and viewing records of users. If the user clicks a video,  
	the environment of the system can transition to a next state that stores the user's latest behavior and interest. Next, the recommendation system will suggest videos of a similar type as what the user watched before to improve the click-through rate. While contextual cascading bandits~\citep{li2016contextual,zong16linear_generalization} can also be used to formulate users' historical behaviors as part of contexts, cascading RL is more suitable for long-term reward maximization, since it considers potential rewards from future states. 
	For instance, to maximize the long-term reward, the recommendation system may recommend videos of TV series, since users may keep watching subsequent videos of the same TV series once they get attracted
	to one of them.
	
	To model such state-dependent behavior, we propose a novel framework called cascading reinforcement learning (RL), which generalizes the conventional cascading bandit model to 
	depict the influence of user states on recommendations and the transition of states in realistic applications. 
	In this framework, there is a pool of $N$ items and a space of states. Each state-item pair has an unknown attraction probability, an underlying transition distribution and a deterministic reward. 
	In each episode (e.g., a session in recommendation systems), at each step, the agent first observes the current state (e.g., the user's past behavior in the session), 
	and recommends a list of at most $m \leq N$ items. Then, the user goes over the list one by one, and clicks the first interesting item. After that, the agent receives a reward, 
	and transitions to a next state according to the current state and clicked item.
	If no item in the list interests the user, the agent receives zero reward, and transitions to a next state according to the current state and a virtual item $a_{\bot}$. Here we say that the user clicks $a_{\bot}$ if no item is clicked.
	We define a policy as a mapping from the space of states and the current step to the space of item lists, and the optimal policy as the policy that maximizes the expected cumulative reward. 

	Our work distinguishes itself from prior cascading bandit studies~\citep{kveton2015cascading,vial2022minimax} on its unique computational challenge. In prior cascading bandit studies, there is no state, and they only need to select $m$ items with the highest attraction probabilities, which does not involve computational difficulty. In contrast, in cascading RL, states also matter, and we need to balance between 
	the maximization of the attraction probabilities of chosen items and the optimization of the expected reward obtained from future states. This poses a great computational difficulty in the planning of the optimal policy under a combinatorial action space.
	Moreover, the combinatorial action space also brings a challenge on sample efficiency, i.e., how to avoid a dependency on the exponential number of actions in sample complexity.
	
	To handle these challenges, we conduct a fine-grained analysis on the properties of value functions, and design an efficient oracle $\algbestperm$ to find the optimal item list, based on a novel dynamic programming for combinatorial optimization. 
	Furthermore, we design computation-efficient and sample-efficient algorithms $\algcascadingeuler$ and $\algcascadingbpi$, which employ oracle $\algbestperm$ to only maintain the estimates for items and avoid enumerating over item lists. 
	Finally, we also conduct experiments to demonstrate the  superiority of our algorithms over naive adaptations of classic RL algorithms in computation and sampling.
	

	The contributions of this work are summarized as follows.
	
	\begin{itemize}
		\item We propose the cascading RL framework, which generalizes the traditional cascading bandit model to formulate the influence of user states (e.g., historical behaviors) on recommendations and the change of states through time. This framework can be applied to various real-world scenarios, e.g., personalized recommendation systems and online advertising.
		
		\item To tackle the computational challenge of cascading RL, we leverage the properties of value functions to develop a novel oracle $\algbestperm$, which uses a carefully-designed dynamic programming to efficiently find the optimal item list under combinatorial action spaces.
		
		\item For the regret minimization objective, with oracle $\algbestperm$, we design an  efficient algorithm $\algcascadingeuler$, and establish a $\tilde{O}(H\sqrt{HSNK})$ regret, which matches a known lower bound for the general episodic RL setting up to $\tilde{O}(\sqrt{H})$. 
		Here $S$ is the number of states, $H$ is the length of each episode, and $K$ is the number of episodes. 
		Note that the regret depends only on the number of items $N$, instead of the exponential number of item lists.
		
		\item For the best policy identification objective, we devise a computation and sample efficient algorithm $\algcascadingbpi$, and provide $\tilde{O}(\frac{H^3SN}{\varepsilon^2})$ sample complexity. $\algcascadingbpi$ is optimal up to a factor of $\tilde{O}(H)$ when $\varepsilon < \frac{H}{S^2}$, where $\varepsilon$ is an accuracy parameter.
	\end{itemize}

	\section{Related Work}
	
	
	\textbf{Cascading bandits.}
	\citet{kveton2015cascading} and \citet{combes2015learning} concurrently introduce the cascading bandit model, and design algorithms based on upper confidence bounds.
	\citet{cheung2019thompson} and \citet{zhong2021thompson} propose Thompson Sampling-type~\citep{thompson1933} algorithms. \citet{vial2022minimax} develop algorithms equipped with variance-aware confidence intervals, and achieve near-optimal regret bounds. In these cascading bandit studies, there is no state (context), and  the order of selected items does not matter. Thus, they only need to select $m$ items with the maximum attraction probabilities. 
	\citet{li2016contextual,zong16linear_generalization} and \citet{li2018online} study linear contextual cascading bandits.
	In \citep{zong16linear_generalization,li2018online}, the attraction probabilities depend on contexts, and the order of selected items still does not matter. Hence, they need to select $m$ items with the maximum attraction probabilities in the current context. \citet{li2016contextual} consider position discount factors, where  the order of selected items is important.
	
	Different from the above studies, our cascading RL formulation further considers state transition, and the attraction probabilities and rewards depend on states. Thus, we need to put the items that  induce higher expected future rewards in the current state in the front. In addition, we require to both maximize the attraction probabilities of selected items, and optimize the potential future reward.

	\textbf{Provably efficient RL.}
	In recent years, there have been a number of studies on provably efficient RL, e.g.,~\citep{jaksch2010near,dann2015sample,azar2017minimax,jin2018q,zanette2019tighter}.
	However, none of the above studies tackle the challenge of combinatorial action space and the computation and sample efficiency issues it brings.
	In contrast, we have to directly face this challenge when we generalize cascading bandits to the RL framework to better formulate the state transition in real-world recommendation and advertising applications, 
	and we provide satisfactory solutions with near optimal computation and sample efficiency.
	
	\vspace*{-0.5em}
	\section{Problem Formulation} \label{sec:formulation}
	
	
	We consider an episodic cascading Markov decision process (MDP) $\cM(\cS,A^{\univ},\cA,m,q,p,r,H)$. Here $\cS$ is the space of states. $A^{\univ}:=\{a_1,\dots,a_{N},a_{\bot}\}$ is the ground set of items, where $a_1,\dots,a_{N}$ are regular items and  $a_{\bot}$ is a \emph{virtual item}.
	Item $a_{\bot}$ is put at the end of each item list by default, which represents that no item in the list is clicked. 
	An action (i.e., item list) $A$ is a permutation which consists of at least one and at most $m \leq N$ regular items and the item $a_{\bot}$ at the end.  
	$\cA$ is the collection of all actions.
	We use ``action'' and ``item list'' interchangeably throughout the paper. 
	For any $A \in \cA$ and $i \in [|A|]$, let $A(i)$ denote the $i$-th item in $A$, and we have $A(|A|)=a_{\bot}$.

	For any $(s,a) \in \cS \times A^{\univ}$, $q(s,a) \in [0,1]$ denotes the attraction probability, which gives the probability that item $a$ is clicked in state $s$;
	$r(s,a) \in [0,1]$ is the deterministic reward of clicking item $a$ in state $s$;
	$p(\cdot|s,a) \in \triangle_{\cS}$ is the transition distribution, so that for any $s'\in \cS$, $p(s'|s,a)$ gives the probability of transitioning to state $s'$ if item $a$ is clicked in state $s$. 
	For any $s \in \cS$, we define $q(s,a_{\bot}):=1$ and $r(s,a_{\bot}):=0$ (because if no regular item is clicked, the agent must click $a_{\bot}$ and receives zero reward).  
	Transition probability $p(\cdot|s,a_{\bot})$ allows to formulate interesting transition scenarios that could happen when none of the recommended items is clicked.
	One of such scenarios is that the session ends with no more reward, which is modeled by letting $p(s_{0}|s,a_{\bot})=1$ where $s_{0}$ is a special absorbing state that always induces zero reward.
	
	$H$ is the length of each episode.
	In addition, we define a deterministic policy $\pi=\{\pi_h:\cS \rightarrow \cA\}_{h \in [H]}$ as a collection of $H$ mappings from the state space to the action space, so that $\pi_h(s)$ gives what item list to choose in state $s$ at step $h$. 
	
	
	The cascading RL game is as follows. In each episode $k$, an agent first chooses a policy $\pi^k$, and starts from a fixed initial state $s^k_1:=s_1$.
	At each step $h \in [H]$, the agent first observes the current state $s^k_h$ (e.g., stored historical behaviors of the user), and then selects an item list $A^k_h=\pi^k_h(s^k_h)$ according to her policy.
	After that, the environment (user) browses the selected item list one by one following the order given in $A^k_h$. 
	When the user browses the $i$-th item $A^k_h(i)$ ($i \in [|A^k_h|]$), there is a probability of $q(s^k_h,A^k_h(i))$ that the user is attracted by the item and clicks it.
	This attraction and clicking event is independent among all items.
	Once an item $A^k_h(i)$ is clicked, the agent observes which item is clicked and receives reward $r(s^k_h,A^k_h(i))$, skipping the subsequent items $\{A^k_h(j)\}_{i < j \leq |A^k_h|}$,
	and the system transitions to a next state $s^k_{h+1} \sim p(\cdot|s^k_h,A^k_h(i))$. 
	On the other hand, if no regular item in $A^k_h$ is clicked, i.e., we say that $a_{\bot}$ is clicked, then the agent receives zero reward and transitions to a next state $s^k_{h+1} \sim p(\cdot|s^k_h,a_{\bot})$. 
	After step $H$, this episode ends and the agent enters the next episode.
	
	For any $k>0$ and $h \in [H]$, we use $I_{k,h}$ to denote the index of the clicked item in $A^k_h$, and $I_{k,h}=|A^k_h|$ if no regular item is clicked. 
	In our cascading RL model, 
	the agent only observes the attraction of items $\{A^k_h(j)\}_{j \leq I_{k,h}}$ 
	(i.e., items $\{A^k_h(j)\}_{j < I_{k,h}}$ are not clicked and item $A^k_h(i)$ is clicked), 
	and the state $s_{h+1}^k$ the system transitions into based on the unknown $p(\cdot|s^k_h,A^k_h(I_{k,h}))$.
	Whether the user would click any item in $\{A^k_h(j)\}_{j>I_{k,h}}$ is \emph{unobserved}.
	
	For any policy $\pi$, $h \in [H]$ and $s \in \cS$, we define value function $V^{\pi}_h:\cS \rightarrow [0,H]$ as the expected cumulative reward that can be obtained under policy $\pi$, starting from state $s$ at step $h$, till the end of the episode. Formally,
	$
	V^{\pi}_h (s) := \ex_{q,p,\pi}  [ \sum_{h'=h}^{H} r(s_{h'}, A_{h'}(I_{h'})) | s_h=s ] 
	$.
	Similarly, for any policy $\pi$, $h \in [H]$ and $(s,A) \in \cS \times \cA$, we define Q-value function $Q^{\pi}_h:\cS \times \cA \rightarrow [0,H]$ as the expected cumulative reward received under policy $\pi$, starting from $(s,A)$ at step $h$, till the end of the episode, i.e.,
	$
	Q^{\pi}_h (s,A) := \ex_{q,p,\pi} [ \sum_{h'=h}^{H} r(s_{h'}, A_{h'}(I_{h'})) | s_h=s, A_h=A ] 
	$.
	Since $\cS$, $\cA$ and $H$ are finite, there exists a deterministic optimal policy $\pi^*$ which always gives the maximum value $V^{\pi^*}_h(s)$ for all $s \in \cS$ and $h \in [H]$~\citep{sutton2018reinforcement}.  The Bellman equation and Bellman optimality equation for cascading RL can be stated as follows.
	\begin{equation*}
		\left\{
		\begin{aligned}
			Q^{\pi}_h(s,A)&=\sum_{i=1}^{|A|} \prod_{j=1}^{i-1} \Big(1-q(s,A(j)) \Big) q(s,A(i)) \Big(r(s,A(i))+p(\cdot|s,A(i))^\top V^{\pi}_{h+1} \Big) ,
			\\
			V^{\pi}_h(s)&=Q^{\pi}_h(s,\pi_h(s)) ,
			\\
			V^{\pi}_{H+1}(s)&=0, \quad \forall s \in \cS, \forall A \in \cA, \forall h \in [H] .
		\end{aligned}
		\right.
	\end{equation*}
	\begin{equation}
		\left\{
		\begin{aligned}
			Q^{*}_h(s,A)&=\sum_{i=1}^{|A|} \prod_{j=1}^{i-1} \Big( 1-q(s,A(j)) \Big) q(s,A(i)) \Big( r(s,A(i))+p(\cdot|s,A(i))^\top V^{*}_{h+1} \Big) ,
			\\
			V^{*}_h(s)&=\max_{A \in \cA} Q^{*}_h(s,A) ,
			\\
			V^{*}_{H+1}(s)&=0, \quad \forall s \in \cS, \forall A \in \cA, \forall h \in [H] .
		\end{aligned}
		\right. \label{eq:bellman_optimality_equation}
	\end{equation}
	Here $\prod_{j=1}^{i-1} ( 1-q(s,A(j)) ) q(s,A(i))$ is the probability that item $A(i)$ is clicked, which captures the cascading feature. $r(s,A(i))+p(\cdot|s,A(i))^\top V^{*}_{h+1}$ is the expected cumulative reward received from step $h$ onward if item $A(i)$ is clicked at step $h$.
	$Q^{*}_h(s,A)$ is the summation of the expected cumulative reward over each item in $A$.
	In this work, we focus on the tabular setting where $S$ is not too large. This is practical in category-based recommendation applications. For example, in video recommendation, the videos are categorized into multiple types, and the recommendation system can suggest videos according to the types of the latest one or two videos that the user just watched.
	
	We investigate two popular objectives in RL, i.e., regret minimization
	and best policy identification.
	In the regret minimization setting, the agent plays $K$ episodes with the goal of minimizing the regret
	$
	\cR(K) = \sum_{k=1}^{K} V^{*}_1(s_1) - V^{\pi^k}_1(s_1) 
	$.
	In the best policy identification setting, given  a confidence parameter $\delta \in (0,1)$ and an accuracy parameter $\varepsilon$, the agent aims to identify an $\varepsilon$-optimal policy $\hat{\pi}$ which satisfies
	$
	V^{\hat{\pi}}_1(s_1) \geq V^{*}_1(s_1) - \varepsilon 
	$
	with probability at least $1-\delta$, using as few episodes as possible. Here the performance is measured by the number of episodes used, i.e., sample complexity.

	\vspace*{-0.3em}
	\section{An Efficient Oracle for Cascading RL}
	\vspace*{-0.3em}

	In the framework of model-based RL, it is typically assumed that one has estimates (possibly including exploration bonuses) of the model, and then a planning problem is solved to compute the value functions. In our problem, this would correspond to solving the maximization in Eq.~\eqref{eq:bellman_optimality_equation}. Different from planning in classic RL~\citep{jaksch2010near,azar2017minimax,sutton2018reinforcement}, in cascading RL, a naive implementation of this maximization will incur exponential computation complexity due to the fact that we have to consider all $A \in \cA$. 
	
	To tackle this computational difficulty, we note that each backward recursion step in Eq.~\eqref{eq:bellman_optimality_equation} involves the solution to a combinatorial optimization of the following form: For any ordered subset of $A^{\univ}$ denoted by $A$, $u: A^{\univ} \rightarrow \R$ and $w: A^{\univ} \rightarrow \R$, 
	\begin{align}
		\max_{A \in \cA} f(A,u,w) :=  \sum_{i=1}^{|A|} \prod_{j=1}^{i-1} \Big( 1-u(A(j)) \Big) u(A(i)) w(A(i)) . \label{eq:oracle_problem}
	\end{align}
	Here $f(A,u,w)$   corresponds to $Q^*_h(s,A)$, $u(A(i))$ represents $q(s,A(i))$, and $w(A(i))$ stands for $r(s,A(i))+p(\cdot|s,A(i))^\top V^{*}_{h+1}$.
	We have $u(a_{\bot})=1$ to match with $q(s,a_{\bot})=1$.
	
	
	\subsection{Crucial Properties of Problem~\eqref{eq:oracle_problem}}

	
	Before introducing an efficient oracle to solve problem Eq.~\eqref{eq:oracle_problem}, we first exhibit several nice properties of this optimization problem, which serve as the foundation of our oracle design.

	For any subset of items $X \subseteq A^{\univ}$, let $\perm(X)$ denote the collection of permutations of the items in $X$, and $\desweight(X) \in \perm(X)$ denote the permutation where items are sorted in descending order of $w$. 
	For convenience of analysis, here we treat $a_{\bot}$ as an ordinary item as $a_1,\dots,a_N$, and $a_{\bot}$ can appear in any position in the permutations in $\perm(X)$.
	

	\begin{lemma} \label{lemma:f_property}
		The weighted cascading reward function $f(A,u,w)$ satisfies the following properties:
		\begin{enumerate}[(i)]
			\item For any $u$, $w$ and $X \subseteq A^{\univ}$, we have	
			$$	
			f(\desweight(X),u,w)=\max_{A \in \perm(X)} f(A,u,w) .
			$$
			\item For any $u$, $w$ and disjoint $X,X' \subseteq A^{\univ} \setminus \{a_{\bot}\}$ such that $w(a)>w(a_{\bot})$ for any $a \in X , X'$, we have 	
			\begin{align*}	
				f((\desweight(X),a_{\bot}),u,w) \leq  f((\desweight(X \cup X'), a_{\bot}),u,w) .	
			\end{align*}
			Furthermore, for any $u$, $w$ and disjoint $X,X' \subseteq A^{\univ} \setminus \{a_{\bot}\}$ such that $w(a)>w(a_{\bot})$ for any $a \in X$, and $w(a)<w(a_{\bot})$ for any $a \in X'$, we have	
			\begin{align*}	
				f((\desweight(X,X'),a_{\bot}),u,w) \leq  f((\desweight(X), a_{\bot}),u,w) .	
			\end{align*}
		\end{enumerate}
	\end{lemma}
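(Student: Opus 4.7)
The plan is to prove (i) by a standard adjacent-swap exchange argument, and to prove the two inequalities of (ii) by direct computation, each reducing to a termwise non-negativity check. The whole argument hinges on one identity,
\[
\sum_{i=1}^{\ell} \prod_{j=1}^{i-1}\bigl(1-u(b_j)\bigr)\, u(b_i) \;=\; 1 - \prod_{i=1}^{\ell}\bigl(1 - u(b_i)\bigr),
\]
which I would verify once by induction on $\ell$ (it is exactly the probability that at least one of $b_1,\dots,b_\ell$ is attracted). Combined with the convention $u(a_\bot)=1$ used throughout, this forces the left-hand side to equal $1$ whenever the suffix contains $a_\bot$, and this is what makes the telescoping in (ii) collapse.

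For (i), fix $A \in \perm(X)$ and let $A'$ be obtained by swapping $a := A(i)$ and $b := A(i+1)$. The prefix-product $\prod_{\ell<j}(1-u(A(\ell)))$ is symmetric in its arguments, so every term of $f$ at positions outside $\{i,i+1\}$ matches between $A$ and $A'$. A one-line expansion then gives
\[
f(A,u,w) - f(A',u,w) = \Bigl(\prod_{\ell<i}\bigl(1-u(A(\ell))\bigr)\Bigr)\, u(a)\,u(b)\,\bigl(w(a)-w(b)\bigr),
\]
which is non-negative whenever $w(a) \ge w(b)$. Bubble-sorting $A$ into descending-$w$ order by adjacent swaps therefore never decreases $f$, proving (i).

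For the first inequality of (ii), I would induct on $|X'|$; it suffices to show that inserting one $a^* \in X'$ into $(\desweight(X),a_\bot)$ at its sorted position $k$ weakly increases $f$. Writing $P_k = \prod_{j<k}(1-u(b_j))$ and letting $T = \sum_{i\ge k}\prod_{k \le j < i}(1-u(b_j))\,u(b_i)\,w(b_i)$ denote the cascading reward of the suffix starting at position $k$, a direct expansion shows that the net change in $f$ equals $P_k\cdot u(a^*)\cdot(w(a^*)-T)$. Since $w(a_\bot)<w(a^*)$ and all items at positions $\ge k$ have weight at most $w(a^*)$, I bound $T \le w(a^*)\sum_{i\ge k}\prod_{k\le j<i}(1-u(b_j))\,u(b_i)$; the identity combined with $u(a_\bot)=1$ makes this last sum equal to $1$, giving $T\le w(a^*)$. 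For the second inequality, set $L=(\desweight(X),a_\bot)$ and $L' = (\desweight(X),c_1,\dots,c_\ell,a_\bot)$ with $(c_1,\dots,c_\ell)=\desweight(X')$. The first $|X|$ terms of $f$ coincide; computing the remaining contributions with $P = \prod_{a \in X}(1-u(a))$ and using the identity to collapse the $a_\bot$-term of $L'$ yields
\[
f(L,u,w) - f(L',u,w) = P\sum_{i=1}^{\ell}\prod_{j<i}\bigl(1-u(c_j)\bigr)\,u(c_i)\,\bigl(w(a_\bot)-w(c_i)\bigr),
\]
which is non-negative term by term since $w(c_i)<w(a_\bot)$ for every $c_i \in X'$.

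The main obstacle is bookkeeping rather than ideas: one must carefully align the prefix-products that factor out of $f(L)$ and $f(L')$, and invoke the identity on exactly the right suffix. Once the algebra is set up and the convention $u(a_\bot)=1$ is made explicit, both inequalities of (ii) collapse to sums of termwise non-negative contributions.
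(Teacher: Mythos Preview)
Your proof is correct. Part (i) is essentially identical to the paper's argument: both compute the effect of an adjacent swap and then appeal to bubble sort.

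For part (ii), however, your route differs from the paper's. The paper isolates a one-line auxiliary fact---that items placed after $a_\bot$ contribute nothing, i.e., $f((A,a_\bot,A'),u,w)=f((A,a_\bot),u,w)$ because $u(a_\bot)=1$---and then gets both inequalities almost for free from (i). For the first inequality it writes
\[
f((\desweight(X),a_\bot),u,w)=f((\desweight(X),a_\bot,X'),u,w)\le f((\desweight(X\cup X'),a_\bot),u,w),
\]
the inequality being (i) applied to the set $X\cup X'\cup\{a_\bot\}$; for the second it reverses the roles and writes
\[
f((\desweight(X\cup X'),a_\bot),u,w)\le f((\desweight(X),a_\bot,\desweight(X')),u,w)=f((\desweight(X),a_\bot),u,w).
\]
Your approach instead does the algebra directly: you insert one element at a time and bound the suffix reward $T$ using the telescoping identity $\sum_i \prod_{j<i}(1-u(b_j))u(b_i)=1$ (which is where $u(a_\bot)=1$ enters for you), and for the second inequality you expand the difference into an explicit non-negative sum. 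Both arguments are valid; the paper's is shorter and more conceptual (it reuses (i) rather than redoing a cascading computation), while yours is self-contained and makes the sign of every term visible.
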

	
	Property (i) can be proved by leveraging the continued product structure in $f(A,u,w)$ and a similar analysis as the interchange argument~\citep{bertsekas1999rollout,ross2014introduction}. Property (ii) follows from property (i) and the fact that $u(a_{\bot})=1$. The detailed proofs are presented in Appendix~\ref{apx:oracle}.

	\textbf{Remark.}
	Property (i) exhibits that when fixing a subset of $A^{\univ}$, the best order of this subset is to rank items in descending order of $w$. 
	Then, the best permutation selection problem in Eq.~\eqref{eq:oracle_problem} can be reduced to a \emph{best subset selection} problem.

	Then, a natural question is what items and how many items should the best subset (permutation) include? Property (ii) gives an answer ----- we should \emph{include} the items with weights above $w(a_{\bot})$, and \emph{discard} the items with weights below $w(a_{\bot})$.
	The intuition behind  is as follows:
	If a permutation does not include the items in  $X'$ such that $w(a)>w(a_{\bot})$ for any $a \in X'$, this is equivalent to putting $X'$ behind $a_{\bot}$, since $u(a_{\bot})=1$. Then, according to property (i), we can arrange the items in $X'$ in front of $a_{\bot}$ (i.e., include them) to obtain a better permutation. Similarly, if a permutation includes the items $X'$ such that $w(a)<w(a_{\bot})$ for any $a \in X'$, from property (i), we can also obtain a better permutation by putting $X'$ behind $a_{\bot}$, i.e., discarding them.
	
	%
	
	\begin{algorithm}[t]
		\caption{$\algbestperm$: find $\argmax_{A \in \cA} f(A,u,w)$ and $\max_{A \in \cA} f(A,u,w)$} \label{alg:best_perm}
		\KwIn{$A^{\univ}$, $u: A^{\univ} \rightarrow [0,1]$, $w: A^{\univ} \rightarrow \R$.}
		Sort the items in $A^{\univ}$ in descending order of $w$, and denote the sorted sequence by $a_{1},\dots,a_{J},a_{\bot},a_{J+1},\dots,a_{N}$. Here $J$ denotes the number of items with weights above $w(a_{\bot})$\;
		\If{$J=0$}
		{
			$a' \leftarrow \argmax_{a \in \{a_1,\dots,a_N\}} \{u(a) w(a)+ (1-u(a)) w(a_{\bot})\}$;
			\algcomment{Select the best single item} \label{line:j=0_start}\\
			$S^{\bst} \leftarrow (a')$. $F^{\bst} \leftarrow u(a') w(a')+ (1-u(a')) w(a_{\bot})$\; \label{line:j=0_end}
		}
		\If{$1\leq J\leq m$}
		{
			$S^{\bst}\leftarrow (a_{1},\dots,a_{J})$; \algcomment{Simply output $(a_{1},\dots,a_{J})$} \label{line:j<=m_start}\\
			$F^{\bst}\leftarrow \sum_{i=1}^{J} \prod_{j=1}^{i-1} (1-u(a_{j})) u(a_{i}) w(a_{i})+ \prod_{j=1}^{J} (1-u(a_{j})) w(a_{\bot})$\; \label{line:j<=m_end}
		}
		\If{$J>m$}
		{
			$S[J][1] \leftarrow (a_{J})$; \algcomment{Select $m$ best items from $(a_{1},\dots,a_{J})$} \label{line:j>m_start}\\
			$F[J][1] \leftarrow u(a_{J}) w(a_{J})+ (1-u(a_{J})) w(a_{\bot})$\;
			For any $i \in [J]$, $S[i][0] \leftarrow \emptyset$ and $F[i][0] \leftarrow w(a_{\bot})$\;
			For any $i \in [J]$ and $J-i+1<k\leq m$, $F[i][k] \leftarrow -\infty$\;
			\For{$i=J-1,J-2,\dots,1$}
			{
				\For{$k=1,2,\dots,\min\{m,J-i+1\}$}
				{
					\If{$F[i+1][k] \geq w(a_i)u(a_i)+(1-u(a_i))  F[i+1][k-1]$}
					{
						$S[i][k] \leftarrow S[i+1][k]$. $F[i][k] \leftarrow F[i+1][k]$\;
					}
					\Else
					{
						$S[i][k] \leftarrow (a_i,S[i+1][k-1])$\; $F[i][k] \leftarrow  u(a_i)w(a_i)+(1-u(a_i)) F[i+1][k-1]$\;
					}
				}
			}
			$S^{\bst} \leftarrow S[1][m]$. $F^{\bst} \leftarrow F[1][m]$\; \label{line:j>m_end}
		}
		\textbf{return} $F^{\bst},S^{\bst}$\;
	\end{algorithm}
	
	\vspace*{-0.4em}
	\subsection{Oracle $\algbestperm$}

	
	Making use of the properties of $f(A,u,w)$, we develop an efficient oracle $\algbestperm$ to solve problem Eq.~\eqref{eq:oracle_problem}, based on a carefully-designed dynamic programming to find the optimal item subset.
	
	Algorithm~\ref{alg:best_perm} presents the pseudo-code of oracle $\algbestperm$. 
	Given attraction probabilities $u$ and weights $w$, we first sort the items in $A^{\univ}$ in descending order of $w$, and denote the sorted sequence by $a_{1},\dots,a_{J},a_{\bot},a_{J+1},\dots,a_{N}$. Here $J$ denotes the number of items with weights above $w(a_{\bot})$. 
	If $J=0$, i.e., $a_{\bot}$ has the highest weight,  since the solution must contain at least one item, we choose the best single item as the solution (lines~\ref{line:j=0_start}-\ref{line:j=0_end}).
	If $1 \leq J \leq m$, $(a_{1},\dots,a_{J})$ satisfies the cardinality constraint and is the best permutation (lines~\ref{line:j<=m_start}-\ref{line:j<=m_end}).
	If $J>m$, to meet the cardinality constraint, we need to \emph{select $m$ best items from $(a_{1},\dots,a_{J})$} which maximize the objective value, i.e.,
	\begin{align}
		\max_{(a'_{1},\dots,a'_{m}) \subseteq (a_{1},\dots,a_{J})} f((a'_{1},\dots,a'_{m}),u,w) . \label{eq:select_best_m}
	\end{align}  
	Eq.~\eqref{eq:select_best_m} is a challenging combinatorial optimization problem, and costs exponential computation complexity if one performs naive exhaustive search. 
	To solve Eq.~\eqref{eq:select_best_m}, we resort to dynamic programming. For any $i \in [J]$ and $k \in [\min\{m,J-i+1\}]$, let $S[i][k]$ and $F[i][k]$ denote the optimal solution and optimal value of the problem
	$\max_{(a'_{1},\dots,a'_{k}) \subseteq (a_{i},\dots,a_{J})} f((a'_{1},\dots,a'_{k}),u,w)$.
	Utilizing the structure of $f(A,u,w)$, we have 
	\begin{equation*}
		\left\{
		\begin{aligned}
			F[J][1] &= u(a_{J}) w(a_{J})+ (1-u(a_{J})) w(a_{\bot}),
			\\
			F[i][0] &= w(a_{\bot}), & 1\leq i \leq J,
			\\
			F[i][k] &= -\infty , & 1\leq i \leq J,\ J-i+1<k\leq m ,
			\\
			F[i][k] &= \max \{ F[i+1][k],\\  u(a_i) & w(a_i) +(1-u(a_i)) F[i+1][k-1] \} , & 1\leq i \leq J-1,\ 1 \leq k \leq \min\{m,J-i+1\} .
		\end{aligned}
		\right. 
	\end{equation*}
	The idea of this dynamic programming is as follows. Consider that we want to select $k$ best items from $(a_{i},\dots,a_{J})$. If we put $a_{i}$ into the solution, we need to further select $k-1$ best items from $(a_{i+1},\dots,a_{J})$. In this case, we have $F[i][k]=u(a_i)w(a_i)+(1-u(a_i)) F[i+1][k-1]$; Otherwise, if we do not put $a_{i}$ into the solution, we are just selecting $k$ best items from $(a_{i+1},\dots,a_{J})$, and then $F[i][k]=F[i+1][k]$.
	After computing this dynamic programming, we output $S[1][m]$ and $F[1][m]$ as the optimal solution and optimal value of Eq.~\eqref{eq:oracle_problem} (lines~\ref{line:j>m_start}-\ref{line:j>m_end}). 
	Below we formally state the correctness of $\algbestperm$.
	
	\begin{lemma}[Correctness of Oracle $\algbestperm$]\label{lemma:guarantee_oracle}
		Given any $u: A^{\univ} \rightarrow [0,1]$ and $w: A^{\univ} \rightarrow \R$, the permutation $S^{\bst}$ returned by algorithm $\algbestperm$ satisfies 
		$$
		f(S^{\bst},u,w) = \max_{A \in \cA} f(A,u,w) .
		$$
	\end{lemma}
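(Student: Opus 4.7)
The plan is to use Lemma~\ref{lemma:f_property} to reduce the permutation optimization in \eqref{eq:oracle_problem} to a combinatorial subset selection problem, and then verify branch by branch that $\algbestperm$ produces the correct answer. By property (i) of Lemma~\ref{lemma:f_property}, once the underlying subset of regular items is fixed, the $f$-maximizing order among permutations of that subset is the descending-$w$ order; thus every candidate permutation may be written as $(\desweight(X), a_{\bot})$ for some $X \subseteq \{a_1, \dots, a_N\}$ with $1 \le |X| \le m$. Property (ii) then asserts that any $w$-heavier-than-$w(a_{\bot})$ item should be included when cardinality allows, and any $w$-lighter-than-$w(a_{\bot})$ item should be excluded. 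This pins down the optimal subset almost entirely, up to cardinality tie-breaks that the three cases of the algorithm handle.

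I would then dispatch the first two cases quickly. When $J = 0$, no item has $w$ exceeding $w(a_{\bot})$, yet the constraint $|A| \ge 1$ forces us to include at least one; applying the second half of property (ii) with $X = \{a\}$ and $X'$ any additional subset shows that enlarging $X$ beyond a single element only decreases $f$, so the optimum is $\operatorname{argmax}_{a} \{u(a) w(a) + (1-u(a)) w(a_{\bot})\}$, exactly what lines~\ref{line:j=0_start}--\ref{line:j=0_end} compute. When $1 \le J \le m$, both halves of property (ii) apply and force the optimal subset to equal $\{a_1, \dots, a_J\}$ verbatim; combined with property (i) this gives $\desweight(\{a_1, \dots, a_J\}) = (a_1, \dots, a_J)$, matching lines~\ref{line:j<=m_start}--\ref{line:j<=m_end}.

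The substantive case is $J > m$, where property (ii) reduces the problem to choosing the best size-$m$ subsequence of $(a_1, \dots, a_J)$ (the descending-$w$ order is automatic because the parent list is already sorted). I would prove by backward induction on $i$ that
\[
F[i][k] \;=\; \max\bigl\{\, f\bigl((a'_1,\dots,a'_k,a_{\bot}),u,w\bigr) \;:\; (a'_1,\dots,a'_k) \text{ is a subsequence of } (a_i,\dots,a_J) \,\bigr\}.
\]
The base cases $F[J][1]$ and $F[i][0]$ unfold directly from the definition of $f$, and the $-\infty$ entries handle infeasible sizes. For the inductive step, split on whether $a_i$ is used: if not, the best value is $F[i+1][k]$; if yes, then $a_i$ has the largest $w$ among remaining items and is placed first by property (i), and the identity $f((a_i, a'_1, \dots, a'_{k-1}, a_{\bot}), u, w) = u(a_i) w(a_i) + (1 - u(a_i)) f((a'_1, \dots, a'_{k-1}, a_{\bot}), u, w)$ (a one-line calculation from the definition of $f$) yields $u(a_i) w(a_i) + (1 - u(a_i)) F[i+1][k-1]$. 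Taking the maximum gives the recursion, so $F[1][m]$ is the claimed optimum.

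The delicate part is justifying the reduction to subset selection rather than the DP itself: one must argue that in every case the optimum indeed takes the structural form dictated by property (ii), which requires a careful boundary discussion for $J = 0$ (where the $|A| \ge 1$ constraint conflicts with the ``exclude items lighter than $a_{\bot}$'' rule) and for $J > m$ (where the cardinality bound prevents including all heavy items, leaving a genuine combinatorial choice that the DP resolves). Once this structural reduction is in hand, the DP correctness is a routine induction using only the recursive identity for $f$ and the descending-$w$ invariant.
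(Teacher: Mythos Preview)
Your approach mirrors the paper's proof: reduce to subset selection via property~(i) of Lemma~\ref{lemma:f_property}, use property~(ii) to restrict attention to items with $w > w(a_{\bot})$, and then handle the three cardinality cases. Your treatment of the dynamic program in the $J > m$ case is in fact more explicit than the paper's, which simply states the recursion without the backward-induction verification you supply.

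There is one technical slip in the $J = 0$ branch. You invoke the second half of property~(ii) with $X = \{a\}$, but that statement carries the hypothesis $w(a) > w(a_{\bot})$ for all $a \in X$; when $J = 0$ every regular item satisfies $w(a) < w(a_{\bot})$, so property~(ii) is not applicable as stated. The paper handles this case directly via the interchange Lemma~\ref{lemma:exchange_by_weight}: for $(a, a', a_{\bot})$ with $w(a), w(a') < w(a_{\bot})$, swapping $a'$ and $a_{\bot}$ can only increase $f$ by descending-$w$ optimality, giving
\[
f((a, a', a_{\bot}), u, w) \;\le\; f((a, a_{\bot}, a'), u, w) \;=\; f((a, a_{\bot}), u, w),
\]
so a single item already dominates any larger list. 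This is a one-line patch, and your conceptual picture is right, but the citation of property~(ii) here is formally incorrect.
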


	$\algbestperm$ achieves   $O(Nm+N\log(N))$ computation complexity. This is dramatically better than the computation complexity of the naive exhaustive search, which is $O(|\cA|) = O(N^m)$.
	
	\vspace*{-0.3em}
	\section{Regret Minimization for Cascading RL}
	\vspace*{-0.3em}

	In this section, we study cascading RL with the regret minimization objective.
	Building upon oracle $\algbestperm$, we propose an efficient algorithm $\algcascadingeuler$, which is both computation and sample efficient and has a regret bound that nearly matches the lower bound.

	\begin{algorithm}[t]
		\caption{$\algcascadingeuler$} \label{alg:cascading_euler}
		\KwIn{$\delta$,
			$\delta':=\frac{\delta}{14}$, $L:=\log(\frac{KHSN}{\delta'})$. For any $k>0$ and $s \in \cS$, $\bar{q}^{k}(s,a_{\bot})=\underline{q}^{k}(s,a_{\bot}):=1$ and $\bar{V}^k_{H+1}(s)=\underline{V}^k_{H+1}(s):=0$. Initialize $n^{1,q}(s,a)=n^{1,p}(s,a):=0$ for any $(s,a) \in \cS \times \cA$.} 
		\For{$k=1,2,\dots,K$}
		{
			\For{$h=H,H-1,\dots,1$}
			{ 
				\For{$s \in \cS$}
				{
					\For{$a \in A^{\univ} \setminus \{a_{\bot}\}$}
					{
						$b^{k,q}(s,a) \leftarrow \min\{2 \sqrt{\frac{\hat{q}^k(s,a) (1-\hat{q}^k(s,a)) L}{n^{k,q}(s,a)}} + \frac{5 L}{n^{k,q}(s,a)} ,\ 1\}$\;
						$\bar{q}^k(s,a) \leftarrow \hat{q}^k(s,a)+b^{k,q}(s,a)$.\ \  $\underline{q}^k(s,a) \leftarrow \hat{q}^k(s,a)-b^{k,q}(s,a)$\; \label{line:q_bar}
					}
					\For{$a \in A^{\univ}$}
					{
						$b^{k,pV}(s,a) \!\leftarrow\! \min \{ 2 \sqrt{\! \frac{\var_{s'\sim \hat{p}^k} (\bar{V}^k_{h+1}(s')) L }{n^{k,p}(s,a)} } 
						\!+\! 2 \sqrt{\! \frac{ \ex_{s'\sim\hat{p}^k} [( \bar{V}^k_{h+1}(s')-\underline{V}^k_{h+1}(s') )^2 ] L }{n^{k,p}(s,a)} } \quad +  \frac{ 5H L }{n^{k,p}(s,a)} ,\ H \}$\; \label{line:bonus_pV}
						$\bar{w}^k(s,a) \leftarrow r(s,a)+\hat{p}^k(\cdot|s,a)^\top \bar{V}^k_{h+1}+ b^{k,pV}(s,a)$\; \label{line:w_bar}
					}
					$\bar{V}^k_h(s),\ \pi^k_h(s) \leftarrow \algbestperm(A^{\univ}, \bar{q}^k(s,\cdot), \bar{w}^k(s,\cdot))$\; \label{line:call_bestperm}
					$\bar{V}^k_h(s) \leftarrow \min\{\bar{V}^k_h(s),\ H\}$.   $A'\leftarrow\pi^k_h(s)$\;
					$\underline{V}^k_h(s) \leftarrow \max\{ \sum_{i=1}^{|A'|} \prod_{j=1}^{i-1} (1-\bar{q}^k(s,A'(j))) \underline{q}^k(s,A'(i)) (r(s,A'(i))+\hat{p}^k(\cdot|s,A'(i))^\top \underline{V}^k_{h+1} - b^{k,pV}(s,A'(i))),\ 0 \}$\;
				}
			}
			\For{$h=1,2,\dots,H$} 
			{
				Observe the current state $s^k_h$; \algcomment{Take policy $\pi^k$ and observe the trajectory} \label{line:play_episode_start}\\
				Take action $A^k_h=\pi^k_h(s^k_h)$.
				$i \leftarrow 1$\;
				\While{$i \leq m$}
				{
					Observe if $A^k_h(i)$ is clicked or not.
					Update $n^{k,q}(s^k_h,A^k_h(i))$ and $\hat{q}^{k}(s^k_h,A^k_h(i))$\; \label{line:update_n_k_q(s,a)}
					\If{$A^k_h(i)$ \textup{is clicked}}
					{
						Receive reward $r(s^k_h,A^k_h(i))$, and transition to a next state $s^k_{h+1} \sim p(\cdot|s^k_h,A^k_h(i))$\;
						$I_{k,h} \leftarrow i$. Update $n^{k,p}(s^k_h,A^k_h(i))$ and $\hat{p}^{k}(\cdot|s^k_h,A^k_h(i))$\;\label{line:update_n_k_p(s,a)}
						\textbf{break while}; \algcomment{Skip  subsequent items}
					}
					\Else
					{
						$i \leftarrow i+1$\;
					}
				} 
				\If{$i=m+1$}
				{
					Transition to a next state $s^k_{h+1} \sim p(\cdot|s^k_h,a_{\bot})$; \algcomment{No item was clicked}\\
					Update $n^{k,p}(s^k_h,a_{\bot})$ and $\hat{p}^{k}(\cdot|s^k_h,a_{\bot})$\; \label{line:play_episode_end}
				}
			}
		}
	\end{algorithm}
	
	\subsection{Algorithm $\algcascadingeuler$}
	
	

	Algorithm~\ref{alg:cascading_euler} describes $\algcascadingeuler$.
	In each episode $k$, we construct the exploration bonus for the attraction probability $b^{k,q}$ and the exploration bonus for the expected future reward $b^{k,pV}$. Then, we calculate the optimistic attraction probability $\bar{q}^k(s,a)$ and weight $\bar{w}^k(s,a)$, by adding exploration bonuses for $q(s,a)$ and $p(\cdot|s,a)^{\top} V$ \emph{individually} (lines~\ref{line:q_bar} and \ref{line:w_bar}). 
	Here the weight $\bar{w}^k(s,a)$ represents an optimistic estimate of the expected cumulative reward that can be obtained if item $a$ is clicked in state $s$. 
	Then, utilizing the monotonicity property of $f(A,u,w)$ with respect to the attraction probability $u$ and weight $w$, we invoke oracle $\algbestperm$ with $\bar{q}^k(s,\cdot)$ and $\bar{w}^k(s,\cdot)$ to efficiently compute the optimistic value function $\bar{V}^k_h(s)$ and its associated greedy policy $\pi^k_h(s)$ (line~\ref{line:call_bestperm}).

	After obtaining policy $\pi^k$, we play episode $k$ with $\pi^k$ (lines~\ref{line:play_episode_start}-\ref{line:play_episode_end}).
	At each step $h \in [H]$, we first observe the current state $s^k_h$, and select the item list $A^k_h=\pi^k_h(s)$. Then, the environment (user) examines the items in $A^k_h$ and clicks the first attractive item.
	Once an item $A^k_h(i)$ is clicked, the agent receives a reward and transitions to $s^k_{h+1} \sim p(\cdot|s^k_h,A^k_h(i))$, skipping the subsequent items. If no item in $A^k_h$ is clicked, the agent receives zero reward and transitions to $s^k_{h+1} \sim p(\cdot|s^k_h,a_{\bot})$. In line~\ref{line:update_n_k_q(s,a)}, we increment the number of attraction observations $n^{k,q}(s^k_h,A^k_h(i))$ by $1$, and update the empirical attraction probability $\hat{q}^{k}(s^k_h,A^k_h(i))$.
	In lines~\ref{line:update_n_k_p(s,a)} and \ref{line:play_episode_end}, we increment the number of transition observations $n^{k,p}(s^k_h,A^k_h(i))$ (or $n^{k,p}(s^k_h,a_{\bot})$) by $1$, and update the empirical transition distribution $\hat{p}^k(\cdot|s^k_h,A^k_h(i))$ (or $\hat{p}^k(\cdot|s^k_h,a_{\bot})$) by incrementing the number of transitions to $s^k_{h+1}$ by $1$ and keeping the number of transitions to other states unchanged.

	If one naively applies classical RL algorithms~\citep{azar2012sample,zanette2019tighter} by treating each $A \in \cA$ as an ordinary action, one will suffer a dependency on $|\cA|$ in the results. By contrast, $\algcascadingeuler$ only maintains the estimates of attraction and transition probabilities for each $(s,a)$,
	and employs oracle $\algbestperm$ to directly compute $\bar{V}^k_h(s)$, without enumerating over $A \in \cA$. Therefore, $\algcascadingeuler$ avoids the dependency on $|A|$ in computation and statistical complexities.

	\subsection{Theoretical Guarantee of Algorithm $\algcascadingeuler$}
	
	
	In regret analysis, we need to tackle several challenges: (i) how to guarantee the optimism of the output value by oracle $\algbestperm$ with the optimistic estimated inputs, and (ii) how to achieve the optimal regret bound when the problem degenerates to cascading bandits~\citep{kveton2015cascading,vial2022minimax}. To handle these challenges, we prove the monotonicity property of $f(A,u,w)$ with respect to the attraction probability $u$ and weight $w$, 
	leveraging the fact that the items in the optimal permutation are ranked in descending order of $w$. This monotonicity property ensures the optimism of the value function $\bar{V}^k_h(s)$. 
	In addition, we employ the variance-awareness of the exploration bonus $b^{k,q}(s,a)$, scaling as $\hat{q}^k(s,a) (1-\hat{q}^k(s,a))$, to save a  factor of $\sqrt{m}$ in the regret bound. This enables our result to match the optimal result for cascading bandits~\citep{vial2022minimax} in the degenerated case.
	
	\begin{theorem}[Regret Upper Bound] \label{thm:regret_ub}
		With probability at least $1-\delta$, the regret of algorithm $\algcascadingeuler$ is bounded by
		\vspace*{-0.4em}
		\begin{align*}
			\tilde{O} \sbr{ H \sqrt{HSNK} } .
		\end{align*}
	\end{theorem}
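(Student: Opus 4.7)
The plan is to follow the standard UCB-VI template while extracting the cascading structure through the monotonicity of $f$ and the variance-aware attraction bonus. First, I would define a good event on which the concentration of both the empirical attraction probabilities and the empirical transition kernels holds for all $k,h,(s,a)$ simultaneously. For $\hat{q}^k$ I would use an empirical Bernstein inequality, which yields a bonus of the shape $b^{k,q}$ with the variance proxy $\hat{q}^k(1-\hat{q}^k)$; for $\hat{p}^k(\cdot|s,a)^\top V$ I would combine an empirical Bernstein bound on $\hat{p}^k$ with the standard trick of pushing the value-function error through $\ex_{s'\sim\hat{p}^k}[(\bar{V}^k_{h+1}(s')-\underline{V}^k_{h+1}(s'))^2]$, reproducing precisely the form of $b^{k,pV}$ in line~\ref{line:bonus_pV}. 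A union bound over $k,h,(s,a)$ gives the good event with probability at least $1-\delta$.

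Second, I would establish optimism $\bar{V}^k_h(s)\ge V^*_h(s)$ by backward induction on $h$. The key lemma needed here, not proved in the excerpt, is a \emph{monotonicity} statement for $f(A,u,w)$: if $u\le u'\le 1$ coordinatewise and $w\le w'$ coordinatewise, then $\max_A f(A,u,w)\le \max_A f(A,u',w')$. This can be proved by noting that, by Lemma~\ref{lemma:f_property}(i), the optimum sorts items in descending order of $w$, so increasing $w$ only increases every term; monotonicity in $u$ follows from rewriting $f$ along the optimal permutation as $w(a_{\bot})+\sum_i \prod_{j<i}(1-u(a_j))\,u(a_i)(w(a_i)-w(a_{\bot}))$, where the coefficients of the nonnegative differences are monotone in $u$ on the items with $w(a_i)>w(a_{\bot})$ selected by $\algbestperm$. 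Combining this with the concentration event gives $\bar{V}^k_h(s)\ge \max_A f(A, q(s,\cdot), r(s,\cdot)+p(\cdot|s,\cdot)^\top V^*_{h+1})=V^*_h(s)$, and symmetrically $\underline{V}^k_h(s)\le V^{\pi^k}_h(s)$.

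Third, I would decompose the per-episode regret $V^*_1(s_1)-V^{\pi^k}_1(s_1)\le \bar V^k_1(s_1)-\underline V^k_1(s_1)$ along the executed trajectory using the cascading Bellman recursion. Expanding one step at state $s^k_h$ along $A^k_h$ produces two families of error terms: (a) for each browsed index $i\le I_{k,h}$, an attraction-probability error $|\hat q^k-q|$ at $(s^k_h,A^k_h(i))$, controlled by $b^{k,q}$; and (b) for the clicked index $I_{k,h}$, a transition-value error controlled by $b^{k,pV}$, plus a telescoping martingale difference from the sampled next state. Because only the clicked item is browsed-then-transitioned on, the attraction errors naturally accumulate over at most $I_{k,h}\le m$ browsed items, but the variance-aware shape $\sqrt{\hat q(1-\hat q)/n^{k,q}}$ means that summing over the browsed prefix yields a Cauchy--Schwarz bound in terms of the click probability, saving the $\sqrt{m}$ factor that a uniform $1/\sqrt{n^{k,q}}$ bound would incur, which is what matches the cascading-bandit lower bound in the degenerate case.

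Fourth, I would aggregate across $k\in[K]$ and $h\in[H]$. A standard pigeonhole bound on the visitation counts gives $\sum_{k,h}\sqrt{1/n^{k,q}(s^k_h,a)}=\tilde O(\sqrt{SNKH})$ and the same for $n^{k,p}$; combined with the variance-aware bonus this contributes the $\tilde O(H\sqrt{SNK})$ part. The dominant $H\sqrt{HSNK}$ term comes from the transition-value bonus: I would apply a law-of-total-variance argument along the on-policy trajectory to bound $\sum_{k,h}\var_{s'\sim p}(V^*_{h+1}(s'))\le HV^*_1(s_1)K\le H^2 K$, together with an induction that controls $\sum_{k,h}\ex_{s'\sim\hat p^k}[(\bar V-\underline V)^2]$ by lower-order terms in the regret itself, which can then be absorbed into the leading order via a fixed-point style argument. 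The main obstacle I expect is exactly this last step: carefully showing that the cascading recursion (where only one item's transition is sampled but several items' attraction probabilities are observed per step) does not blow up the variance-propagation argument, and that the optimism proof goes through with the weights $\bar w^k(s,\cdot)$ whose coordinates are correlated across $a$ through a shared $\bar V^k_{h+1}$ and can be larger than $H$ before truncation. Combining all pieces yields the $\tilde O(H\sqrt{HSNK})$ bound.
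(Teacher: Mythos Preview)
Your outline matches the paper's architecture (good event, optimism via monotonicity of $f$, per-episode unfolding, variance-aware pigeonhole with law of total variance), but there are two places where the sketch does not line up with what the paper actually does and where your argument as written would stall.

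First, the regret surrogate. You write $V^{*}_1-V^{\pi^k}_1\le \bar V^k_1-\underline V^k_1$, which needs $\underline V^k_h\le V^{\pi^k}_h$. The paper only proves $\underline V^k_h\le V^{*}_h$ (a weaker statement, since $V^{\pi^k}_h\le V^{*}_h$) and then bounds $\bar V^k_1-V^{\pi^k}_1$ directly via a cascading value-difference lemma that expresses it as an expectation under the \emph{true} $(q,p,\pi^k)$ dynamics. The claim $\underline V^k_h\le V^{\pi^k}_h$ is not free here: the bonus $b^{k,pV}$ is a Bernstein bound tailored to the fixed function $V^{*}_{h+1}$ (through the good event), not to the data-dependent $V^{\pi^k}_{h+1}$, and the correction term $\sqrt{\ex_{\hat p}[(\bar V-\underline V)^2]L/n}$ does not obviously absorb $(\hat p-p)^\top V^{\pi^k}_{h+1}$ without an extra $\sqrt{S}$. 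In fact, the remainder of your step three (``expanding one step at $s^k_h$ along $A^k_h$ \ldots\ plus a telescoping martingale difference from the sampled next state'') \emph{is} the $\bar V^k_1-V^{\pi^k}_1$ decomposition, not the $\bar V^k_1-\underline V^k_1$ one; you should start from $\bar V^k_1-V^{\pi^k}_1$ and drop the unsupported pessimism-versus-policy claim. In the paper, $\underline V^k$ enters only through $b^{k,pV}$ and through a separate crude bound on $\sum_{k,h}\ex[(\bar V-\underline V)^2]$ that feeds the second-order terms.

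Second, the monotonicity-in-$u$ sketch is not quite right. After the rewriting $f=w(a_\bot)+\sum_i c_i\,(w(a_i)-w(a_\bot))$ with $c_i=\prod_{j<i}(1-u(a_j))\,u(a_i)$, the coefficients $c_i$ are \emph{not} monotone in $u$ (increasing $u(a_j)$ for $j<i$ decreases $c_i$). What is monotone is the partial sum $C_i=\sum_{j\le i}c_j=1-\prod_{j\le i}(1-u(a_j))$; an Abel summation against the descending nonnegative differences $w(a_i)-w(a_{i+1})$ then yields the claim. The paper takes a different route and proves monotonicity of $f(A,\cdot,w)$ for a fixed $A$ in descending-$w$ order by a direct induction on the length of $A$, writing $f(A,\bar u,w)-f(A,u,w)$ as an explicit nonnegative telescoping sum. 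Either route works, but as stated your ``coefficients are monotone'' sentence is false.
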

	
	\vspace*{-0.4em}
	From Theorem~\ref{thm:regret_ub}, we see that the regret  of $\algcascadingeuler$ depends only on the number of items $N$, instead of the number of item lists 
	$|\cA| = O(N^m)$. This demonstrates the efficiency of our estimation scheme and exploration bonus design. When our problem degenerates to cascading bandits, i.e., $S=H=1$, our regret bound matches the optimal result for cascading bandits~\citep{vial2022minimax}.

	\textbf{Lower bound and optimality.}
	Recall from \citep{jaksch2010near,osband2016lower} that the lower bound for classic RL is $\Omega(H\sqrt{SNK})$. Since cascading RL reduces to classic RL when $q(s,a)=1$ for any $(s,a) \in \cS \times A^{\univ}$ (i.e., the user always clicks the first item), the lower bound $\Omega(H\sqrt{SNK})$ also holds for cascading RL. 
	
	Our regret bound matches the lower bound up to a factor of $\sqrt{H}$ when ignoring logarithmic factors. Regarding this gap, one possibility is that
	it comes from our upper bound analysis. Specifically, we add exploration bonuses for $q$ and $p^\top V$ individually, which leads to a term $(q+b^{k,q})(\hat{p}^\top \bar{V}+b^{k,pV}-p^\top V)$ in regret decomposition.
	Here the term $b^{k,q}(\hat{p}^\top \bar{V}+b^{k,pV}-p^\top V)=\tilde{O}(H b^{k,q})$ leads to a $\tilde{O}(H\sqrt{HK})$ regret, which causes the extra $\sqrt{H}$ gap.
	A straightforward idea to avoid this is to regard $q$ and $p$ as an integrated transition distribution of $(s,A)$, and construct an overall exploration bonus for $(s,A)$. 
	However, this strategy forces us to maintain the estimates for all $A \in \cA$, and will incur a dependency on $|\cA|$ in computation and statistical complexities. 
	Another possibility behind the gap $\sqrt{H}$ is that it is needed in a tight lower bound for any polynomial-time algorithm for cascading RL. 
	Therefore, how to close the gap $\sqrt{H}$ while maintaining the computational efficiency
	remains open for future work.

	\section{Best Policy Identification for Cascading RL}
	\vspace*{-0.5em}
	
	Now we turn to cascading RL with best policy identification. We propose an efficient algorithm $\algcascadingbpi$ to find an $\varepsilon$-optimal policy. Here we mainly introduce the idea of $\algcascadingbpi$, and defer the  detailed pseudo-code and description to Appendix~\ref{apx:algorithm_bpi} due to space limit.
	
	In $\algcascadingbpi$, we optimistically estimate $q(s,a)$ and $p(\cdot|s,a)^\top V$, and add exploration bonuses for them individually. 
	Then, we call the oracle $\algbestperm$ to compute the optimistic value function and hypothesized optimal policy.
	Furthermore, we construct an estimation error to bound the deviation between the optimistic value function and true value function. If this estimation error shrinks within  $\varepsilon$, we simply output the hypothesized optimal policy; Otherwise, we play an episode with this policy.
	%
	%
	In the following, we provide the sample complexity of $\algcascadingbpi$.

	\begin{theorem} \label{thm:bpi_ub}
		With probability at least $1-\delta$, algorithm $\algcascadingbpi$ returns an $\varepsilon$-optimal policy, and the number of episodes used is bounded by
		$
		\tilde{O} ( \frac{H^3SN}{\varepsilon^2} \log(\frac{1}{\delta})  + \frac{H^2 \sqrt{H} SN}{\varepsilon \sqrt{\varepsilon}}  (\log\sbr{\frac{1}{\delta}} + S) ) 
		$,
		where $\tilde{O}(\cdot)$ hides  logarithmic factors with respect to $N,H,S$ and $\varepsilon$.
	\end{theorem}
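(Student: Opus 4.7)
The plan is to follow the standard upper/lower-bound framework for best policy identification, adapted to cascading RL through oracle $\algbestperm$. The argument decomposes into three pieces: (i) a high-probability concentration event $\cE$ on which the empirical estimates $\hat{q}^k$ and $\hat{p}^k$ lie within their respective Bernstein bonuses; (ii) a sandwich property $\underline{V}^k_h(s) \leq V^{\pi^k}_h(s) \leq V^{*}_h(s) \leq \bar{V}^k_h(s)$ provided by feeding the optimistic/pessimistic quantities into $\algbestperm$; and (iii) a counting argument bounding the number of episodes before the estimation gap $G^k_1(s_1) := \bar{V}^k_1(s_1) - \underline{V}^k_1(s_1)$ drops below $\varepsilon$.

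Pieces (i) and (ii) reuse the machinery behind Theorem~\ref{thm:regret_ub}. I would define $\cE$ via Bernstein for $\hat q^k$, empirical-Bernstein with a covering union bound for $\hat p^k{}^\top V$, so that on $\cE$ one has $|\hat q^k-q|\leq b^{k,q}$ and $|(\hat p^k-p)(\cdot|s,a)^\top V|\leq b^{k,pV}$ with probability at least $1-\delta$. The sandwich is then proved by backward induction in $h$; the nontrivial step is showing that $\algbestperm$ applied to $(\bar q^k,\bar w^k)$ returns a value no smaller than $V^{*}_h(s)$. This follows from a \emph{monotonicity} property of $f$ implied by Lemma~\ref{lemma:f_property}: the optimal list contains exactly those items with $w(a)>w(a_{\bot})$ in descending $w$-order, and $\max_A f(A,u,w)$ is componentwise nondecreasing in $w$ and in $u(a)$ restricted to such items. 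A symmetric argument gives $\underline V^k_h(s)\leq V^{\pi^k}_h(s)$, because $\underline V^k_h$ is built on the \emph{same} permutation $\pi^k_h(s)$ selected by $\algbestperm$ and uses $(1-\bar q^k)$ (pessimistic weighting). Hence if the algorithm stops with $G^k_1(s_1)\leq\varepsilon$, the output $\pi^k$ is $\varepsilon$-optimal.

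The main technical work is piece (iii). I would derive a cascading Bellman-type recursion for the gap $G^k_h(s)$ by expanding $\bar V^k_h-\underline V^k_h$ term-by-term and telescoping the product $\prod_{j<i}(1-\bar q^k)-\prod_{j<i}(1-\underline q^k)$; with $A=\pi^k_h(s)$ this yields
\begin{align*}
G^k_h(s)\leq \sum_{i=1}^{|A|}\prod_{j<i}\bigl(1-\underline q^k(s,A(j))\bigr)\Bigl[2H\cdot b^{k,q}(s,A(i))+\bar q^k(s,A(i))\bigl(2b^{k,pV}(s,A(i))+\hat p^k(\cdot|s,A(i))^\top G^k_{h+1}\bigr)\Bigr].
\end{align*}
Unrolling along trajectories of $\pi^k$ and taking expectations gives $G^k_1(s_1)\lesssim \ex_{\pi^k}\bigl[\sum_{h=1}^{H}\bigl(H\,b^{k,q}+b^{k,pV}\bigr)(s_h,A_h(I_h))\bigr]$. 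Each non-terminating episode satisfies $\varepsilon\leq G^k_1(s_1)$, so summing gives $K\varepsilon\leq\sum_k G^k_1(s_1)$, which I would control by pigeonhole on the visit counts $n^{k,q},n^{k,p}$ combined with a law-of-total-variance step applied to $\var_{s'\sim\hat p^k}(\bar V^k_{h+1}(s'))$ inside $b^{k,pV}$. The LTV step saves one factor of $H$ and produces the dominant $\tilde O(H^3SN/\varepsilon^2)$ term, while the lower-order term $\tilde O(H^{2.5}SN(\log(1/\delta)+S)/\varepsilon^{1.5})$ arises from the $H/n^{k,p}$ piece of $b^{k,pV}$ and from the coupling $\ex_{\hat p^k}[(\bar V^k_{h+1}-\underline V^k_{h+1})^2]$, whose accumulation requires a separate inductive fixed-point argument in $\varepsilon$.

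The main obstacle is cascading-specific accounting: each episode contributes $H$ samples to $n^{k,p}$ but up to $Hm$ samples to $n^{k,q}$, so a naive Cauchy--Schwarz on the $H\cdot b^{k,q}$ contributions would pay a spurious $\sqrt m$ factor that would wreck the $N$-only dependence of the bound. I would circumvent this by exploiting the variance-awareness $b^{k,q}\propto \sqrt{\hat q(1-\hat q)/n^{k,q}}$ together with the fact that, in the recursion above, the coefficient $\prod_{j<i}(1-\underline q^k)\,\bar q^k(s,A(i))$ equals (up to lower-order bonus terms) the probability that item $A(i)$ is actually browsed and clicked. Summing $\sqrt{\hat q(1-\hat q)}$ weighted by this browse-and-click probability telescopes, mirroring the cascading-bandit saving used for Theorem~\ref{thm:regret_ub} and restoring the $\sqrt{SN}$ factor in the final bound.
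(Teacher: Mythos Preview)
Your plan tracks the paper's argument closely (concentration, optimism via $\algbestperm$ and the monotonicity of $f$, a recursive gap bound, summing $\varepsilon\le G^k_1$ over episodes, and the variance-aware $b^{k,q}$ combined with the browse-and-click telescoping to kill the spurious $\sqrt m$). Two points need correcting. First, in $\algcascadingbpi$ the stopping quantity $G^k_h(s)$ is \emph{not} $\bar V^k_h(s)-\underline V^k_h(s)$; it is an explicitly defined recursion (see Algorithm~\ref{alg:cascading_bpi}) for which the paper proves $\bar V^k_h-\min\{\underline V^k_h,V^{\pi^k}_h\}\le G^k_h$ (Lemma~\ref{lemma:estimation_error}). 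Since the theorem concerns this specific algorithm, you must work with its actual $G$.

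Second, and this is the real gap, the claim $\underline V^k_h\le V^{\pi^k}_h$ is \emph{not} obtained by a symmetric argument. Optimism uses that $b^{k,pV}$ dominates $(\hat p^k-p)^\top V^*_{h+1}$, which holds by Bernstein applied to the \emph{fixed} function $V^*_{h+1}$. For pessimism against $V^{\pi^k}$ you would need $b^{k,pV}$ to dominate $(\hat p^k-p)^\top V^{\pi^k}_{h+1}$, but $V^{\pi^k}_{h+1}$ is data-dependent and no such Bernstein bound is available. The paper sidesteps this: it bounds $\bar V^k_h-V^{\pi^k}_h$ directly by writing $(\hat p^k-p)^\top V^{\pi^k}_{h+1}=(\hat p^k-p)^\top V^*_{h+1}-(\hat p^k-p)^\top(V^*_{h+1}-V^{\pi^k}_{h+1})$ and controlling the second piece through the KL concentration $\kl(\hat p^k\|p)\le L(\delta',k)/n^{k,p}$ together with Lemma~\ref{lemma:kl_based_techniques}, which yields the extra $\frac{1}{H}\,\hat p^k(\cdot|s,a)^\top G^k_{h+1}$ contribution. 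This is exactly why the BPI bonus carries the term $\frac{2}{H}\hat p^k(\cdot|s,a)^\top(\bar V^k_{h+1}-\underline V^k_{h+1})$ (absent from the regret bonus) and why the unrolling of $G$ acquires a $(1+O(1/H))^H$ multiplicative factor. Without this KL-based cross-term step the correctness half of your argument does not close; once you add it, your plan coincides with the paper's.
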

	
	Theorem~\ref{thm:bpi_ub} reveals that the sample complexity of  $\algcascadingbpi$ is polynomial in problem parameters $N$, $H$, $S$ and $\varepsilon$, and independent of $|\cA|$. 
	Regarding the optimality, since cascading RL reduces to classic RL when $q(s,a)=1$ for all $(s,a) \in \cS \times A^{\univ}$, existing lower bound  for  classic best policy identification $\Omega(\frac{H^2SN}{\varepsilon^2} \log(\frac{1}{\delta}))$~\citep{dann2015sample} also applies here. This corroborates that $\algcascadingbpi$ is near-optimal up to a factor of $H$ when $\varepsilon < \frac{H}{S^2}$.

	
	\vspace*{-0.3em}
	\section{Experiments}
	\vspace*{-0.3em}
	
	\begin{figure}[t]
		\centering   
		\includegraphics[height=0.17\columnwidth]{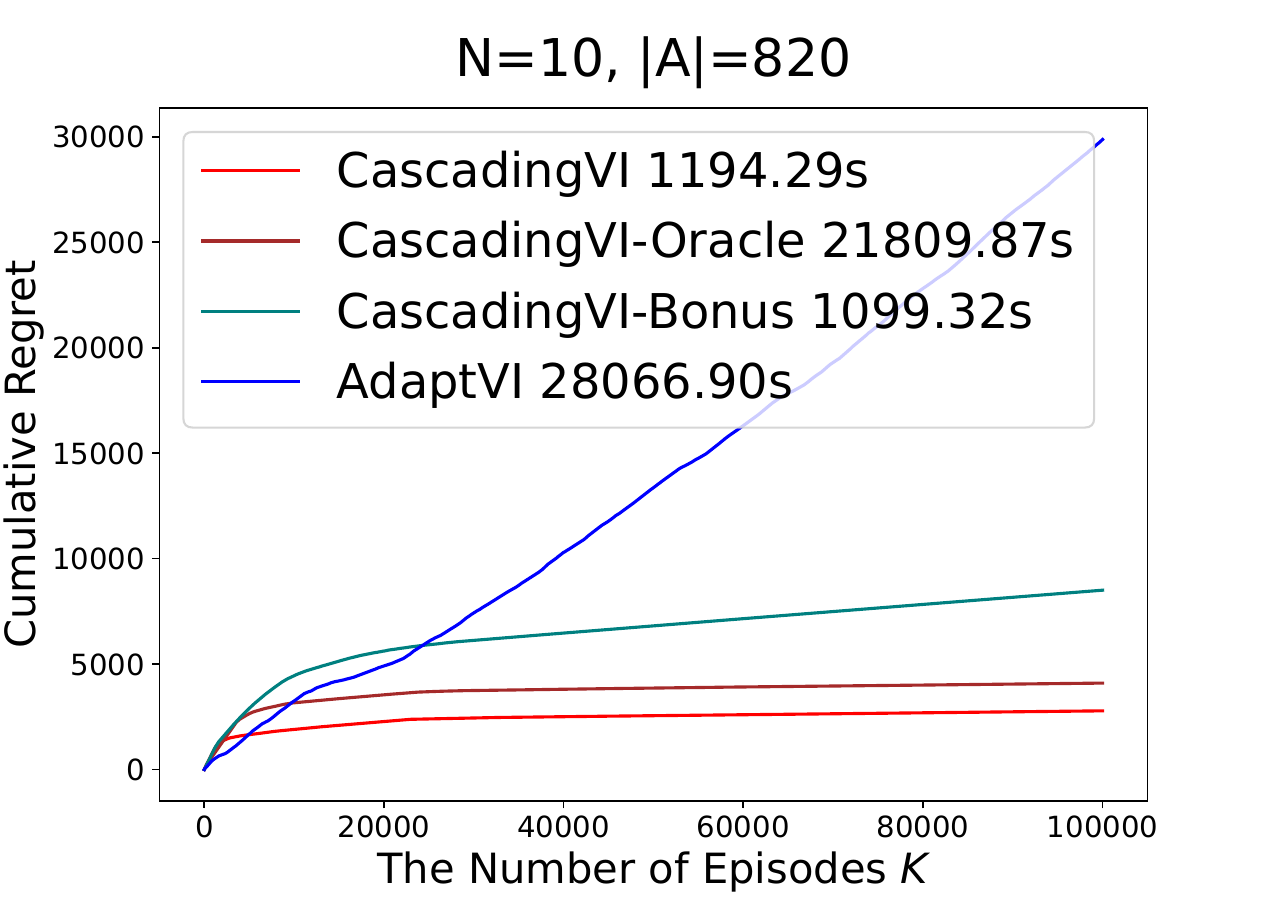}
		\includegraphics[height=0.17\columnwidth]{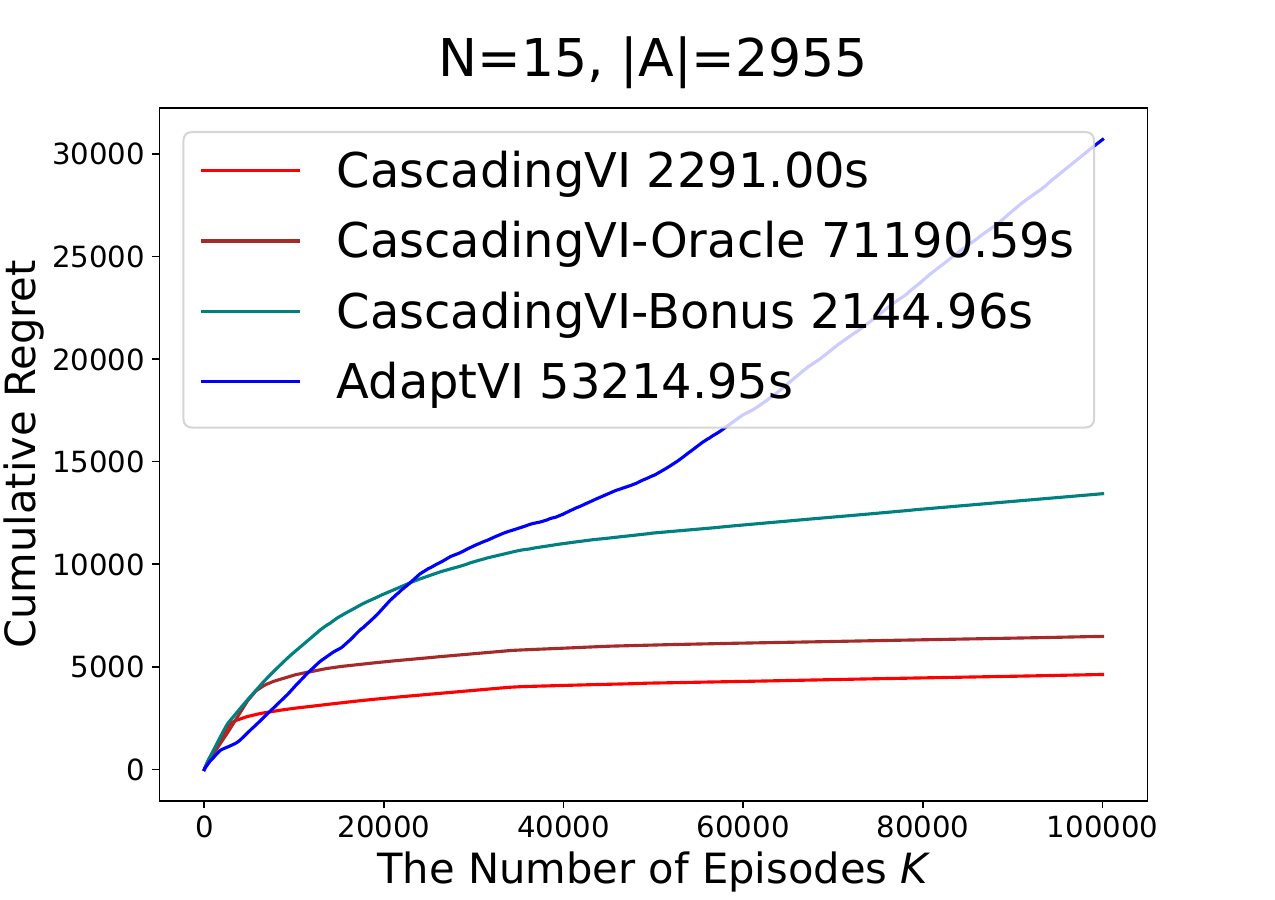}
		\includegraphics[height=0.17\columnwidth]{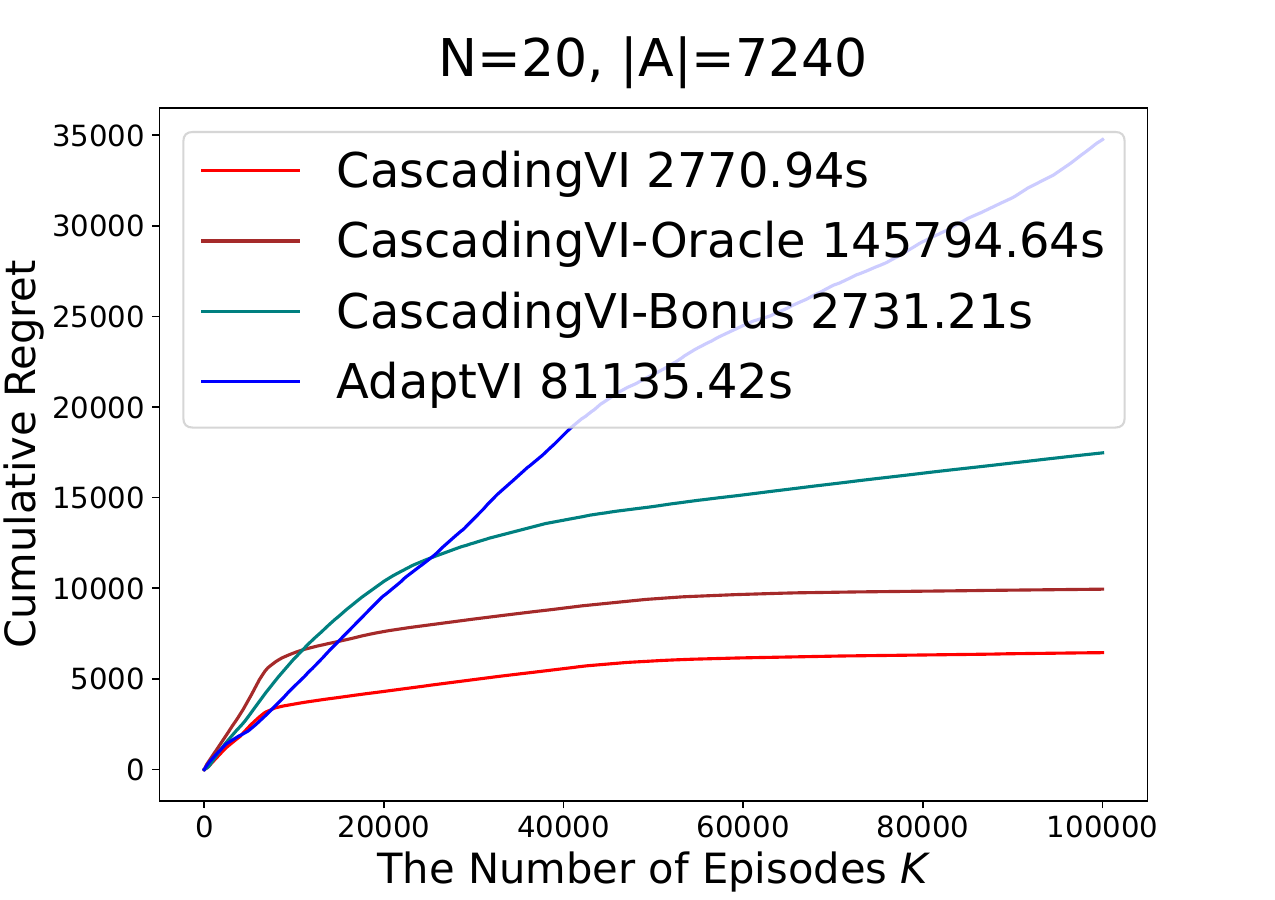} 
		\includegraphics[height=0.17\columnwidth]{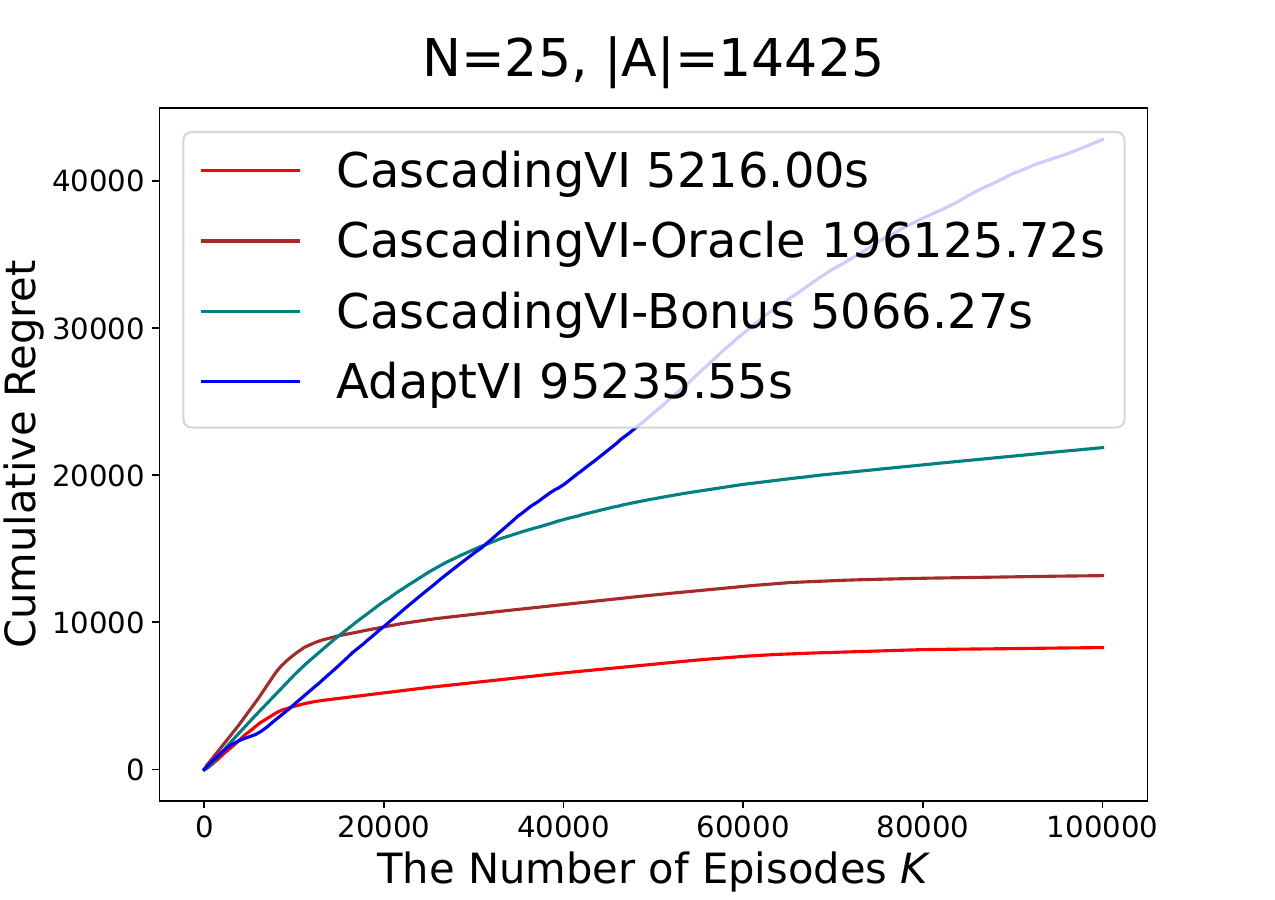}
		\vspace*{-0.5em}
		\caption{Experiments for cascading RL on real-world data.
		} \label{fig:experiment_real_data}
	\end{figure}

	In this section, we present  experimental results
	on a real-world dataset MovieLens~\citep{movielens2015}, which contains millions of ratings for movies by users. 
	We set $\delta=0.005$, $K=100000$, $H=3$, $m=3$, $S=20$, $N \in \{10,15,20,25\}$ and $|\mathcal{A}| \in \{820,2955,7240,14425\}$. We defer the detailed setup and more results to Appendix~\ref{apx:more_experiments}.
	
	We compare our algorithm $\algcascadingeuler$ with three baselines, i.e.,  $\mathtt{CascadingVI\mbox{-}Oracle}$, $\mathtt{CascadingVI\mbox{-}Bonus}$ and $\mathtt{AdaptVI}$. Specifically, $\mathtt{CascadingVI\mbox{-}Oracle}$ replaces the efficient oracle $\mathtt{BestPerm}$ by a naive exhaustive search. $\mathtt{CascadingVI\mbox{-}Bonus}$ replaces the variance-aware exploration bonus $b^{k,q}$ by a variance-unaware bonus. 
	$\mathtt{AdaptVI}$ adapts the classic RL algorithm~\citep{zanette2019tighter} to the combinatorial action space, which maintains the estimates for all $(s,A)$.
	As shown in Figure~\ref{fig:experiment_real_data}, our algorithm $\algcascadingeuler$ achieves the lowest regret and a fast running time. $\mathtt{CascadingVI\mbox{-}Oracle}$ has a comparative regret performance to $\algcascadingeuler$, but suffers a much higher running time, which demonstrates the power of our oracle $\algbestperm$ in computation.
	$\mathtt{CascadingVI\mbox{-}Bonus}$ attains a similar running time as $\algcascadingeuler$, but has a worse regret.
	This corroborates the effectiveness of our variance-aware exploration bonus in enhancing sample efficiency.
	$\mathtt{AdaptVI}$ suffers a very high regret and running time, since it learns the information of all permutations independently and its statistical and computational complexities depend on $|\mathcal{A}|$.

	\vspace*{-0.3em}
	\section{Conclusion}
	\vspace*{-0.3em}
	
	In this work, we formulate a cascading RL framework, which generalizes the cascading bandit model to characterize the impacts of user states and state transition in applications such as recommendation systems. We design a novel oracle $\algbestperm$ to efficiently identify the optimal item list under combinatorial action spaces. Building upon this oracle, we develop efficient algorithms $\algcascadingeuler$ and $\algcascadingbpi$ with near-optimal regret and sample complexity guarantees. 
	
	
	\section*{Acknowledgement}
	The work of Yihan Du and R. Srikant is supported in part by AFOSR Grant FA9550-24-1-0002, ONR Grant N00014-19-1-2566, and NSF Grants CNS 23-12714, CNS 21-06801, CCF 19-34986, and CCF 22-07547.
	
	\bibliographystyle{iclr2024_conference}
	\bibliography{iclr2024_cascading_RL_CameraReady_ref}

\begin{thebibliography}{26}
\providecommand{\natexlab}[1]{#1}
\providecommand{\url}[1]{\texttt{#1}}
\expandafter\ifx\csname urlstyle\endcsname\relax
  \providecommand{\doi}[1]{doi: #1}\else
  \providecommand{\doi}{doi: \begingroup \urlstyle{rm}\Url}\fi

\bibitem[Azar et~al.(2012)Azar, Munos, and Kappen]{azar2012sample}
Mohammad~Gheshlaghi Azar, R{\'e}mi Munos, and Hilbert Kappen.
\newblock On the sample complexity of reinforcement learning with a generative
  model.
\newblock In \emph{International Conference on Machine Learning}, 2012.

\bibitem[Azar et~al.(2017)Azar, Osband, and Munos]{azar2017minimax}
Mohammad~Gheshlaghi Azar, Ian Osband, and R{\'e}mi Munos.
\newblock Minimax regret bounds for reinforcement learning.
\newblock In \emph{International Conference on Machine Learning}, pp.\
  263--272. PMLR, 2017.

\bibitem[Bertsekas \& Castanon(1999)Bertsekas and
  Castanon]{bertsekas1999rollout}
Dimitri~P Bertsekas and David~A Castanon.
\newblock Rollout algorithms for stochastic scheduling problems.
\newblock \emph{Journal of Heuristics}, 5:\penalty0 89--108, 1999.

\bibitem[Cheung et~al.(2019)Cheung, Tan, and Zhong]{cheung2019thompson}
Wang~Chi Cheung, Vincent Tan, and Zixin Zhong.
\newblock A thompson sampling algorithm for cascading bandits.
\newblock In \emph{International Conference on Artificial Intelligence and
  Statistics}, pp.\  438--447. PMLR, 2019.

\bibitem[Combes et~al.(2015)Combes, Magureanu, Proutiere, and
  Laroche]{combes2015learning}
Richard Combes, Stefan Magureanu, Alexandre Proutiere, and Cyrille Laroche.
\newblock Learning to rank: Regret lower bounds and efficient algorithms.
\newblock In \emph{Proceedings of the ACM SIGMETRICS International Conference
  on Measurement and Modeling of Computer Systems}, pp.\  231--244, 2015.

\bibitem[Dann \& Brunskill(2015)Dann and Brunskill]{dann2015sample}
Christoph Dann and Emma Brunskill.
\newblock Sample complexity of episodic fixed-horizon reinforcement learning.
\newblock In \emph{Advances in Neural Information Processing Systems}, pp.\
  2818--2826, 2015.

\bibitem[Dann et~al.(2017)Dann, Lattimore, and Brunskill]{dann2017unifying}
Christoph Dann, Tor Lattimore, and Emma Brunskill.
\newblock Unifying {PAC} and regret: uniform {PAC} bounds for episodic
  reinforcement learning.
\newblock In \emph{Advances in Neural Information Processing Systems}, pp.\
  5717--5727, 2017.

\bibitem[Harper \& Konstan(2015)Harper and Konstan]{movielens2015}
F~Maxwell Harper and Joseph~A Konstan.
\newblock The movielens datasets: History and context.
\newblock \emph{ACM Transactions on Interactive Intelligent Systems},
  5\penalty0 (4):\penalty0 1--19, 2015.

\bibitem[Jaksch et~al.(2010)Jaksch, Ortner, and Auer]{jaksch2010near}
Thomas Jaksch, Ronald Ortner, and Peter Auer.
\newblock Near-optimal regret bounds for reinforcement learning.
\newblock \emph{Journal of Machine Learning Research}, 11\penalty0 (4), 2010.

\bibitem[Jin et~al.(2018)Jin, Allen-Zhu, Bubeck, and Jordan]{jin2018q}
Chi Jin, Zeyuan Allen-Zhu, Sebastien Bubeck, and Michael~I Jordan.
\newblock Is {Q}-learning provably efficient?
\newblock In \emph{Advances in Neural Information Processing Systems}, pp.\
  4868--4878, 2018.

\bibitem[Kaufmann et~al.(2021)Kaufmann, M{\'e}nard, Domingues, Jonsson,
  Leurent, and Valko]{kaufmann2021adaptive}
Emilie Kaufmann, Pierre M{\'e}nard, Omar~Darwiche Domingues, Anders Jonsson,
  Edouard Leurent, and Michal Valko.
\newblock Adaptive reward-free exploration.
\newblock In \emph{International Conference on Algorithmic Learning Theory},
  pp.\  865--891. PMLR, 2021.

\bibitem[Kveton et~al.(2015)Kveton, Szepesvari, Wen, and
  Ashkan]{kveton2015cascading}
Branislav Kveton, Csaba Szepesvari, Zheng Wen, and Azin Ashkan.
\newblock Cascading bandits: Learning to rank in the cascade model.
\newblock In \emph{International Conference on Machine Learning}, pp.\
  767--776. PMLR, 2015.

\bibitem[Li \& Zhang(2018)Li and Zhang]{li2018online}
Shuai Li and Shengyu Zhang.
\newblock Online clustering of contextual cascading bandits.
\newblock In \emph{Proceedings of the AAAI Conference on Artificial
  Intelligence}, volume~32, 2018.

\bibitem[Li et~al.(2016)Li, Wang, Zhang, and Chen]{li2016contextual}
Shuai Li, Baoxiang Wang, Shengyu Zhang, and Wei Chen.
\newblock Contextual combinatorial cascading bandits.
\newblock In \emph{International Conference on Machine Learning}, pp.\
  1245--1253. PMLR, 2016.

\bibitem[Mary et~al.(2015)Mary, Gaudel, and Preux]{recommender_systems2015}
J{\'e}r{\'e}mie Mary, Romaric Gaudel, and Philippe Preux.
\newblock Bandits and recommender systems.
\newblock In \emph{Machine Learning, Optimization, and Big Data: First
  International Workshop, MOD 2015, Taormina, Sicily, Italy, July 21-23, 2015,
  Revised Selected Papers 1}, pp.\  325--336. Springer, 2015.

\bibitem[M{\'e}nard et~al.(2021)M{\'e}nard, Domingues, Jonsson, Kaufmann,
  Leurent, and Valko]{menard2021fast}
Pierre M{\'e}nard, Omar~Darwiche Domingues, Anders Jonsson, Emilie Kaufmann,
  Edouard Leurent, and Michal Valko.
\newblock Fast active learning for pure exploration in reinforcement learning.
\newblock In \emph{International Conference on Machine Learning}, pp.\
  7599--7608. PMLR, 2021.

\bibitem[Osband \& Van~Roy(2016)Osband and Van~Roy]{osband2016lower}
Ian Osband and Benjamin Van~Roy.
\newblock On lower bounds for regret in reinforcement learning.
\newblock \emph{arXiv preprint arXiv:1608.02732}, 2016.

\bibitem[Ross(2014)]{ross2014introduction}
Sheldon~M Ross.
\newblock \emph{Introduction to stochastic dynamic programming}.
\newblock Academic press, 2014.

\bibitem[Sutton \& Barto(2018)Sutton and Barto]{sutton2018reinforcement}
Richard~S Sutton and Andrew~G Barto.
\newblock \emph{Reinforcement learning: An introduction}.
\newblock MIT press, 2018.

\bibitem[Tang et~al.(2013)Tang, Rosales, Singh, and
  Agarwal]{online_advertising2013}
Liang Tang, Romer Rosales, Ajit Singh, and Deepak Agarwal.
\newblock Automatic ad format selection via contextual bandits.
\newblock In \emph{Proceedings of the ACM International Conference on
  Information \& Knowledge Management}, pp.\  1587--1594, 2013.

\bibitem[Thompson(1933)]{thompson1933}
William~R Thompson.
\newblock On the likelihood that one unknown probability exceeds another in
  view of the evidence of two samples.
\newblock \emph{Biometrika}, 25\penalty0 (3/4):\penalty0 285--294, 1933.

\bibitem[Vial et~al.(2022)Vial, Sanghavi, Shakkottai, and
  Srikant]{vial2022minimax}
Daniel Vial, Sujay Sanghavi, Sanjay Shakkottai, and R~Srikant.
\newblock Minimax regret for cascading bandits.
\newblock In \emph{Advances in Neural Information Processing Systems},
  volume~35, pp.\  29126--29138, 2022.

\bibitem[Weissman et~al.(2003)Weissman, Ordentlich, Seroussi, Verdu, and
  Weinberger]{weissman2003inequalities}
Tsachy Weissman, Erik Ordentlich, Gadiel Seroussi, Sergio Verdu, and Marcelo~J
  Weinberger.
\newblock Inequalities for the $l1$ deviation of the empirical distribution.
\newblock \emph{Hewlett-Packard Labs, Tech. Rep}, 2003.

\bibitem[Zanette \& Brunskill(2019)Zanette and Brunskill]{zanette2019tighter}
Andrea Zanette and Emma Brunskill.
\newblock Tighter problem-dependent regret bounds in reinforcement learning
  without domain knowledge using value function bounds.
\newblock In \emph{International Conference on Machine Learning}, pp.\
  7304--7312. PMLR, 2019.

\bibitem[Zhong et~al.(2021)Zhong, Chueng, and Tan]{zhong2021thompson}
Zixin Zhong, Wang~Chi Chueng, and Vincent~YF Tan.
\newblock Thompson sampling algorithms for cascading bandits.
\newblock \emph{Journal of Machine Learning Research}, 22\penalty0
  (1):\penalty0 9915--9980, 2021.

\bibitem[Zong et~al.(2016)Zong, Ni, Sung, Ke, Wen, and
  Kveton]{zong16linear_generalization}
Shi Zong, Hao Ni, Kenny Sung, Nan~Rosemary Ke, Zheng Wen, and Branislav Kveton.
\newblock Cascading bandits for large-scale recommendation problems.
\newblock In \emph{Proceedings of the Conference on Uncertainty in Artificial
  Intelligence}, pp.\  835–844. AUAI Press, 2016.

\end{thebibliography}
	
	\clearpage
\appendix
\section*{Appendix}

\section{More Experiments} \label{apx:more_experiments}

In this section, we describe the setup for the experiments on real-world data (Figure~\ref{fig:experiment_real_data}), and present more experimental results on synthetic data (Figure~\ref{fig:experiment_synthetic}).

\subsection{Experimental Setup with Real-world Data}

In our experiments in Figure~\ref{fig:experiment_real_data}, we consider the real-world dataset MovieLens~\citep{movielens2015}, which is also used in prior cascading bandit works~\citep{zong16linear_generalization,vial2022minimax}. This dataset contains $25$ million ratings on a $5$-star scale for $62000$ movies by $162000$ users. 
We regard each user as a state, and each movie as an item. For each user-movie pair, we scale the rating to $[0,1]$ and regard it as the attraction probability.
The reward of each user-movie pair is set to $1$. For each user-movie pair which has a rating no lower than $4.5$ stars, we set the transition probability to this state (user) itself as $0.9$, and that to other states (users) as $\frac{0.9}{S-1}$. For each user-movie pair which has a rating lower than $4.5$ stars, we set the transition probability to all states (users) as $\frac{1}{S}$. 
We use a subset of data from MovieLens, and set $\delta=0.005$, $K=100000$, $H=3$, $m=3$, $S=20$, $N \in \{10,15,20,25\}$ and $|\mathcal{A}|=\sum_{\tilde{m}=1}^{m} \binom{N}{\tilde{m}} \tilde{m}! \in \{820,2955,7240,14425\}$.

\subsection{Experiments on Synthetic Data}

\begin{figure}[t]
	\centering      
	\hspace*{-1em} \includegraphics[width=1\columnwidth]{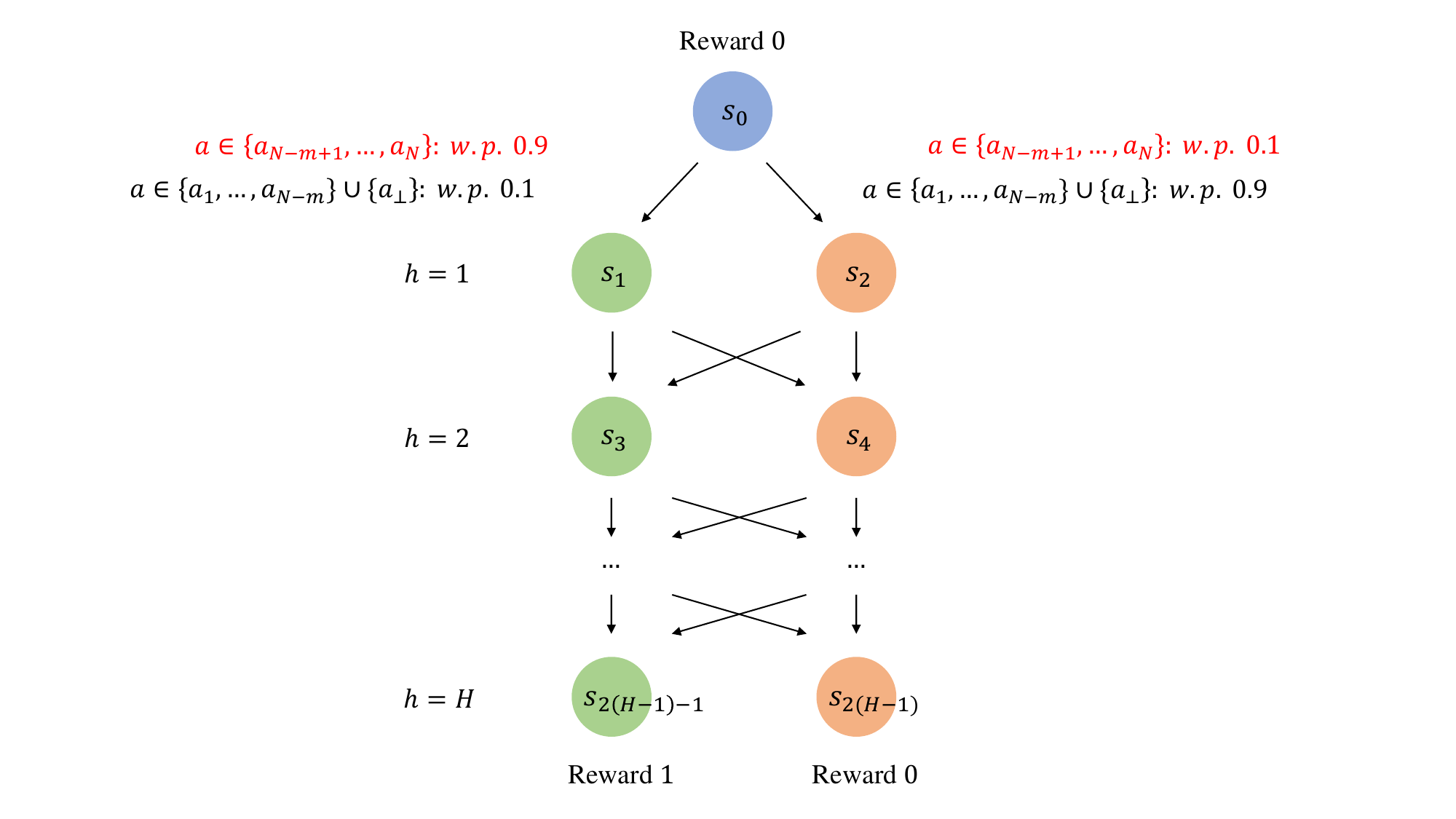} 
	\caption{The constructed cascading MDP in synthetic data.} \label{fig:experiment_instance}
\end{figure} 

\begin{figure}[t]
	\centering   
	\subfigure[Regret minimization, $N=4$]{ \label{fig:rm_N4} 
		\includegraphics[height=0.2\columnwidth]{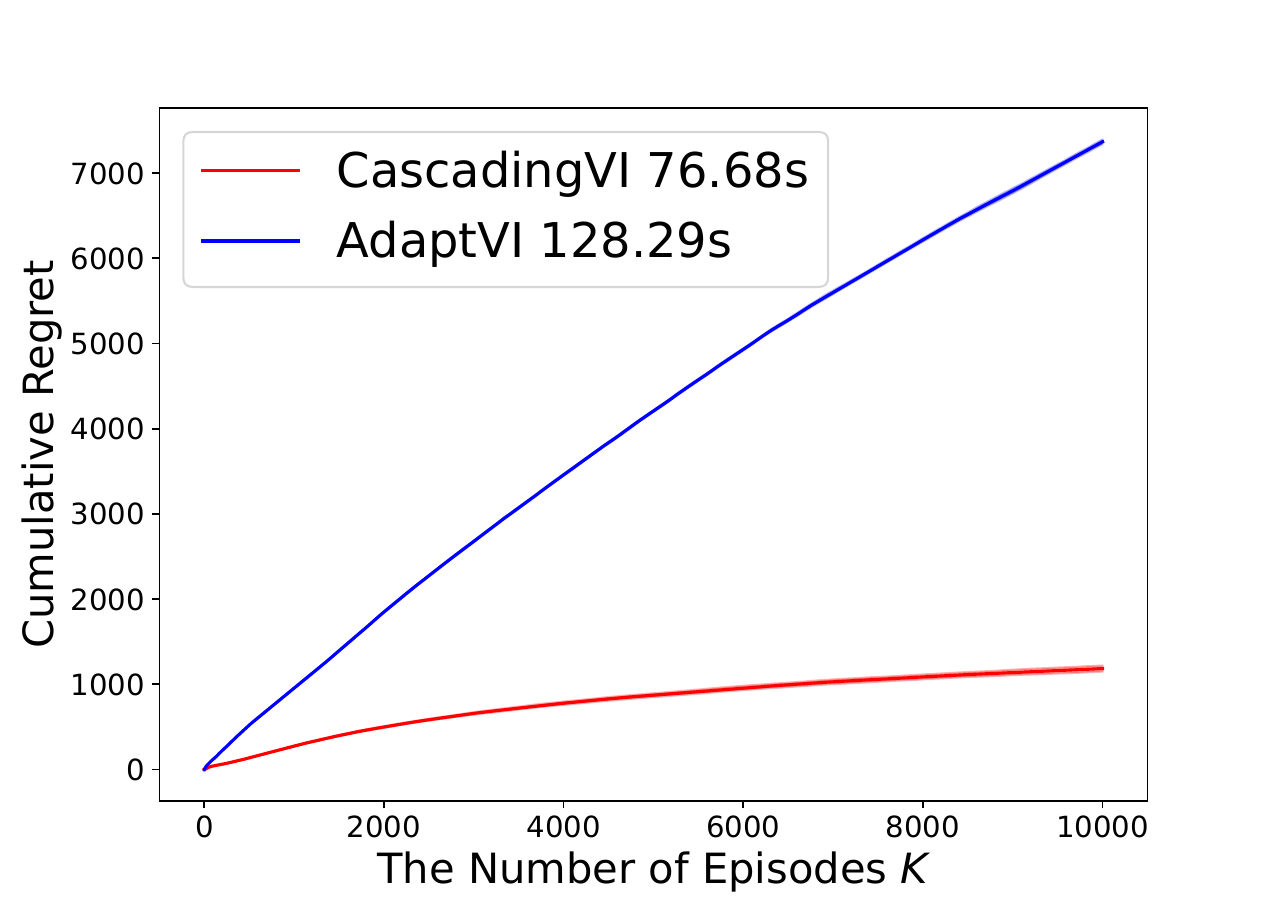}
	}   
	\hfill
	\subfigure[Regret minimization, $N=8$]{ \label{fig:rm_N8} 
		\includegraphics[height=0.2\columnwidth]{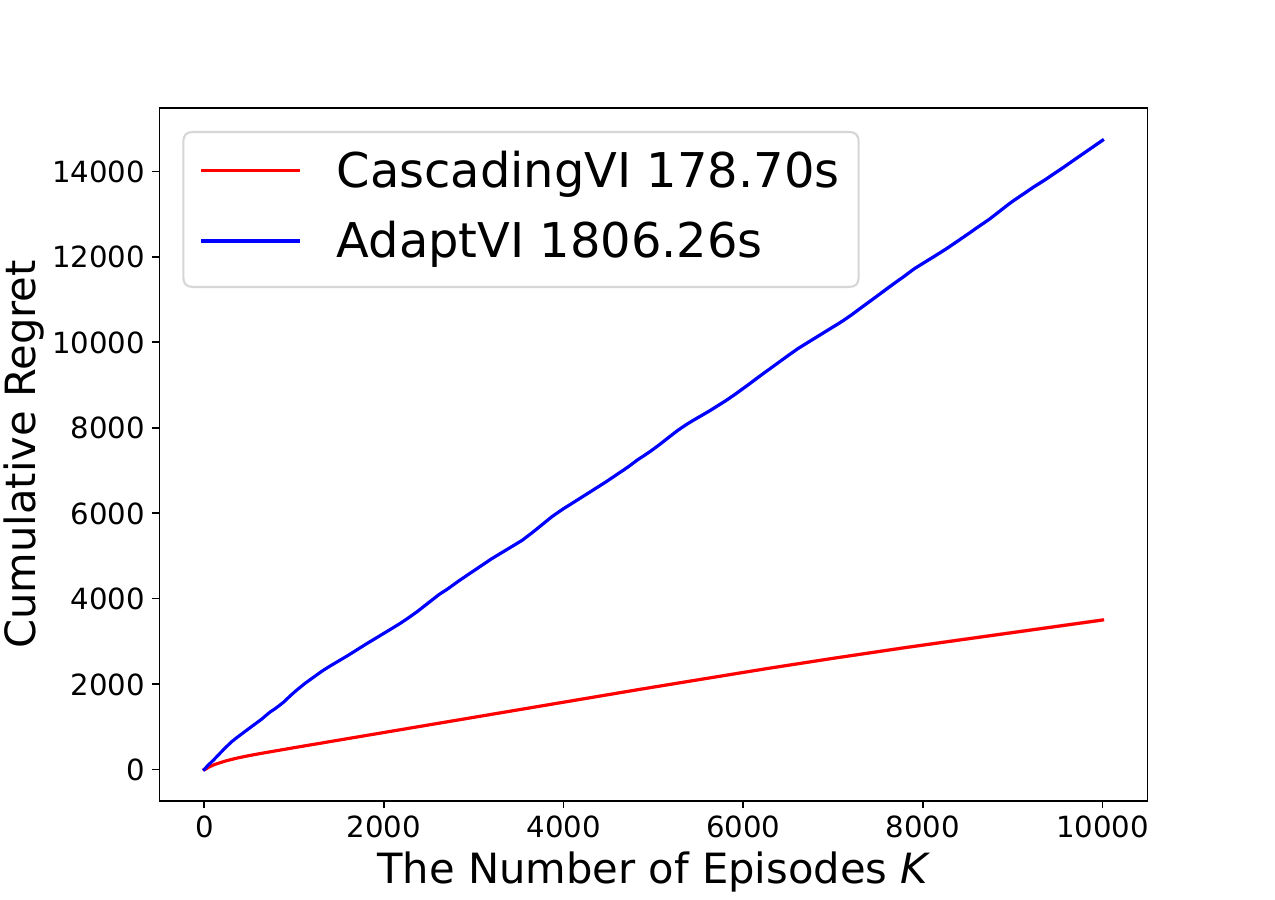}
	}   
	\hfill
	\subfigure[Best policy identification]{ \label{fig:bpi}
		\includegraphics[height=0.2\columnwidth]{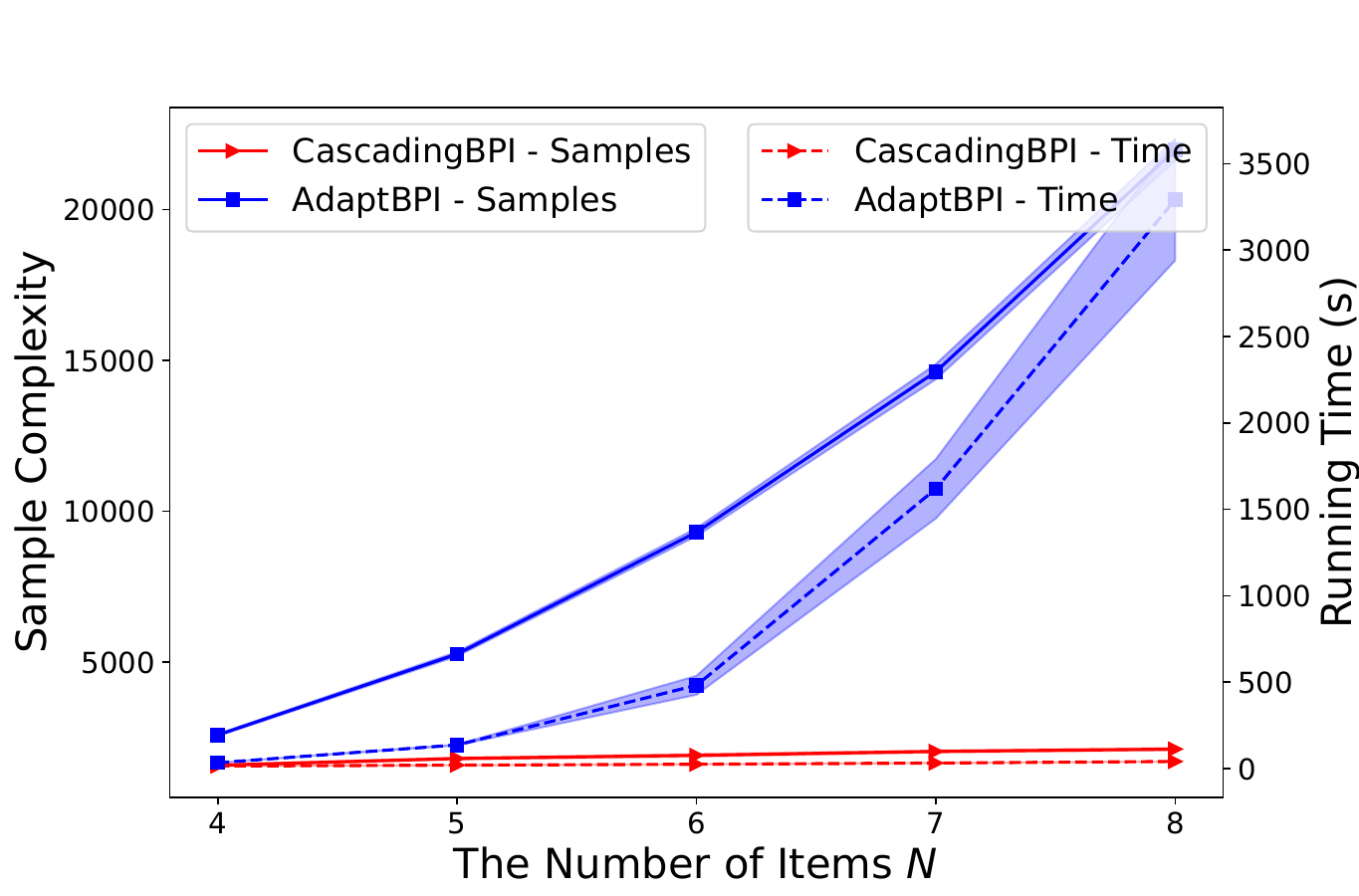} 
	}   
	\vspace*{-0.5em}
	\caption{Experiments for cascading RL on synthetic data. 
	} \label{fig:experiment_synthetic}
\end{figure}

For synthetic data, we consider a cascading MDP with $H$ layers, $S=2H-1$ states and $N$ items as shown in Figure~\ref{fig:experiment_instance}: There is only an initial state $s_0$ in the first layer. For any $2 \leq h \leq H$, there are a good state $s_{2(h-1)-1}$ and a bad state $s_{2(h-1)}$ in layer $h$. The reward function depends only on states. All good states induce reward $1$, and all bad states and the initial state give reward $0$. The attraction probability for all state-item pairs is $\frac{1}{2}$. Denote $A^{\univ}=\{a_1,\dots,a_{N}\}$. For any $h \in [H]$, under a good item $a \in \{a_{N-m+1},\dots,a_{N}\}$, the transition probabilities from each state in layer $h$ to the good state and the bad state in layer $h+1$ are $0.9$ and $0.1$, respectively. On the contrary, under a bad item $a \in \{a_{1},\dots,a_{N-m}\} \cup \{a_{\bot}\}$, the transition probabilities from each state in layer $h$ to the good state and the bad state in layer $h+1$ are $0.1$ and $0.9$, respectively. Therefore, in this MDP, an optimal policy is to select good items $(a_{N-m+1},\dots,a_{N})$ in all states.

We set $\delta=0.005$, $H=5$, $S=9$ and $m=3$. Each algorithm is performed for $20$ independent runs. In the regret minimization setting, we let $N\in\{4,8\}$ and $K=10000$, and show the average cumulative regrets and average running times (in the legend) across runs. In the best policy identification setting, we set $\epsilon=0.5$ and $N \in \{4,5,6,7,8\}$, and plot the average sample complexities and average running times across runs with $95\%$ confidence intervals. 

Under the regret minimization objective, we compare our algorithm $\algcascadingeuler$ with $\mathtt{AdaptVI}$.
From Figures~\ref{fig:rm_N4} and \ref{fig:rm_N8}, one can see that $\algcascadingeuler$ achieves significantly lower regret and running time than $\mathtt{AdaptVI}$, and this advantage becomes more clear as $N$ increases. This result demonstrates the efficiency of our computation oracle and estimation scheme. 

Regarding the best policy identification objective, we compare our algorithm $\algcascadingbpi$ with $\mathtt{AdaptBPI}$, an adaptation of a classic best policy identification algorithm in~\citep{menard2021fast} to combinatorial actions. In Figure~\ref{fig:bpi}, as $N$ increases, the sample complexity and running time of $\mathtt{AdaptBPI}$ increase exponentially fast. By contrast, $\algcascadingbpi$ has much lower sample complexity and running time, and enjoys a mild growth rate as $N$ increases. This matches our theoretical result that the sample and computation complexities of $\algcascadingbpi$ are polynomial in $N$.

\section{Proofs for Oracle $\algbestperm$} \label{apx:oracle}

In this section, we present the proofs for oracle $\algbestperm$. 

First, we introduce two important lemmas which are used in the proof of Lemma~$\ref{lemma:f_property}$.

\begin{lemma}[Interchange by Descending Weights] \label{lemma:exchange_by_weight}
	For any $u: A^{\univ} \mapsto [0,1]$, $w: A^{\univ} \mapsto \R$ and $A=(a_1,\dots,a_{\ell},a_{\ell+1},\dots,a_n)$ such that $1 \leq \ell <n$ and $w(a_{\ell}) < w(a_{\ell+1})$, denoting $A':=(a_1,\dots,a_{\ell+1},a_{\ell},\dots,a_n)$, we have
	\begin{align*}
		f(A,u,w) \leq f(A',u,w) .
	\end{align*}
\end{lemma}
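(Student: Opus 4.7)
The plan is to prove this by a direct interchange calculation: we split the sum defining $f(A,u,w)$ into contributions from positions $i<\ell$, $i\in\{\ell,\ell+1\}$, and $i>\ell+1$, and verify that only the middle block changes when we swap the two adjacent items.

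First I would note the key commutativity observation: the ``no-click prefix'' $\prod_{j=1}^{i-1}(1-u(A(j)))$ depends only on the \emph{set} $\{A(1),\dots,A(i-1)\}$, not on the order, because scalar multiplication is commutative. Therefore for every $i>\ell+1$ the $i$-th term is identical in $f(A,u,w)$ and $f(A',u,w)$, and for every $i<\ell$ the term is trivially identical because $A$ and $A'$ agree at positions $1,\dots,\ell-1$. This isolates the difference to positions $\ell$ and $\ell+1$.

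Next I would compute that residual difference explicitly. Writing $P := \prod_{j=1}^{\ell-1}(1-u(a_j))$, $p := u(a_\ell)$, $q := u(a_{\ell+1})$, $v := w(a_\ell)$, $v' := w(a_{\ell+1})$, the two middle-block contributions are
\begin{align*}
\text{(from }A\text{)} &= P\bigl[\, p v + (1-p) q v' \,\bigr], \\
\text{(from }A'\text{)} &= P\bigl[\, q v' + (1-q) p v \,\bigr].
\end{align*}
Subtracting gives $f(A',u,w) - f(A,u,w) = P\,p\,q\,(v' - v)$, which is nonnegative because $P \in [0,1]$, $p,q \in [0,1]$, and $v' = w(a_{\ell+1}) > w(a_\ell) = v$ by hypothesis. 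This yields the claimed inequality.

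There is no real obstacle here; the only subtlety is being precise about why the $i>\ell+1$ terms match, which is exactly the order-independence of the no-click product. Everything else is a routine two-line algebraic comparison. The result is the standard adjacent-interchange step that will then be iterated (or combined with bubble-sort style reasoning) in the proof of Lemma~\ref{lemma:f_property}(i).
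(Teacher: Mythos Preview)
Your proof is correct and follows essentially the same interchange argument as the paper: isolate the contribution from positions $\ell$ and $\ell+1$, factor out the common prefix $\prod_{j=1}^{\ell-1}(1-u(a_j))$, and reduce the difference to $P\,u(a_\ell)\,u(a_{\ell+1})\,(w(a_{\ell+1})-w(a_\ell))\geq 0$. You are slightly more explicit than the paper in justifying why the $i>\ell+1$ terms cancel (via commutativity of the no-click product), but the substance is identical.
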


\begin{proof}[Proof of Lemma~\ref{lemma:exchange_by_weight}]
	This proof uses a similar idea as the interchange argument~\citep{bertsekas1999rollout,ross2014introduction}. We have
	\begin{align*}
		& \quad\ f(A,u,w) - f(A',u,w) 
		\\
		& = \prod_{j=1}^{\ell-1} (1-u(a_j)) \cdot \sbr{ u(a_{\ell}) w(a_{\ell}) + (1-u(a_{\ell})) u(a_{\ell+1}) w(a_{\ell+1}) } 
		\\
		& \quad\ - \prod_{j=1}^{\ell-1} (1-u(a_j)) \cdot \sbr{ u(a_{\ell+1}) w(a_{\ell+1}) + (1-u(a_{\ell+1})) u(a_{\ell}) w(a_{\ell}) }
		\\
		& = \prod_{j=1}^{\ell-1} (1-u(a_j)) \cdot \sbr{ u(a_{\ell+1}) u(a_{\ell}) w(a_{\ell}) - u(a_{\ell}) u(a_{\ell+1}) w(a_{\ell+1}) } 
		\\
		& = \prod_{j=1}^{\ell-1} (1-u(a_j)) \cdot  u(a_{\ell}) u(a_{\ell+1}) \cdot \sbr{  w(a_{\ell}) -  w(a_{\ell+1}) } 
		\\
		& \leq 0 .
	\end{align*}
\end{proof}

\begin{lemma}[Items behind $a_{\bot}$ Do Not Matter]
	For any ordered subsets of $\{a_1,\dots,a_{N}\}$, $A$ and $A'$, such that $A \cap A' = \emptyset$, we have
	$$
	f((A, a_{\bot}, A'),u,w)=f((A, a_{\bot}),u,w) .
	$$
\end{lemma}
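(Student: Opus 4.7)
The plan is to expand $f$ using its definition as a telescoping cascade sum and exploit the single algebraic fact that $u(a_{\bot}) = 1$, which was stipulated in Section~\ref{sec:formulation}. Concretely, write $A = (a_{i_1}, \dots, a_{i_k})$ and $A' = (a_{j_1}, \dots, a_{j_\ell})$, and let $B = (A, a_{\bot}, A')$ so that $|B| = k + 1 + \ell$ with $B(k+1) = a_{\bot}$.

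First I would split the defining sum $f(B, u, w) = \sum_{i=1}^{|B|} \prod_{t=1}^{i-1}(1 - u(B(t))) u(B(i)) w(B(i))$ into three pieces: indices $i \le k$, the index $i = k+1$, and indices $i \ge k+2$. For $i \le k$, the factors $(1 - u(B(t)))$ only involve items from $A$, so these terms are literally the first $k$ terms of $f((A, a_{\bot}), u, w)$. For $i = k+1$, using $u(a_{\bot}) = 1$, the term becomes $\prod_{t=1}^{k}(1-u(A(t))) \cdot w(a_{\bot})$, which matches the last term of $f((A, a_{\bot}), u, w)$. For $i \ge k+2$, the product $\prod_{t=1}^{i-1}(1-u(B(t)))$ contains the factor $(1 - u(a_{\bot})) = 0$, so every such term vanishes identically.

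Adding the three pieces then gives $f((A, a_{\bot}, A'), u, w) = f((A, a_{\bot}), u, w)$, which is the claim. The disjointness hypothesis $A \cap A' = \emptyset$ is not actually needed for the algebraic identity (it is imposed because elsewhere we work with genuine item lists, i.e., permutations without repetition); I would simply note this in passing rather than invoke it.

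There is no real obstacle here: the entire content is the observation that $u(a_{\bot}) = 1$ zeros out the cascade beyond $a_{\bot}$. The only care needed is bookkeeping of indices when splitting the sum, and being explicit that the vanishing factor $(1 - u(a_{\bot}))$ appears in \emph{every} product for $i \ge k+2$, which is immediate because $a_{\bot}$ sits at position $k+1 < i$.
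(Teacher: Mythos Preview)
Your proposal is correct and follows essentially the same approach as the paper: both arguments expand $f((A,a_{\bot},A'),u,w)$ by definition, split the sum at the position of $a_{\bot}$, and observe that every term beyond that position carries the factor $(1-u(a_{\bot}))=0$ and hence vanishes. Your remark that the disjointness hypothesis is not needed for the algebraic identity is also accurate and is a nice extra observation not made explicit in the paper.
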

\begin{proof}
	Since $u(a_{\bot})=1$, we have
	\begin{align*}
		& f((A, a_{\bot}, A'),u,w)
		\\
		= & \sum_{i=1}^{|A|} \prod_{j=1}^{i-1} \Big( 1-u(A(j)) \Big) u(A(i)) w(A(i))  + \prod_{j=1}^{|A|} \Big( 1-u(A(j)) \Big) u(a_{\bot}) w(a_{\bot}) \\& +  \sum_{i=1}^{|A'|} \bigg( \prod_{\ell=1}^{|A|} \Big( 1-u(A(\ell)) \Big) \Big( 1-u(a_{\bot}) \Big) \prod_{j=1}^{i-1} \Big( 1-u(A'(j)) \Big) \bigg) u(A'(i)) w(A'(i))
		\\
		= & \sum_{i=1}^{|A|} \prod_{j=1}^{i-1} \Big( 1-u(A(j)) \Big) u(A(i)) w(A(i)) + \prod_{j=1}^{|A|} \Big( 1-u(A(j)) \Big) u(a_{\bot}) w(a_{\bot}) 
		\\
		= & f((A, a_{\bot}),u,w) . 
	\end{align*}
\end{proof}

Now we prove Lemma~\ref{lemma:f_property}.

For any $X \subseteq A^{\univ}$, let $\perm(X)$ denote the collection of permutations of the items in $X$, and $\desweight(X) \in \perm(X)$ denote the permutation where items are sorted in descending order of $w$. 

\begin{proof}[Proof of Lemma~\ref{lemma:f_property}]
	First, we prove property (i) by contradiction.
	
	Suppose that the best permutation $A^{*}=\argmax_{A \in \perm(X)} f(A,u,w)$ does not rank items in descending order of $w$. In other words,  there exist some $a_{\ell},a_{\ell+1}$ such that we can write $A^{*}=(a_1,\dots,a_{\ell},a_{\ell+1},\dots,a_{|X|})$ and $w(a_{\ell}) < w(a_{\ell+1})$.
	
	Then, using Lemma~\ref{lemma:exchange_by_weight}, we have that $A'=(a_1,\dots,a_{\ell+1},a_{\ell},\dots,a_{|X|})$ satisfies $f(A',u,w) \geq f(A^{*},u,w)$, which contradicts the supposition.
	Given any permutation in $\perm(X)$, we can repeatedly perform Lemma~\ref{lemma:exchange_by_weight}  to obtain a better permutation as bubble sort, until all items are ranked in descending order of $w$.
	Therefore, we obtain property (i).
	
	Next, we prove property (ii).
	
	For any $u$, $w$ and disjoint $X,X' \subseteq A^{\univ} \setminus \{a_{\bot}\}$ such that $w(a)>w(a_{\bot})$ for any $a \in X , X'$, we have
	\begin{align*}	
		f((\desweight(X),a_{\bot}),u,w) & \overset{\textup{(a)}}{=}  f((\desweight(X),a_{\bot},X'),u,w) 
		\\
		& \overset{\textup{(b)}}{\leq}   f((\desweight(X \cup X'), a_{\bot}),u,w) ,	
	\end{align*}
	Here in the right-hand side of equality (a), $X'$ can be in any order, and inequality (b) uses property (i).
	
	Furthermore, for any $u$, $w$ and disjoint $X,X' \subseteq A^{\univ} \setminus \{a_{\bot}\}$ such that $w(a)>w(a_{\bot})$ for any $a \in X$, and $w(a)<w(a_{\bot})$ for any $a \in X'$, we have	
	\begin{align*}
		f((\desweight(X,X'),a_{\bot}),u,w) & \overset{\textup{(c)}}{\leq} f((\desweight(X),a_{\bot},\desweight(X')),u,w)
		\\
		& = f((\desweight(X), a_{\bot}),u,w) ,	
	\end{align*}
	where inequality (c) is due to property (i).
\end{proof}

Next, we prove Lemma~\ref{lemma:guarantee_oracle}.

\begin{proof}[Proof of Lemma~\ref{lemma:guarantee_oracle}]
	From Lemma~\ref{lemma:f_property}~(i), we have that when fixing a subset of $A^{\univ}$, the best order of this subset is to rank items in descending order of $w$. Thus, the problem of finding the best permutation in Eq.~\eqref{eq:oracle_problem} reduces to finding the best subset, and then we can just sort items in this subset by descending $w$ to obtain the solution.
	
	We sort the items in $A^{\univ}$ in descending order of $w$, and denote the sorted sequence by $a_1,\dots,a_{J},a_{\bot},a_{J+1},\dots,a_{N}$. Here $J$ denotes the number of items with weights above $a_{\bot}$.
	
	According to Lemma~\ref{lemma:f_property} (ii), we have that the best permutation only consists of items in $a_1,\dots,a_{J}$. In other words, we should discard $a_{J+1},\dots,a_{N}$.
	
	\paragraph{Case (i).}
	If $1\leq J \leq m$,  $(a_1,\dots,a_{J})$ satisfies the cardinality constraint and is the best permutation.
	
	\paragraph{Case (ii).}
	Otherwise, if $J=0$, we have to select a single best item to satisfy that there is at least one regular item in the solution. 
	
	Why do not we select more items? We can prove that including more items gives a worse permutation by contradiction.
	Without loss of generality, consider a permutation with an additional item, i.e., $(a,a',a_{\bot})$, where $a,a' \in A^{\univ} \setminus \{a_{\bot}\}$ and  $w(a),w(a')<w(a_{\bot})$.
	Using Lemma~\ref{lemma:exchange_by_weight}, we have
	\begin{align*}
		f((a,a',a_{\bot}),u,w) & \leq  f((a,a_{\bot},a'),u,w)
		\\
		& = f((a,a_{\bot}),u,w) .
	\end{align*}
	
	In this case, the best permutation is the best single item  $\argmax_{a \in \{a_1,\dots,a_N\}} f((a,a_{\bot}),u,w)$.
	
	\paragraph{Case (iii).}
	If $J>m$, the problem reduces to selecting $m$ best items from $a_1,\dots,a_{J}$. 
	For any $i \in [J]$ and $k \in [\min\{m,J-i+1\}]$, let  $F[i][k]$ denote the optimal value of the problem
	$\max_{(a'_{1},\dots,a'_{k}) \subseteq (a_{i},\dots,a_{J})} f((a'_{1},\dots,a'_{k}),u,w)$.
	From the structure of $f(A,u,w)$, we have the following dynamic programming:
	\begin{equation*}
		\left\{
		\begin{aligned}
			F[J][1] &= u(a_{J}) w(a_{J})+ (1-u(a_{J})) w(a_{\bot}),
			\\
			F[i][0] &= w(a_{\bot}), & 1\leq i \leq J,
			\\
			F[i][k] &= -\infty , & 1\leq i \leq J,\ J-i+1<k\leq m ,
			\\
			F[i][k] &= \max \{ F[i+1][k],\\  u(a_i) & w(a_i) +(1-u(a_i)) F[i+1][k-1] \} , & 1\leq i \leq J-1,\ 1 \leq k \leq \min\{m,J-i+1\} .
		\end{aligned}
		\right. 
	\end{equation*}
	Then, $F[1][m]$ gives the objective value of the best permutation.
	
	Combining the above analysis, we have that $\algbestperm$ returns the best permutation, i.e., $f(S^{\bst},u,w) = \max_{A \in \cA} f(A,u,w)$. 
\end{proof}

\section{Proofs for Cascading RL with Regret Minimization}

In this section, we provide the proofs for cascading RL with the regret minimization
objective.

\subsection{Value Difference Lemma for Cascading MDP}

We first give the value difference lemma for cascading MDP, which is useful for regret decomposition.

\begin{lemma}[Cascading Value Difference Lemma]\label{lemma:value_diff_lemma}
	For any two cascading MDPs $\cM'(\cS,A^{\univ},\cA,m,H,q',p',r')$ and $\cM''(\cS,A^{\univ},\cA,m,H,q'',p'',r'')$, the difference in values under the same policy $\pi$ satisfies
	\begin{align*}
		V'^{\pi}_h(s) \!&-\! V''^{\pi}_h(s)  =  \sum_{t=h}^{H} \ex_{q'',p'',\pi} \Bigg[   \sum_{i=1}^{|A_t|} \prod_{j=1}^{i-1} (1-q'(s_t,A_{t}(j))) q'(s_t,A_{t}(i)) r'(s_t,A_{t}(i)) 
		\\
		& - \sum_{i=1}^{|A_t|} \prod_{j=1}^{i-1} (1-q''(s_t,A_{t}(j))) q''(s_t,A_{t}(i)) r''(s_t,A_{t}(i))
		\\
		& + \bigg( \sum_{i=1}^{|A_t|} \prod_{j=1}^{i-1} (1-q'(s_t,A_{t}(j))) q'(s_t,A_{t}(i)) p'(\cdot|s_t,A_{t}(i))
		\\&- \sum_{i=1}^{|A_t|} \prod_{j=1}^{i-1} (1-q''(s_t,A_{t}(j))) q''(s_t,A_{t}(i)) p''(\cdot|s_t,A_{t}(i)) \bigg)^\top V'^{\pi}_{t+1} \bigg| s_h=s, A_h=\pi_h(s_h) \Bigg] .
	\end{align*}
\end{lemma}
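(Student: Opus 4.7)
The plan is to prove the identity by backward induction on $h$, using the cascading Bellman equation together with the standard ``add–subtract mixed term'' trick from the classical simulation lemma. The base case $h = H+1$ is immediate since $V'^{\pi}_{H+1} \equiv V''^{\pi}_{H+1} \equiv 0$. For the inductive step I start from
\[
V'^{\pi}_h(s) - V''^{\pi}_h(s) \;=\; Q'^{\pi}_h(s,\pi_h(s)) - Q''^{\pi}_h(s,\pi_h(s)),
\]
expand both $Q$-functions using the cascading Bellman equation, and insert $\pm\sum_{i}\prod_{j<i}(1-q''(s,A(j)))\,q''(s,A(i))\,(r''(s,A(i)) + p''(\cdot|s,A(i))^\top V'^{\pi}_{h+1})$ — i.e.\ the ``mixed'' term that uses $\cM''$'s dynamics but the tail value function of $\cM'$.

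This splits $Q'^{\pi}_h(s,A) - Q''^{\pi}_h(s,A)$ into two pieces. The first piece, $Q'^{\pi}_h(s,A)$ minus the mixed term, is precisely the bracketed one-step simulation difference $D_h(s,A)$ appearing in the lemma (with $V'^{\pi}_{h+1}$ as the tail value). The second piece, after the $r''$ contributions cancel, reduces to
\[
\sum_{i=1}^{|A|}\prod_{j=1}^{i-1}(1-q''(s,A(j)))\,q''(s,A(i))\,p''(\cdot|s,A(i))^\top\bigl(V'^{\pi}_{h+1} - V''^{\pi}_{h+1}\bigr),
\]
and the coefficient multiplying $p''(\cdot|s,A(i))$ is, by the cascading Bellman equation, exactly the probability under $\cM''$ that item $A(i)$ is the one clicked. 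Hence this piece equals $\ex_{q'',p''}\!\bigl[V'^{\pi}_{h+1}(s_{h+1}) - V''^{\pi}_{h+1}(s_{h+1}) \bigm| s_h=s,\, A_h=A\bigr]$.

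Combining, with $A = \pi_h(s)$, gives the one-step recursion
\[
V'^{\pi}_h(s) - V''^{\pi}_h(s) \;=\; D_h(s,\pi_h(s)) \;+\; \ex_{q'',p'',\pi}\!\bigl[V'^{\pi}_{h+1}(s_{h+1}) - V''^{\pi}_{h+1}(s_{h+1}) \bigm| s_h=s,\, A_h=\pi_h(s)\bigr].
\]
Applying the inductive hypothesis to the value difference at step $h{+}1$ (with initial state $s_{h+1}$ drawn under $\cM''$ and the same $\pi$), then using the tower property to merge the nested expectations under $q'',p'',\pi$, telescopes the identity into $\ex_{q'',p'',\pi}\!\bigl[\sum_{t=h+1}^{H} D_t(s_t,A_t)\bigm| s_h=s,\,A_h=\pi_h(s)\bigr]$. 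Absorbing the leading $D_h(s,\pi_h(s))$ back into the sum (it is measurable given the conditioning) yields the claimed expression.

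The only real obstacle is notational bookkeeping: tracking the cascading products $\prod_{j=1}^{i-1}(1-q(s,A(j)))$ across two MDPs and choosing the correct ``mixed'' insertion so that (i) the immediate-reward cascading sum at step $h$ lands inside the expectation with $V'^{\pi}_{h+1}$ as the tail (matching the lemma statement), and (ii) the residual is the expected tail-value difference under $\cM''$. No new structural idea beyond the classical simulation lemma is required, since the cascading click probabilities $\prod_{j<i}(1-q(s,A(j)))\,q(s,A(i))$ play exactly the role of a transition kernel over clicked items, and linearity of expectation carries through.
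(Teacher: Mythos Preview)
Your proposal is correct and follows essentially the same approach as the paper's own proof: expand both value functions via the cascading Bellman equation, insert the mixed term $\sum_i\prod_{j<i}(1-q'')\,q''\,p''(\cdot|s,A(i))^\top V'^{\pi}_{h+1}$ to isolate the one-step difference $D_h$ and the residual $\ex_{q'',p'',\pi}[V'^{\pi}_{h+1}(s_{h+1})-V''^{\pi}_{h+1}(s_{h+1})\mid s_h=s]$, then telescope. The only cosmetic distinction is that the paper writes the argument as a direct unrolling of the one-step recursion, whereas you frame it as a backward induction with the tower property; these are the same argument.
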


\begin{proof}[Proof of Lemma~\ref{lemma:value_diff_lemma}]
	Let $A:=\pi_h(s)$. We have
	\begin{align*}
		& V'^{\pi}_h(s)-V''^{\pi}_h(s) 
		\\
		= & \sum_{i=1}^{|A|} \prod_{j=1}^{i-1} (1-q'(s,A(j))) q'(s,A(i)) \sbr{r'(s,A(i))+p'(\cdot|s,A(i))^\top V'^{\pi}_{h+1}} 
		\\
		& - \sum_{i=1}^{|A|} \prod_{j=1}^{i-1} (1-q''(s,A(j))) q''(s,A(i)) \sbr{r''(s,A(i))+p''(\cdot|s,A(i))^\top V''^{\pi}_{h+1}}
		\\
		= & \sum_{i=1}^{|A|} \prod_{j=1}^{i-1} (1-q'(s,A(j))) q'(s,A(i)) r'(s,A(i)) 
		- \sum_{i=1}^{|A|} \prod_{j=1}^{i-1} (1-q''(s,A(j))) q''(s,A(i)) r''(s,A(i))
		\\
		& + \Bigg( \sum_{i=1}^{|A|} \prod_{j=1}^{i-1} (1-q'(s,A(j))) q'(s,A(i)) p'(\cdot|s,A(i))
		\\ & - \sum_{i=1}^{|A|} \prod_{j=1}^{i-1} (1-q''(s,A(j))) q''(s,A(i)) p''(\cdot|s,A(i)) \Bigg)^\top V'^{\pi}_{h+1}
		\\
		& + \sum_{i=1}^{|A|} \prod_{j=1}^{i-1} (1-q''(s,A(j))) q''(s,A(i)) p''(\cdot|s,A(i))^\top \sbr{V'^{\pi}_{h+1}-V''^{\pi}_{h+1}}
		\\
		= & \sum_{i=1}^{|A|} \prod_{j=1}^{i-1} (1-q'(s,A(j))) q'(s,A(i)) r'(s,A(i)) 
		- \sum_{i=1}^{|A|} \prod_{j=1}^{i-1} (1-q''(s,A(j))) q''(s,A(i)) r''(s,A(i))
		\\
		& + \Bigg( \sum_{i=1}^{|A|} \prod_{j=1}^{i-1} (1-q'(s,A(j))) q'(s,A(i)) p'(\cdot|s,A(i))
		\\& - \sum_{i=1}^{|A|} \prod_{j=1}^{i-1} (1-q''(s,A(j))) q''(s,A(i)) p''(\cdot|s,A(i)) \Bigg)^\top V'^{\pi}_{h+1}
		\\
		& + \ex_{q'',p'',\pi} \mbr{V'^{\pi}_{h+1}(s_{h+1})-V''^{\pi}_{h+1}(s_{h+1}) | s_h=s, \pi}
		\\
		= & \ex_{q'',p'',\pi} \Bigg[ \sum_{t=h}^{H} \Bigg( \sum_{i=1}^{|A_t|} \prod_{j=1}^{i-1} (1-q'(s_t,A_{t}(j))) q'(s_t,A_{t}(i)) r'(s_t,A_{t}(i)) 
		\\& - \sum_{i=1}^{|A_t|1} \prod_{j=1}^{i-1} (1-q''(s_t,A_{t}(j))) q''(s_t,A_{t}(i)) r''(s_t,A_{t}(i))
		\\
		& + \bigg( \sum_{i=1}^{|A_t|} \prod_{j=1}^{i-1} (1-q'(s_t,A_{t}(j))) q'(s_t,A_{t}(i)) p'(\cdot|s_t,A_{t}(i))
		\\&- \sum_{i=1}^{|A_t|} \prod_{j=1}^{i-1} (1-q''(s_t,A_{t}(j))) q''(s_t,A_{t}(i)) p''(\cdot|s_t,A_{t}(i)) \bigg)^\top V'^{\pi}_{t+1} \Bigg) \bigg| s_h=s, A_h=\pi_h(s_h) \Bigg] .
	\end{align*}
\end{proof}

\subsection{Regret Upper Bound for algorithm $\algcascadingeuler$} \label{apx:regret_ub}

Below we prove the regret upper bound for algorithm $\algcascadingeuler$.

\subsubsection{Concentration}

For any $k>0$, $s \in \cS$ and $a \in A^{\univ} \setminus \{a_{\bot}\}$, let $n^{k,q}(s,a)$ denote the number of times that the attraction of $(s,a)$ is observed up to episode $k$.
In addition, for any $k>0$, $s \in \cS$ and $a \in A^{\univ}$, let $n^{k,p}(s,a)$ denote the number of times that the transition of $(s,a)$ is observed up to episode $k$.

Let event
\begin{align*}
	\cE:=\Bigg\{ 
	& \abr{ \hat{q}^k(s,a) - q(s,a) } \leq 2 \sqrt{\frac{\hat{q}^k(s,a) (1-\hat{q}^k(s,a)) \log\sbr{ \frac{KHSA}{\delta'} } }{n^{k,q}(s,a)}} + \frac{5 \log\sbr{ \frac{KHSA}{\delta'} } }{n^{k,q}(s,a)} , \\
	& \abr{ \sqrt{ \hat{q}^k(s,a) (1-\hat{q}^k(s,a)) } - \sqrt{ q(s,a) (1-q(s,a)) } } \leq 2 \sqrt{\frac{\log\sbr{ \frac{KHSA}{\delta'} }}{n^{k,q}(s,a)}} , \\
	& \forall k \in [K], \forall (s,a) \in \cS \times (A^{\univ} \setminus \{a_{\bot}\}) \Bigg\} .
\end{align*}

\begin{lemma}[Concentration of Attractive Probability] \label{lemma:con_q}
	It holds that
	$$
	\Pr \mbr{ \cE } \geq 1-4\delta' .
	$$
\end{lemma}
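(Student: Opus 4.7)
The plan is to view this as two standard empirical-Bernstein/variance concentration statements, applied to the Bernoulli observations associated with each pair $(s,a)$, and then to close the event by a union bound over $(s,a)$ and over the possible values of the counter $n^{k,q}(s,a)$.

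First, fix $(s,a)\in\cS\times(A^{\univ}\setminus\{a_{\bot}\})$. Conditioned on the event that the user examines item $a$ in state $s$ at some step, the attraction indicator is an independent $\mathrm{Bernoulli}(q(s,a))$ draw, so for any integer $n\geq 1$ the empirical mean $\hat q_n(s,a)$ is an average of $n$ i.i.d.\ Bernoulli$(q(s,a))$ samples. I would apply the empirical Bernstein inequality to obtain, for a chosen tail parameter $\delta''$,
\[
\bigl|\hat q_n(s,a) - q(s,a)\bigr| \leq 2\sqrt{\frac{\hat q_n(s,a)(1-\hat q_n(s,a))\log(1/\delta'')}{n}} + \frac{5\log(1/\delta'')}{n}
\]
with probability at least $1-2\delta''$ (the constants $2$ and $5$ are exactly the ones used in standard statements, e.g.\ Maurer--Pontil; any small slack in the constants will be absorbed into the $5/n$ term).

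Second, I would obtain the companion bound on the empirical standard deviation $\sqrt{\hat q_n(s,a)(1-\hat q_n(s,a))}$. For Bernoulli random variables this is precisely the empirical standard deviation, and the Maurer--Pontil inequality (or a direct calculation exploiting the fact that $\sqrt{x(1-x)}$ is $1$-Lipschitz away from $x=\tfrac12$ combined with Hoeffding, then patched near $x=\tfrac12$ where both sides are already $O(1)$) gives
\[
\Bigl|\sqrt{\hat q_n(s,a)(1-\hat q_n(s,a))} - \sqrt{q(s,a)(1-q(s,a))}\Bigr| \leq 2\sqrt{\frac{\log(1/\delta'')}{n}}
\]
with probability at least $1-2\delta''$, for each fixed $n$.

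Third, I close the event. Because $n^{k,q}(s,a)$ is bounded by $KH$, a union bound over $n\in\{1,\dots,KH\}$, over $S$ states, over $N=|A^{\univ}\setminus\{a_{\bot}\}|$ items, and over the two inequalities above, with $\delta'' = \delta'/(KHSN)$ turns the per-$n$ probability $1-4\delta''$ into probability at least $1-4\delta'$ that both displayed inequalities hold simultaneously for every episode $k$ and every $(s,a)$, with the logarithmic factor becoming $\log(KHSN/\delta')$ as stated. (One can alternatively replace the union bound over $n$ by a time-uniform empirical Bernstein bound to save a $\log$ factor, but this is not needed to match the statement.)

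The only subtle point is the second inequality: a naive Lipschitz bound on $x\mapsto\sqrt{x(1-x)}$ blows up at $x=1/2$, so I would either invoke the Maurer--Pontil sample-variance inequality directly (which handles this cleanly for Bernoullis) or split into the two cases $\hat q$ far from $1/2$ versus close to $1/2$; in the latter regime both $\sqrt{\hat q(1-\hat q)}$ and $\sqrt{q(1-q)}$ are $\Theta(1)$ and the bound follows from Hoeffding on $\hat q-q$. This case analysis is the one place where care with constants is required; everything else is a standard tail-bound-plus-union-bound argument.
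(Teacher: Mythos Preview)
Your proposal is correct and follows essentially the same route as the paper: an (empirical) Bernstein bound for the first inequality, a sample-variance concentration for the second (the paper cites Lemma~1 of \citep{zanette2019tighter}, which is exactly the Maurer--Pontil-type statement you invoke), and a union bound over $(s,a)$ and over the counter values $n\in[KH]$. The only difference is cosmetic: the paper first states the Bernstein bound with the true variance $q(1-q)$ and then combines it with the standard-deviation bound to replace $q(1-q)$ by $\hat q(1-\hat q)$ (absorbing the slack into the $5L/n$ term), whereas you invoke empirical Bernstein directly.

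One small slip worth fixing: the derivative of $x\mapsto\sqrt{x(1-x)}$ is $(1-2x)/\bigl(2\sqrt{x(1-x)}\bigr)$, which vanishes at $x=\tfrac12$ and diverges at $x\in\{0,1\}$, so the non-Lipschitz region is at the endpoints, not the middle. Your primary suggestion to invoke the sample-variance inequality directly sidesteps this anyway, so the overall argument is unaffected.
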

\begin{proof}[Proof of Lemma~\ref{lemma:con_q}]
	Using Bernstern's inequality and a union bound over $n^{k,q}(s,a) \in [KH]$, $k \in [K]$ and $(s,a) \in \cS \times A^{\univ}$, we have that with probability $1-2\delta'$,
	\begin{align}
		\abr{ \hat{q}^k(s,a) - q(s,a) } \leq 2 \sqrt{\frac{q(s,a) (1-q(s,a)) \log\sbr{ \frac{KHSA}{\delta'} } }{n^{k,q}(s,a)}} + \frac{\log\sbr{ \frac{KHSA}{\delta'} }}{n^{k,q}(s,a)} . \label{eq:con_q_bernstern}
	\end{align}
	Moreover, applying Lemma 1 in \citep{zanette2019tighter}, we have that with probability $1-2\delta'$,
	\begin{align}
		\abr{ \sqrt{ \hat{q}^k(s,a) (1-\hat{q}^k(s,a)) } - \sqrt{ q(s,a) (1-q(s,a)) } } \leq 2 \sqrt{\frac{\log\sbr{ \frac{KHSA}{\delta'} }}{n^{k,q}(s,a)}} . \label{eq:con_q_var}
	\end{align}
	
	Combining Eqs.~\eqref{eq:con_q_bernstern} and \eqref{eq:con_q_var}, we obtain this lemma.
\end{proof}

Let event 
\begin{align}
	\cF \!:=\! \Bigg\{ & \! \abr{ \sbr{\hat{p}^k(\cdot|s,a) - p(\cdot|s,a)}^{\!\top} \! V^*_{h+1} } \!\leq\! 2 \sqrt{ \frac{\var_{s'\sim p}\sbr{V^*_{h+1}(s')} \log\sbr{ \frac{KHSA}{\delta'} } }{n^{k,p}(s,a)} } \!+\! \frac{H \log\sbr{ \frac{KHSA}{\delta'} } }{n^{k,p}(s,a)} , \label{eq:bernstern_pV}
	\\
	& \abr{\hat{p}^k(s'|s,a) - p(s'|s,a)}  \leq \sqrt{ \frac{p(s'|s,a)(1-p(s'|s,a)) \log\sbr{ \frac{KHSA}{\delta'} } }{n^{k,p}(s,a)} } + \frac{\log\sbr{ \frac{KHSA}{\delta'} } }{n^{k,p}(s,a)} , \label{eq:event_con_p_s'_s_a} 
	\\
	& \left\| \hat{p}^k(\cdot|s,a) - p(\cdot|s,a) \right\|_1 \leq \sqrt{\frac{2S \log\sbr{ \frac{KHSA}{\delta'} } }{n^{k,p}(s,a)}} , \label{eq:event_con_p_ell_1}
	\\
	& \forall k \in [K], \forall h \in [H], \forall (s,a) \in \cS \times A^{\univ} \Bigg\} . \nonumber
\end{align}

\begin{lemma}[Concentration of Transition Probability] \label{lemma:con_pV}
	It holds that
	$$
	\Pr \mbr{ \cF } \geq 1-6\delta' .
	$$
\end{lemma}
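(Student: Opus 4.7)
The plan is to bound each of the three concentration events in $\cF$ separately, allocate $2\delta'$ of failure probability to each, and combine them with a union bound. In every case, the samples $\hat{p}^k(\cdot|s,a)$ are built from i.i.d.\ draws of the next state conditional on $(s,a)$, and $V^*_{h+1}$ is a \emph{fixed, data-independent} function bounded in $[0,H]$, so standard concentration inequalities apply directly.

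For the first bound \eqref{eq:bernstern_pV}, I would apply Bernstein's inequality to the mean-zero random variables $V^*_{h+1}(s') - p(\cdot|s,a)^\top V^*_{h+1}$ with $s' \sim p(\cdot|s,a)$, whose variance is $\var_{s'\sim p}(V^*_{h+1}(s'))$ and whose range is at most $H$. For the second bound \eqref{eq:event_con_p_s'_s_a}, for each fixed $s'$, I would apply Bernstein to the Bernoulli random variables $\bOne\{s_{\textup{next}}=s'\}$, whose variance is $p(s'|s,a)(1-p(s'|s,a))$ and range is $1$. For the third bound \eqref{eq:event_con_p_ell_1}, I would invoke Weissman's inequality (the standard $\ell_1$ deviation bound for empirical distributions on $S$ outcomes), which gives $\|\hat{p}^k(\cdot|s,a)-p(\cdot|s,a)\|_1 \leq \sqrt{2S\log(1/\delta'')/n^{k,p}(s,a)}$ with probability $1-\delta''$.

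In each case, the sample count $n^{k,p}(s,a)$ is a random stopping time rather than a fixed deterministic integer, so I cannot apply the inequalities directly. The standard workaround, which I would use here, is to apply each tail bound for every possible fixed value $n \in [KH]$ that $n^{k,p}(s,a)$ could take, and then take a union bound over $n \in [KH]$, over $k \in [K]$, over $h \in [H]$ (only for \eqref{eq:bernstern_pV}, since the other two are $h$-free), over $(s,a) \in \cS \times A^{\univ}$, and for \eqref{eq:event_con_p_s'_s_a} additionally over $s' \in \cS$. This enlarges the log-factor by at most $\log(KHSN)$ in each bound, which is absorbed into the $\log(KHSA/\delta')$ term already appearing in the statement (as was done in Lemma~\ref{lemma:con_q}). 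Calibrating the failure probability of each bound to $2\delta'$ yields a total failure probability of at most $6\delta'$ by the union bound.

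The only place requiring care is the first bound: $V^*_{h+1}$ must be independent of the samples used to form $\hat{p}^k$, but since $V^*$ depends only on the true MDP and not on the collected trajectories, this independence is automatic. (Note that the analogous bound for $\bar V^k_{h+1}$ used inside the algorithm would require a covering argument, but the lemma here concerns $V^*$, so no covering is needed.) All the routine Bernstein manipulations --- in particular moving from the population variance to the empirical variance where relevant --- go through exactly as in the standard proof; I would not grind through them in detail.
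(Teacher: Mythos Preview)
Your proof sketch is correct and follows essentially the same approach as the paper: Bernstein's inequality for \eqref{eq:bernstern_pV} and \eqref{eq:event_con_p_s'_s_a}, Weissman's $\ell_1$ inequality for \eqref{eq:event_con_p_ell_1}, combined with a union bound over the possible values of $n^{k,p}(s,a)\in[KH]$, over $(s,a)$, over $h$ (for the first bound), and over $s'$ (for the second), allocating $2\delta'$ to each of the three events. Your remark that $V^*_{h+1}$ is data-independent and hence no covering is needed is exactly the key observation the paper relies on implicitly.
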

\begin{proof}[Proof of Lemma~\ref{lemma:con_pV}]
	According to Bernstein's inequality and a union bound over  $n^{k,p}(s,a) \in [KH]$, $k \in [K]$, $h \in [H]$ and $(s,a) \in \cS \times A^{\univ}$, we obtain Eqs.~\eqref{eq:bernstern_pV} and \eqref{eq:event_con_p_s'_s_a}.
	In addition, Eq.~\eqref{eq:event_con_p_ell_1} follows from \citep{weissman2003inequalities} and Eq. (55) in \citep{zanette2019tighter} .
\end{proof}

Let event
\begin{align}
	\cG:= \Bigg\{ & \abr{ \sqrt{\var_{s'\sim \hat{p}^k}\sbr{V^*_{h+1}(s')} } - \sqrt{\var_{s'\sim p}\sbr{V^*_{h+1}(s')} }} \leq 2H \sqrt{\frac{ \log\sbr{ \frac{KHSA}{\delta'} }}{n^{k,p}(s,a)}} , \nonumber\\& \forall k \in [K], \forall h \in [H], \forall (s,a) \in \cS \times A^{\univ} \Bigg\} . \label{eq:con_sqrt_var}
\end{align}

\begin{lemma}[Concentration of Variance] \label{lemma:con_var}
	It holds that
	$$
	\Pr \mbr{ \cG } \geq 1-2\delta' .
	$$
	Furthermore, assume event $\cF \cap \cG$ holds. Then, for any $k \in [K]$, $h \in [H]$ and $(s,a) \in \cS \times A^{\univ}$, if $\bar{V}^k_{h+1}(s') \geq V^*_{h+1}(s') \geq \underline{V}^k_{h+1}(s')$ for any $s' \in \mathcal{S}$, we have
	\begin{align*}
		& \abr{ \sbr{\hat{p}^k(\cdot|s,a) - p(\cdot|s,a)}^\top V^*_{h+1} } \leq 2 \sqrt{ \frac{\var_{s'\sim \hat{p}^k }\sbr{\bar{V}^k_{h+1}(s')} \log\sbr{ \frac{KHSA}{\delta'} } }{n^{k,p}(s,a)} } \\& + 2 \sqrt{ \frac{\ex_{s'\sim\hat{p}^k} \mbr{\sbr{ \bar{V}^k_{h+1}(s')-\underline{V}^k_{h+1}(s') }^2 } \log\sbr{ \frac{KHSA}{\delta'} } }{n^{k,p}(s,a)} } + \frac{5 H \log\sbr{ \frac{KHSA}{\delta'} } }{n^{k,p}(s,a)} .
	\end{align*}
\end{lemma}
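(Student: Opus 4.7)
The lemma splits into the probabilistic claim $\Pr[\cG] \geq 1 - 2\delta'$ and a deterministic consequence on $\cF \cap \cG$ under the pointwise sandwich $\underline V^k_{h+1} \leq V^*_{h+1} \leq \bar V^k_{h+1}$. I would address them in that order.

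\textbf{Step 1: Probability of $\cG$.} I would expand the variance as $\var_{s'\sim\mu}[V^*_{h+1}] = \mu^\top (V^*_{h+1})^2 - (\mu^\top V^*_{h+1})^2$ for $\mu \in \{p(\cdot|s,a), \hat p^k(\cdot|s,a)\}$, then apply a scalar Hoeffding/Bernstein bound separately to the first and second moments using $V^*_{h+1} \in [0,H]$ and $(V^*_{h+1})^2 \in [0,H^2]$, and union-bound over $n^{k,p}(s,a) \in [KH]$, $k \in [K]$, $h \in [H]$ and $(s,a) \in \cS \times A^{\univ}$. This yields $|\var_{\hat p^k}[V^*_{h+1}] - \var_p[V^*_{h+1}]| = O(H^2\sqrt{L/n^{k,p}(s,a)})$ with failure probability at most $2\delta'$. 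Combining this with the elementary inequality $|\sqrt a - \sqrt b| \leq \sqrt{|a-b|}$ and tracking constants (as in the standard variance-concentration treatment of Azar et al., 2017) gives the $2H\sqrt{L/n^{k,p}(s,a)}$ bound defining $\cG$.

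\textbf{Step 2: Deterministic bound on $\cF \cap \cG$.} On $\cF$, Eq.~\eqref{eq:bernstern_pV} already bounds $|(\hat p^k - p)^\top V^*_{h+1}|$ by $2\sqrt{\var_p[V^*_{h+1}]\, L/n^{k,p}(s,a)} + HL/n^{k,p}(s,a)$. Event $\cG$ lets me replace $\sqrt{\var_p[V^*_{h+1}]}$ by $\sqrt{\var_{\hat p^k}[V^*_{h+1}]}$ at the cost of an extra additive $4HL/n^{k,p}(s,a)$ term. To convert the empirical variance of the still-unknown $V^*_{h+1}$ into the computable $\var_{\hat p^k}[\bar V^k_{h+1}]$, I would invoke the triangle inequality for the seminorm $\sqrt{\var_{\hat p^k}[\cdot]}$:
\begin{align*}
\sqrt{\var_{\hat p^k}[V^*_{h+1}]} \;\leq\; \sqrt{\var_{\hat p^k}[\bar V^k_{h+1}]} + \sqrt{\var_{\hat p^k}[\bar V^k_{h+1} - V^*_{h+1}]} ,
\end{align*}
and bound the second summand by $\sqrt{\ex_{s'\sim \hat p^k}[(\bar V^k_{h+1}(s') - \underline V^k_{h+1}(s'))^2]}$, using $\var \leq$ second moment together with the pointwise sandwich $|V^*_{h+1} - \bar V^k_{h+1}| \leq \bar V^k_{h+1} - \underline V^k_{h+1}$. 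Collecting the lower-order contributions ($H$ from $\cF$ plus $4H$ from the variance swap) reproduces exactly the claimed constants $2,2,5$.

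\textbf{Main obstacle.} The conceptually load-bearing step is the triangle inequality combined with the pointwise sandwich in Step~2: this is what replaces the inaccessible $V^*_{h+1}$ by the algorithm's own $\bar V^k_{h+1}$ and $\underline V^k_{h+1}$, and is precisely what makes the bonus $b^{k,pV}$ in $\algcascadingeuler$ implementable. The rest is standard bookkeeping; the only mildly delicate point is to use a sharp enough variance-concentration lemma in Step~1 so as to retain the $H\sqrt{L/n}$ rate rather than degrading to $H\sqrt[4]{L/n}$.
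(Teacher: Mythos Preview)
Your Step~2 is correct and matches the paper's argument essentially line by line: the paper also applies the triangle inequality for the seminorm $\sqrt{\var_{\hat p^k}[\cdot]}$, bounds $\sqrt{\var_{\hat p^k}[\bar V^k_{h+1}-V^*_{h+1}]}$ by $\sqrt{\ex_{\hat p^k}[(\bar V^k_{h+1}-\underline V^k_{h+1})^2]}$ via the sandwich hypothesis, and then invokes event $\cG$ for the remaining term, reproducing the constants $2,2,5$ exactly as you compute.

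Step~1, however, has a genuine gap. From $|\var_{\hat p^k}[V^*_{h+1}]-\var_p[V^*_{h+1}]| = O\!\big(H^2\sqrt{L/n^{k,p}(s,a)}\big)$ together with $|\sqrt a - \sqrt b| \leq \sqrt{|a-b|}$ you only obtain $O\!\big(H\,(L/n^{k,p}(s,a))^{1/4}\big)$, which is precisely the degradation you warn about in your last sentence; no amount of constant-tracking rescues the $\sqrt{L/n}$ rate from this route. The paper does not attempt your moment-wise decomposition at all: it simply cites Eq.~(53) of \citet{zanette2019tighter}, which is a direct concentration inequality for the empirical \emph{standard deviation} (of Maurer--Pontil type), yielding $\big|\sqrt{\var_{\hat p^k}[V^*_{h+1}]} - \sqrt{\var_p[V^*_{h+1}]}\big| \leq 2H\sqrt{L/n^{k,p}(s,a)}$ in one step without ever passing through the variance difference. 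You should invoke such a result directly; the Hoeffding-on-moments approach you describe is not sharp enough for the event $\cG$ as stated.
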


\begin{proof}[Proof of Lemma~\ref{lemma:con_var}]
	According to Eq. (53) in \citep{zanette2019tighter}, we have 
	$$
	\Pr \mbr{ \cG } \geq 1-2\delta' .
	$$
	
	Moreover, assume event $\cF \cap \cG$ holds. Then, for any $k \in [K]$, $h \in [H]$ and $(s,a) \in \cS \times A^{\univ}$, if $\bar{V}^k_{h+1}(s') \geq V^*_{h+1}(s') \geq \underline{V}^k_{h+1}(s')$ for any $s' \in \mathcal{S}$, we have
	\begin{align}
		& \abr{\sqrt{\var_{s'\sim \hat{p}^k }\sbr{\bar{V}^k_{h+1}(s')}} - \sqrt{\var_{s'\sim p }\sbr{V^*_{h+1}(s')}}} 
		\nonumber\\
		\leq & \abr{\sqrt{\var_{s'\sim \hat{p}^k }\sbr{\bar{V}^k_{h+1}(s')}} - \sqrt{\var_{s'\sim \hat{p}^k }\sbr{V^*_{h+1}(s')} } } 
		\nonumber\\& + \abr{\sqrt{\var_{s'\sim \hat{p}^k }\sbr{V^*_{h+1}(s')}} - \sqrt{\var_{s'\sim p }\sbr{V^*_{h+1}(s')}}}
		\nonumber\\
		\overset{\textup{(a)}}{\leq} & \sqrt{\ex_{s'\sim\hat{p}^k} \mbr{\sbr{ \bar{V}^k_{h+1}(s')-\underline{V}^k_{h+1}(s') }^2 }} + 2H \sqrt{\frac{ \log\sbr{ \frac{KHSA}{\delta'} }}{n^{k,p}(s,a)}} , \label{eq:con_var}
	\end{align}
	where inequality (a) comes from Eqs. (48)-(52) in \citep{zanette2019tighter}.
	
	Plugging Eq.~\eqref{eq:con_var} into Eq.~\eqref{eq:bernstern_pV}, we have
	\begin{align*}
		& \abr{ \sbr{\hat{p}^k(\cdot|s,a) - p(\cdot|s,a)}^\top V^*_{h+1} } \leq  2 \sqrt{ \frac{\var_{s'\sim \hat{p}^k}\sbr{\bar{V}^k_{h+1}(s')} \log\sbr{ \frac{KHSA}{\delta'} } }{n^{k,p}(s,a)} } 
		\\& + 2 \sqrt{ \frac{ \ex_{s'\sim\hat{p}^k} \mbr{\sbr{ \bar{V}^k_{h+1}(s')-\underline{V}^k_{h+1}(s') }^2 } \log\sbr{ \frac{KHSA}{\delta'} } }{n^{k,p}(s,a)} } +  \frac{ 5H \log\sbr{ \frac{KHSA}{\delta'} }}{n^{k,p}(s,a)} .
	\end{align*}
\end{proof}

For any $k>0$, $h \in [H]$, $i \in [m]$ and $(s,a) \in \cS \times (A^{\univ} \setminus \{a_{\bot}\})$, let $v^{observe,q}_{k,h,i}(s,a)$ denote the probability that the attraction of $(s,a)$ is observed in the $i$-th position at step $h$ of episode $k$. Let $v^{observe,q}_{k,h}(s,a):=\sum_{i=1}^{m} v^{observe,q}_{k,h,i}(s,a)$ and $v^{observe,q}_{k}(s,a):=\sum_{h=1}^{H} \sum_{i=1}^{m} v^{observe,q}_{k,h,i}(s,a)$.

For any $k>0$, $h \in [H]$, $i \in [m+1]$ and $(s,a) \in \cS \times A^{\univ}$, let $v^{observe,p}_{k,h,i}(s,a)$ denote the probability that the transition of $(s,a)$ is observed (i.e., $(s,a)$ is clicked) in the $i$-th position at step $h$ of episode $k$. Let $v^{observe,p}_{k,h}(s,a):=\sum_{i=1}^{m+1} v^{observe,p}_{k,h,i}(s,a)$ and $v^{observe,p}_{k}(s,a):=\sum_{h=1}^{H} \sum_{i=1}^{m+1} v^{observe,p}_{k,h,i}(s,a)$.

\begin{lemma}\label{lemma:v_q_times_q_equal_to_1}
	For any $k>0$ and $h \in [H]$, we have
	\begin{align*}
		\sum_{(s,a) \in \cS \times A^{\univ} \setminus \{a_{\bot}\}}  v^{observe,q}_{k,h}(s,a) q(s,a) \leq 1 .
	\end{align*}
\end{lemma}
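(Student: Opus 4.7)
The plan is to interpret the sum on the left-hand side as the probability of a single observable event at step $h$ of episode $k$, namely the event that some regular item in the list $A^k_h$ is clicked. Since at most one item is clicked per step, this probability is trivially bounded by $1$.

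First, I would unpack the definition of $v^{observe,q}_{k,h,i}(s,a)$. Writing $\pi^k_h(s)$ for the list the agent plays from state $s$ at step $h$, the event that the attraction of $(s,a)$ is observed in the $i$-th position requires three things: the agent is in state $s$ at step $h$, the list $\pi^k_h(s)$ has $a$ in its $i$-th slot, and none of the regular items occupying slots $1,\dots,i-1$ of $\pi^k_h(s)$ were clicked. Under the cascading click model these first $i-1$ failures have conditional probability $\prod_{j=1}^{i-1}(1-q(s,\pi^k_h(s)(j)))$. Multiplying by $q(s,a)$ therefore produces the probability of the joint event that the agent is in state $s$ at step $h$, the item at position $i$ of $\pi^k_h(s)$ is $a$, and $a$ is clicked.

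Next, I would sum these probabilities. For a fixed reached state $s$, summing over $a \in A^{\univ}\setminus\{a_{\bot}\}$ and $i\in[m]$ collects precisely the mutually exclusive events ``a regular item is clicked at some position of $\pi^k_h(s)$'', and hence equals the conditional probability that some regular item is clicked at step $h$ given $s^k_h=s$. Denote this conditional probability by $c_{k,h}(s) \in [0,1]$. Then
\begin{equation*}
\sum_{(s,a) \in \cS \times (A^{\univ}\setminus\{a_{\bot}\})} v^{observe,q}_{k,h}(s,a)\, q(s,a)
= \sum_{s \in \cS} \Pr[s^k_h = s]\, c_{k,h}(s)
\leq \sum_{s \in \cS} \Pr[s^k_h = s] = 1,
\end{equation*}
which is the claimed bound.

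There is essentially no obstacle here; the only thing to watch is the bookkeeping that the virtual item $a_{\bot}$ is excluded from the sum (so ``click'' really means clicking a regular item), and that the events indexed by $(s,a,i)$ over the allowed range are mutually exclusive because at most one item is clicked per step. Both facts are immediate from the cascading click dynamics defined in Section~\ref{sec:formulation}.
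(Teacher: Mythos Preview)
Your proposal is correct and follows essentially the same approach as the paper: both recognize that $v^{observe,q}_{k,h}(s,a)\,q(s,a)$ is the probability that $(s,a)$ is clicked at step $h$, and that summing over all regular $(s,a)$ gives the probability that some regular item is clicked at step $h$, which is at most $1$. The only cosmetic difference is that the paper introduces the visitation probability $w_{k,h}(s,A)$ for the pair $(s,A)$ and sums over $A\in\cA$ with $A(i)=a$, whereas you condition directly on $s^k_h=s$ and use the deterministic list $\pi^k_h(s)$; since the policies here are deterministic these are the same computation.
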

\begin{proof}
	For any $k>0$, $h \in [H]$, $s \in \cS$ and $A \in \cA$, let $w_{k,h}(s,A)$ denote the probability that $(s,A)$ is visited at step $h$ in episode $k$.
	
	It holds that
	\begin{align*}
		& \sum_{(s,a) \in \cS \times A^{\univ} \setminus \{a_{\bot}\}}  v^{observe,q}_{k,h}(s,a) q(s,a) 
		\\
		= & \sum_{(s,a) \in \cS \times A^{\univ} \setminus \{a_{\bot}\}} \sum_{i=1}^{m}  v^{observe,q}_{k,h,i}(s,a) q(s,a)
		\\
		= & \sum_{(s,a) \in \cS \times A^{\univ} \setminus \{a_{\bot}\}}  \sum_{i=1}^{m} \sum_{\begin{subarray}{c} A \in \cA \\ A(i)=a 
		\end{subarray}} w_{k,h}(s,A) \prod_{j=1}^{i-1} (1-q(s,A(j)))  q(s,a)
		\\
		= & \sum_{(s,a) \in \cS \times A^{\univ} \setminus \{a_{\bot}\}}  \sum_{i=1}^{m}  \Pr[(s,a)\textup{ is clicked in the $i$-th position at step $h$ of episode $k$}]
		\\
		= & \sum_{(s,a) \in \cS \times A^{\univ} \setminus \{a_{\bot}\}}  \Pr[(s,a)\textup{ is clicked at step $h$ of episode $k$}]
		\\
		\leq & 1 .
	\end{align*}
\end{proof}

Let event
\begin{align*}
	\cK:= \Bigg\{ & n^{k,q}(s,a) \geq \frac{1}{2} \sum_{k'<k} v^{observe,q}_{k'}(s,a) - H \log \sbr{ \frac{HSA}{\delta'} } ,\\& \forall k \in [K], \forall (s,a) \in \cS \times (A^{\univ} \setminus \{a_{\bot}\}) ,
	\\
	& n^{k,p}(s,a) \geq \frac{1}{2} \sum_{k'<k} v^{observe,p}_{k'}(s,a) - H \log \sbr{ \frac{HSA}{\delta'} } , \forall k \in [K], \forall (s,a) \in \cS \times A^{\univ} \Bigg\}
\end{align*}
\begin{lemma} \label{lemma:con_n_s_a}
	It holds that 
	$$
	\Pr \mbr{\cK} \geq 1-2\delta' .
	$$
\end{lemma}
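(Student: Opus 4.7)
The plan is to apply a standard multiplicative Bernstein-type martingale inequality separately to the attraction-observation count and the transition-observation count, for each fixed pair $(s,a)$ and episode index $k$, and then take a union bound.

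First, fix $(s,a) \in \cS \times (A^{\univ} \setminus \{a_{\bot}\})$ and consider the per-episode attraction-observation count $X^q_{k'}(s,a) := n^{k'+1,q}(s,a) - n^{k',q}(s,a)$. By definition, $X^q_{k'}(s,a)$ counts, over the $H$ steps of episode $k'$, the positions $i$ at which $(s,a)$ is exposed (the state is $s$ at step $h$, $A^{k'}_h(i) = a$, and no earlier item was clicked). Conditional on the history up to the start of episode $k'$, the expected value of this count is exactly $v^{observe,q}_{k'}(s,a)$, and the count is almost surely bounded by $H$ (at most one exposure per step since each step visits one state). Hence $\{X^q_{k'}(s,a) - v^{observe,q}_{k'}(s,a)\}_{k'}$ forms a bounded martingale difference sequence with conditional second moment at most $H \cdot v^{observe,q}_{k'}(s,a)$.

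Next, I apply Freedman's inequality (or equivalently Lemma~F.4 of Dann, Lattimore, Brunskill 2017) to obtain, with probability at least $1-\delta''$ simultaneously over all $k \in [K]$,
\begin{align*}
\sum_{k'<k} v^{observe,q}_{k'}(s,a) - n^{k,q}(s,a)
\leq \sqrt{2 H W_k \log(1/\delta'')} + \tfrac{2}{3} H \log(1/\delta''),
\end{align*}
where $W_k := \sum_{k'<k} v^{observe,q}_{k'}(s,a)$. Using the AM-GM bound $\sqrt{2HW_k \log(1/\delta'')} \leq \tfrac{1}{2} W_k + H \log(1/\delta'')$ and absorbing constants into the $H \log(1/\delta'')$ term yields $n^{k,q}(s,a) \geq \tfrac{1}{2} W_k - H \log(1/\delta'')$, which is exactly the first inequality in the definition of $\cK$. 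Taking $\delta'' = \delta'/(SA)$ and union-bounding over $(s,a) \in \cS \times (A^{\univ}\setminus\{a_{\bot}\})$ gives the attraction bound with total failure probability at most $\delta'$.

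The argument for $n^{k,p}(s,a)$ is identical: per-episode, the transition of $(s,a)$ can be observed at most once per step (it requires $(s,a)$ to be the clicked item at that step), so the per-episode increment is in $[0, H]$ with conditional expectation $v^{observe,p}_{k'}(s,a)$. The same Freedman-plus-AM-GM argument followed by a union bound over $(s,a) \in \cS \times A^{\univ}$ with $\delta'' = \delta'/(SA)$ gives the transition bound with total failure probability at most $\delta'$. Combining the two via a final union bound yields $\Pr[\cK] \geq 1 - 2\delta'$.

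The main obstacle is not conceptual but cosmetic: one must verify carefully that the per-episode observation counts are indeed bounded by $H$ (rather than a larger constant depending on $m$), which uses the cascading structure that each step generates at most one observed attraction or transition for any specific $(s,a)$ pair, and then choose the right form of Bernstein-Freedman so that the multiplicative $\tfrac{1}{2}$ factor and an additive $H \log(HSA/\delta')$ term both fall out cleanly.
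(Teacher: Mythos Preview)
Your proposal is correct and takes essentially the same approach as the paper: the paper's entire proof is a one-line citation to Lemma~F.4 of \citep{dann2017unifying}, which is precisely the Freedman-style multiplicative concentration you invoke. You simply spell out the details (per-episode boundedness by $H$, AM--GM to extract the $\tfrac{1}{2}$ factor, and the union bound over $(s,a)$) that the paper leaves implicit.
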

\begin{proof}[Proof of Lemma~\ref{lemma:con_n_s_a}]
	This lemma can be obtained by Lemma F.4 in \citep{dann2017unifying}.
\end{proof}

\subsubsection{Visitation}

For any $k>0$, we define the following two sets:
\begin{align*}
	B^{q}_k&:=\lbr{ (s,a) \in \cS \times (A^{\univ} \setminus \{a_{\bot}\}):  \frac{1}{4} \sum_{k'<k} v^{observe,q}_{k'}(s,a) \geq  H \log\sbr{\frac{HSN}{\delta'}} + H } ,
	\\
	B^{p}_k&:=\lbr{ (s,a) \in \cS \times A^{\univ}: \frac{1}{4} \sum_{k'<k} v^{observe,p}_{k'}(s,a) \geq  H \log\sbr{\frac{HSN}{\delta'}} + H } .
\end{align*}
$B^{q}_k$ and $B^{p}_k$ stand for the sets of state-item pairs whose attraction and transition are sufficiently observed in expectation up to episode $k$, respectively.

\begin{lemma}[Sufficient Visitation] \label{lemma:suff_visitation}
	Assume that event $\cK$ holds. Then, if $(s,a) \in B^{q}_k$, we have
	$$
	n^{k,q}(s,a) \geq \frac{1}{4} 
	\sum_{k'\leq k} v^{observe,q}_{k'}(s,a) .
	$$
	If $(s,a) \in B^{p}_k$, we have
	$$
	n^{k,p}(s,a) \geq \frac{1}{4} \sum_{k'\leq k} v^{observe,p}_{k'}(s,a) .
	$$
\end{lemma}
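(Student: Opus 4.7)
The plan is a short two-step calculation: convert the ``strict past'' lower bound from event $\cK$ into a ``closed past'' lower bound by absorbing one extra episode's worth of observations. The definition of $B^q_k$ (resp.\ $B^p_k$) is rigged so that the additive slack $H\log(HSA/\delta')$ in $\cK$ can be swallowed by a quarter of the expected past visitation, leaving a surplus of $H$ that is exactly enough to cover the contribution of episode $k$ itself.

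First, I would fix $(s,a)\in B^q_k$ and invoke event $\cK$ to write
\[
n^{k,q}(s,a) \;\geq\; \tfrac{1}{2}\sum_{k'<k} v^{observe,q}_{k'}(s,a) - H\log\bigl(\tfrac{HSA}{\delta'}\bigr).
\]
Using the defining inequality of $B^q_k$, namely $H\log(HSN/\delta') + H \leq \tfrac{1}{4}\sum_{k'<k} v^{observe,q}_{k'}(s,a)$ (and treating $\log(HSA/\delta')$ and $\log(HSN/\delta')$ as the same up to the $A=N+1$ vs $N$ discrepancy), I would substitute to obtain
\[
n^{k,q}(s,a) \;\geq\; \tfrac{1}{2}\sum_{k'<k} v^{observe,q}_{k'}(s,a) - \tfrac{1}{4}\sum_{k'<k} v^{observe,q}_{k'}(s,a) + H \;=\; \tfrac{1}{4}\sum_{k'<k} v^{observe,q}_{k'}(s,a) + H.
\]

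The remaining step, which is really the only substantive observation, is to argue that $v^{observe,q}_k(s,a)\leq H$. For each fixed step $h$, the item $a$ appears in any given action $A\in\cA$ at most once, so the events ``$(s,a)$ is observed in position $i$ at step $h$ of episode $k$'' for $i\in[m]$ are mutually exclusive, and their probabilities sum to at most the probability that $(s,a)$ is ever observed at step $h$, which is at most $1$. Summing over the $H$ steps gives $v^{observe,q}_k(s,a) = \sum_{h=1}^H v^{observe,q}_{k,h}(s,a) \leq H$. Combined with the previous display, this yields
\[
n^{k,q}(s,a) \;\geq\; \tfrac{1}{4}\sum_{k'<k} v^{observe,q}_{k'}(s,a) + \tfrac{1}{4} v^{observe,q}_k(s,a) \;=\; \tfrac{1}{4}\sum_{k'\leq k} v^{observe,q}_{k'}(s,a),
\]
which is the desired inequality.

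For the transition part, the argument is identical with $v^{observe,q}$ replaced by $v^{observe,p}$ and $B^q_k$ replaced by $B^p_k$; the only modification is that at each step exactly one item (possibly $a_\bot$) is clicked, so $\sum_{a\in A^{\univ}} v^{observe,p}_{k,h}(s,a)\leq 1$ and hence $v^{observe,p}_{k,h}(s,a)\leq 1$, which again gives $v^{observe,p}_k(s,a)\leq H$. I do not anticipate a genuine obstacle here; the only slightly delicate point is verifying the per-step boundedness of $v^{observe,q}_{k,h}$ and $v^{observe,p}_{k,h}$, which uses the permutation structure of actions (each regular item appears at most once, and exactly one click occurs per step when $a_\bot$ is included).
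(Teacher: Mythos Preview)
Your proposal is correct and follows essentially the same approach as the paper: invoke event $\cK$, split off a quarter of the strict-past sum, use the definition of $B^q_k$ to absorb the $H\log(HSA/\delta')$ term and leave a surplus of $H$, then bound $v^{observe,q}_k(s,a)\leq H$ to pass to the closed-past sum. Your write-up is in fact slightly cleaner than the paper's (which has a harmless ``$=$'' where ``$\geq$'' is meant in the final step), and you supply the per-step boundedness argument that the paper leaves implicit.
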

\begin{proof}[Proof of Lemma~\ref{lemma:suff_visitation}]
	If $(s,a) \in B^{q}_k$, we have
	\begin{align*}
		n^{k,q}(s,a) & \geq \frac{1}{2} \sum_{k'<k} v^{observe,q}_{k'}(s,a) - H \log \sbr{ \frac{HSA}{\delta'} }
		\\
		& = \frac{1}{4} \sum_{k'<k} v^{observe,q}_{k'}(s,a) + \frac{1}{4} \sum_{k'<k} v^{observe,q}_{k'}(s,a) - H \log \sbr{ \frac{HSA}{\delta'} }
		\\
		& \overset{\textup{(a)}}{\geq} \frac{1}{4} \sum_{k'<k} v^{observe,q}_{k'}(s,a) + H 
		\\
		& \geq \frac{1}{4} \sum_{k'<k} v^{observe,q}_{k'}(s,a) + v^{observe,q}_{k}(s,a)
		\\
		& = \frac{1}{4} \sum_{k'\leq k} v^{observe,q}_{k'}(s,a) ,
	\end{align*}
	where inequality (a) is due to the definition of $B^{q}_k$.
	
	By a similar analysis, we can also obtain the second inequality in this lemma.
\end{proof}

\begin{lemma}[Minimal Regret] \label{lemma:minimal_regret}
	It holds that
	\begin{align*}
		\sum_{k=1}^{K} \sum_{h=1}^{H} \sum_{(s,a) \notin B^{q}_k} v^{observe,q}_{k,h}(s,a) & \leq 8 HSA \log\sbr{\frac{HSN}{\delta'}} ,
		\\
		\sum_{k=1}^{K} \sum_{h=1}^{H} \sum_{(s,a) \notin B^{p}_k} v^{observe,p}_{k,h}(s,a) & \leq 8 HSA \log\sbr{\frac{HSN}{\delta'}} .
	\end{align*}
\end{lemma}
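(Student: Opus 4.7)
The plan is to reduce the double sum on the left to a per-pair bound, then sum over $(s,a)$. For each pair $(s,a)\in\cS\times(A^{\univ}\setminus\{a_\bot\})$, let $k^{*}(s,a)$ denote the smallest episode index $k\in[K+1]$ such that $(s,a)\in B^{q}_{k}$; if no such index exists in $[K]$, set $k^{*}(s,a):=K+1$. Then the set of episodes with $(s,a)\notin B^{q}_k$ is exactly $\{1,\dots,k^{*}(s,a)-1\}$, so
\[
\sum_{k=1}^{K}\sum_{h=1}^{H}\sum_{(s,a)\notin B^{q}_k} v^{observe,q}_{k,h}(s,a)
\;=\;\sum_{(s,a)}\sum_{k<k^{*}(s,a)} v^{observe,q}_{k}(s,a).
\]

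The key observation is that each $v^{observe,q}_{k,h}(s,a)$ is a probability, so $v^{observe,q}_{k}(s,a)\le H$. Combined with the defining inequality of $B^{q}_k$ evaluated at the last pre-saturation index $k^{*}(s,a)-1$, namely
\[
\tfrac{1}{4}\sum_{k'<k^{*}(s,a)-1} v^{observe,q}_{k'}(s,a) < H\log\!\sbr{\tfrac{HSN}{\delta'}}+H,
\]
adding back $v^{observe,q}_{k^{*}(s,a)-1}(s,a)\le H$ yields, for every $(s,a)$,
\[
\sum_{k<k^{*}(s,a)} v^{observe,q}_{k}(s,a) \;\le\; 4H\log\!\sbr{\tfrac{HSN}{\delta'}}+5H.
\]

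Summing over the $SN$ (or at most $SA$) pairs and absorbing the additive $5H$ into $4H\log(HSN/\delta')$ using the trivial bound $\log(HSN/\delta')\ge 5/4$ gives the claimed $8HSA\log(HSN/\delta')$. The argument for the $p$-version is identical: replace $v^{observe,q}$ by $v^{observe,p}$ and $B^{q}_k$ by $B^{p}_k$; the per-step bound $v^{observe,p}_{k,h}(s,a)\le 1$ still holds because at each $(k,h)$ at most one transition is recorded from the visited state.

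The main obstacles are purely bookkeeping: correctly identifying the boundary case $k^{*}(s,a)=K+1$ (handled by the same final-episode argument), and verifying that the naive slack $v^{observe,q}_{k^{*}-1}(s,a)\le H$ on the last pre-saturation episode is tight enough to yield the target constant $8$; both are routine once the per-pair telescoping above is in place.
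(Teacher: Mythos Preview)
Your proof is correct and follows essentially the same approach as the paper: swap the summation to sum over pairs $(s,a)$ first, use monotonicity of $B^q_k$ together with the defining inequality at the last pre-saturation episode to bound the per-pair contribution by $O(H\log(HSN/\delta'))$, then sum over the $SN$ pairs. The paper is slightly terser (it jumps directly to $4H\log(HSN/\delta')+4H$ per pair without isolating the last episode's $+H$ slack), but the argument is the same.
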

\begin{proof}[Proof of Lemma~\ref{lemma:minimal_regret}]
	We have 
	\begin{align*}
		\sum_{k=1}^{K} \sum_{h=1}^{H} \sum_{(s,a) \notin B^{q}_k} v^{observe,q}_{k,h}(s,a) & = \sum_{(s,a) \in \cS \times A^{\univ} \setminus \{a_{\bot}\}} \sum_{k=1}^{K} \sum_{h=1}^{H} v^{observe,q}_{k,h}(s,a) \I \lbr{ (s,a) \notin B^{q}_k }
		\\
		& = \sum_{(s,a) \in \cS \times A^{\univ} \setminus \{a_{\bot}\}} \sum_{k=1}^{K} v^{observe,q}_{k}(s,a) \I \lbr{ (s,a) \notin B^{q}_k }
		\\
		& \overset{\textup{(a)}}{\leq} \sum_{(s,a) \in \cS \times A^{\univ} \setminus \{a_{\bot}\}} \sbr{ 4 H \log\sbr{\frac{HSN}{\delta'}} + 4 H }
		\\
		& = 4 HSA \log\sbr{\frac{HSN}{\delta'}} + 4 HSA
		\\
		& \leq 8 HSA \log\sbr{\frac{HSN}{\delta'}} ,
	\end{align*}
	where inequality (a) uses the definition of $B^{q}_k$.
	
	The second inequality in this lemma can be obtained by applying a similar analysis.
\end{proof}

\begin{lemma}[Visitation Ratio] \label{lemma:visitation_ratio}
	It holds that
	\begin{align*}
		\sum_{k=1}^{K} \sum_{h=1}^{H} \sum_{(s,a) \in B^{q}_k} \frac{v^{observe,q}_{k,h}(s,a)}{n^{k,q}(s,a)} \leq 8 SA \log\sbr{ KH } ,
		\\
		\sum_{k=1}^{K} \sum_{h=1}^{H} \sum_{(s,a) \in B^{p}_k} \frac{v^{observe,p}_{k,h}(s,a)}{n^{k,p}(s,a)} \leq 8 SA \log\sbr{ KH } .
	\end{align*}
\end{lemma}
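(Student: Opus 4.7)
The plan is to combine Lemma~\ref{lemma:suff_visitation} with the standard ``$\sum y_k/Y_k \le \log Y_K$'' telescoping bound, applied separately for each state-item pair $(s,a)$. Let me focus on the attraction sum; the transition sum follows by the identical argument with $B^p_k,v^{observe,p},n^{k,p}$ in place of $B^q_k,v^{observe,q},n^{k,q}$.

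First I would swap the order of summation and absorb the sum over $h$. Since $\sum_{h=1}^{H} v^{observe,q}_{k,h}(s,a)=v^{observe,q}_{k}(s,a)$ by definition, the left-hand side equals
\begin{align*}
\sum_{(s,a)\in\cS\times(A^{\univ}\setminus\{a_{\bot}\})} \sum_{k:\,(s,a)\in B^{q}_k} \frac{v^{observe,q}_{k}(s,a)}{n^{k,q}(s,a)}.
\end{align*}
For any $(s,a)\in B^{q}_k$, Lemma~\ref{lemma:suff_visitation} gives $n^{k,q}(s,a)\ge \tfrac{1}{4}\sum_{k'\le k} v^{observe,q}_{k'}(s,a)$. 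Writing $y_k:=v^{observe,q}_{k}(s,a)$ and $Y_k:=\sum_{k'\le k} y_{k'}$, the inner sum is upper bounded by $\sum_{k:\,(s,a)\in B^{q}_k} 4\, y_k/Y_k$.

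Next I would apply the telescoping inequality
\begin{align*}
\frac{y_k}{Y_k} \;\le\; \log\!\sbr{\frac{Y_k}{Y_{k-1}}},
\end{align*}
which follows from $\log(1+x)\ge x/(1+x)$ applied to $x=y_k/Y_{k-1}$. Letting $K_0=K_0(s,a)$ be the smallest $k$ with $(s,a)\in B^{q}_k$, the telescoping sum is $\log(Y_K/Y_{K_0-1})$. To turn this into an explicit constant, I would use two crude bounds: $Y_K\le KH$ because $y_k\le H$ (at each of the $H$ steps the attraction of a fixed $(s,a)$ is observed with probability at most $1$), and $Y_{K_0-1}\ge 4H\log(HSN/\delta')+4H\ge 1$ directly from the definition of $B^{q}_k$. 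Hence $\log(Y_K/Y_{K_0-1})\le \log(KH)$ for every $(s,a)$.

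Finally I would sum over the at most $SN$ pairs in $\cS\times(A^{\univ}\setminus\{a_{\bot}\})$, yielding $4\,SN\,\log(KH)\le 8\,SA\,\log(KH)$ as claimed. The only delicate step is verifying the lower bound $Y_{K_0-1}\ge 1$, which is where the definition of $B^{q}_k$ is used; apart from that, the proof is a clean application of Lemma~\ref{lemma:suff_visitation} and the standard logarithmic telescoping identity, so I do not anticipate any substantive obstacle.
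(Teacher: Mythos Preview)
Your proposal is correct and follows essentially the same route as the paper: swap the $h$-sum into $v^{observe,q}_{k}(s,a)$, apply Lemma~\ref{lemma:suff_visitation} to replace $n^{k,q}$ by $\tfrac{1}{4}\sum_{k'\le k}v^{observe,q}_{k'}$, and then invoke a logarithmic telescoping bound per $(s,a)$. The only difference is cosmetic: the paper outsources the last step to Lemma~13 of \citet{zanette2019tighter}, whereas you spell out the $\log(1+x)\ge x/(1+x)$ telescoping directly and use the definition of $B^{q}_k$ to guarantee $Y_{K_0-1}\ge 1$; your explicit argument in fact yields the slightly sharper constant $4$ instead of $8$.
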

\begin{proof}[Proof of Lemma~\ref{lemma:visitation_ratio}]
	We have
	\begin{align*}
		\sum_{k=1}^{K} \sum_{h=1}^{H} \sum_{(s,a) \in B^{q}_k} \frac{v^{observe,q}_{k,h}(s,a)}{n^{k,q}(s,a)} & = \sum_{k=1}^{K} \sum_{(s,a) \in B^{q}_k} \frac{v^{observe,q}_{k}(s,a)}{n^{k,q}(s,a)}
		\\
		& = \sum_{k=1}^{K} \sum_{(s,a) \in \cS \times A^{\univ} \setminus \{a_{\bot}\}} \frac{v^{observe,q}_{k}(s,a)}{n^{k,q}(s,a)} \I\lbr{ (s,a) \in B^{q}_k }
		\\
		& \leq 4 \!\!\!\!\!\! \sum_{(s,a) \in \cS \times A^{\univ} \setminus \{a_{\bot}\}} \sum_{k=1}^{K}  \frac{v^{observe,q}_{k}(s,a)}{ \sum_{k'\leq k} v^{observe,q}_{k'}(s,a) } \I\lbr{ (s,a) \in B^{q}_k }
		\\
		& \overset{\textup{(a)}}{\leq} 8 SA \log\sbr{ KH } ,
	\end{align*}
	where inequality (a) comes from Lemma 13 in \citep{zanette2019tighter}.
	
	Using a similar analysis, we can also obtain the second inequality in this lemma.
\end{proof}

\subsubsection{Optimism and Pessimism} \label{apx:optimism}

Let $L:=\log(\frac{KHSA}{\delta'})$.
For any $k>0$, $h \in [H]$ and $s \in \cS$, we define 
\begin{equation*}
	\left\{\begin{aligned}
		b^{k,q}(s,a) & :=  \min \lbr{ 2 \sqrt{\frac{\hat{q}^k(s,a) (1-\hat{q}^k(s,a)) L}{n^{k,q}(s,a)}} + \frac{5 L}{n^{k,q}(s,a)} ,\ 1 } , \ \forall a \in A^{\univ} \setminus \{a_{\bot}\} ,
		\\
		b^{k,q}(s,a_{\bot}) & := 0 ,
		\\
		b^{k,pV}(s,a) & :=\! \min \!\Bigg\{ 2 \sqrt{ \frac{\var_{s'\sim \hat{p}^k}\sbr{\bar{V}^k_{h+1}(s')} L }{n^{k,p}(s,a)} } \!+\! 2 \sqrt{ \frac{ \ex_{s'\sim\hat{p}^k} \mbr{\sbr{ \bar{V}^k_{h+1}(s')-\underline{V}^k_{h+1}(s') }^2 } \! L }{n^{k,p}(s,a)} } \\& \quad\ +  \frac{ 5H L }{n^{k,p}(s,a)} ,\ H \Bigg\} , \ \forall a \in A^{\univ} .
	\end{aligned}\right.
\end{equation*}

For any $k>0$, $h \in [H]$, $s \in \cS$ and $A \in \cA$, we define
\begin{equation*}
	\left\{\begin{aligned}
		\bar{Q}^k_h(s, A) & = \min\Bigg\{ \sum_{i=1}^{|A|} \prod_{j=1}^{i-1} (1-\bar{q}^k(s,A(j))) \bar{q}^k(s,A(i)) \cdot \\& \quad \sbr{r(s,A(i))+\hat{p}^k(\cdot|s,A(i))^\top \bar{V}^k_{h+1} + b^{k,pV}(s,A(i))} ,\ H \Bigg\}
		\\
		\pi^k_h(s) & =\argmax_{A \in \cA} \bar{Q}^k_h(s, A) ,
		\\
		\bar{V}^k_h(s) & = \bar{Q}^k_h(s, \pi^k_h(s)) ,
		\\
		\bar{V}^k_{H+1}(s) & = 0 ,
	\end{aligned}\right.
\end{equation*}
\begin{equation*}
	\left\{\begin{aligned}
		\underline{Q}^k_h(s,A) & = \max\Bigg\{ \sum_{i=1}^{|A|} \prod_{j=1}^{i-1} (1-\bar{q}^k(s,A(j))) \underline{q}^k(s,A(i)) \cdot \\& \quad \sbr{r(s,A(i))+\hat{p}^k(\cdot|s,A(i))^\top \underline{V}^k_{h+1} - b^{k,pV}(s,A(i))} ,\ 0 \Bigg\} ,
		\\
		\underline{V}^k_h(s) & = \underline{Q}^k_h(s, \pi^k_h(s)) ,
		\\
		\underline{V}^k_{H+1}(s) & = 0 .
	\end{aligned}\right.
\end{equation*}

We first prove the monotonicity of $f$, which will be used in the proof of optimism.

\begin{lemma}[Monotonicity of $f$] \label{lemma:monotonicity}
	For any $A=(a_1,\dots,a_n,a_{\bot}) \in \cA$, $w: A^{\univ} \mapsto \R$ and $\bar{u},u: A^{\univ} \mapsto [0,1]$ such that $1\leq n \leq m$, $w(a_1) \geq \dots \geq w(a_n) \geq w(a_{\bot})$ and $\bar{u}(a) \geq u(a)$ for any $a \in A$, we have
	\begin{align*}
		f(A,\bar{u},w) \geq f(A,u,w) .
	\end{align*}
\end{lemma}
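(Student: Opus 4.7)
The plan is to reduce the monotonicity claim to a transparent sign argument by rewriting $f$ via Abel summation. Setting $P_i(u) := \prod_{j=1}^{i-1}(1 - u(a_j))$ with $P_1(u) = 1$ and identifying $a_{n+1} := a_\bot$, the identity $P_i(u)\, u(a_i) = P_i(u) - P_{i+1}(u)$ telescopes the definition of $f$. Because $u(a_\bot) = 1$ forces the boundary term $P_{n+2}(u) = 0$, the rearrangement should collapse to
\[
f(A,u,w) \;=\; w(a_1) \;+\; \sum_{i=2}^{n+1} P_i(u)\, \bigl( w(a_i) - w(a_{i-1}) \bigr) ,
\]
which makes the dependence on $u$ live entirely in the factors $P_i(u)$.

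Given this representation, the claim follows from pointwise sign checking. The ordering hypothesis $w(a_1) \geq \cdots \geq w(a_n) \geq w(a_\bot)$ ensures each increment $w(a_i) - w(a_{i-1})$ for $2 \leq i \leq n+1$ is non-positive (the condition $w(a_n) \geq w(a_\bot)$ is exactly what handles the final increment involving $a_\bot$). On the other hand, $P_i(u)$ depends only on $u(a_1), \dots, u(a_{i-1})$ through a product of factors of the form $1 - u(a_j)$, so it is coordinate-wise non-increasing in $u$. Replacing $u$ by $\bar u \geq u$ therefore weakly decreases every $P_i$; multiplying a non-positive coefficient by a smaller non-negative factor produces a larger value, so each summand is non-decreasing under the replacement, while the constant term $w(a_1)$ is unchanged. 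Summing over $i$ delivers $f(A,\bar u,w) \geq f(A,u,w)$.

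The only bookkeeping risk is the treatment of the virtual item: one must extend the indexing so that $a_{n+1} = a_\bot$ takes part in the telescoping, and one must use $u(a_\bot) = 1$ to kill the upper boundary term $P_{n+2}(u)\, w(a_{n+1})$; otherwise a spurious residual would survive. Once that setup is written down cleanly, no further work is needed. As a backup, one could argue directly by backward induction on $R_i(u) := u(a_i) w(a_i) + (1 - u(a_i)) R_{i+1}(u)$, first showing $R_{i+1}(u) \leq w(a_i)$ from the descending order in order to conclude $\partial R_i / \partial u(a_i) = w(a_i) - R_{i+1}(u) \geq 0$ and then chaining coordinate-wise monotonicity from $i = n$ down to $i = 1$. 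The Abel form is shorter and avoids the nested induction, so I would present it as the main route.
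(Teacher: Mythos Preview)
Your Abel-summation argument is correct and genuinely different from the paper's proof. The identity $P_i(u)\,u(a_i)=P_i(u)-P_{i+1}(u)$ together with $u(a_\bot)=1$ indeed yields
\[
f(A,u,w)=w(a_1)+\sum_{i=2}^{n+1}P_i(u)\bigl(w(a_i)-w(a_{i-1})\bigr),
\]
and from there the descending order of $w$ makes every increment non-positive while the hypothesis $\bar u\geq u$ (with all factors in $[0,1]$) makes every $P_i$ non-increasing in $u$, so the conclusion is immediate.

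The paper instead proceeds by forward induction on the prefix length $k$, maintaining an explicit lower bound on $f((a_1,\dots,a_k),\bar u,w)-f((a_1,\dots,a_k),u,w)$ of the form $\sum_\ell (\bar u(a_\ell)-u(a_\ell))\cdot[\text{non-negative product}]\cdot w(a_k)$; the inductive step requires a careful telescoping expansion of the additional term and uses $w(a_k)\geq w(a_{k+1})$ to propagate the bound. Your route isolates the dependence on $u$ in the tail products $P_i$ and the dependence on $w$ in the successive differences, so both the role of the ordering hypothesis and the monotonicity become one-line observations; the paper's route is longer but yields an explicit non-negative decomposition of the gap $f(A,\bar u,w)-f(A,u,w)$, which could in principle be quantitatively useful though it is not exploited downstream. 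Your backup backward-recursion sketch is also valid and closer in spirit to the paper's argument, but the Abel form is the cleanest presentation.
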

\begin{proof}[Proof of Lemma~\ref{lemma:monotonicity}]
	In the following, we make the convention $a_{n+1}=a_{\bot}$ and prove $f((a_1,\dots,a_k),\bar{u},w) \geq f((a_1,\dots,a_k),u,w)$ for $k=1,2,\dots,n+1$ by induction.
	
	Then, it suffices to prove that for $k=1,2,\dots,n+1$,	
	\begin{align}
		& \sum_{i=1}^{k} \prod_{j=1}^{i-1} (1-\bar{u}(a_{j})) \bar{u}(a_{i}) w(a_{i}) - \sum_{i=1}^{k} \prod_{j=1}^{i-1} (1-u(a_{j})) u(a_{i}) w(a_{i}) 
		\nonumber\\
		\geq\ & (1-\bar{u}(a_1))\cdots(1-\bar{u}(a_{k-1}))(\bar{u}(a_k)-u(a_k))w(a_k) 
		\nonumber\\ & + (1-\bar{u}(a_1))\cdots(1-\bar{u}(a_{k-2}))(\bar{u}(a_{k-1})-u(a_{k-1}))(1-u(a_{k}))w(a_k)  + \dots 
		\nonumber\\ & + (\bar{u}(a_1)-u(a_1)) (1-u(a_2))\cdots(1-u(a_k))w(a_k) , \label{eq:induction}
	\end{align}
	since the right-hand side of the above equation is nonnegative.

	First, for $k=1$, we have 
	\begin{align*}
		\bar{u}(a_1) w(a_1) - u(a_1) w(a_1) =  \sbr{\bar{u}(a_1) - u(a_1)} w(a_1) ,
	\end{align*}
	and thus Eq.~\eqref{eq:induction} holds.
	
	Then, for any $1 \leq k \leq n$, supposing that Eq.~\eqref{eq:induction} holds for $k$, we prove that it also holds for $k+1$.
	\begin{align}
		& \sum_{i=1}^{k+1} \prod_{j=1}^{i-1} (1-\bar{u}(a_{j})) \bar{u}(a_{i}) w(a_{i}) - \sum_{i=1}^{k+1} \prod_{j=1}^{i-1} (1-u(a_{j})) u(a_{i}) w(a_{i}) 
		\nonumber\\
		\geq\ & \sum_{i=1}^{k} \prod_{j=1}^{i-1} (1-\bar{u}(a_{j})) \bar{u}(a_{i}) w(a_{i}) - \sum_{i=1}^{k} \prod_{j=1}^{i-1} (1-u(a_{j})) u(a_{i}) w(a_{i}) 
		\nonumber\\ & \!+\! \underbrace{(1 \!-\! \bar{u}(a_{1}))\cdots(1 \!-\! \bar{u}(a_{k}))\bar{u}(a_{k+1}) w(a_{k+1}) \!-\! (1 \!-\!u(a_{1}))\cdots(1 \!-\! u(a_{k})) u(a_{k+1}) w(a_{k+1}) }_{\textup{Term $D$}}
		\label{eq:induction_step} .
	\end{align}
	
	Here, we have
	\begin{align}
		\textup{Term $D$} =\ & (1-\bar{u}(a_{1}))\cdots(1-\bar{u}(a_{k}))\bar{u}(a_{k+1}) w(a_{k+1}) 
		\nonumber\\&
		- (1-\bar{u}(a_{1}))\cdots(1-\bar{u}(a_{k}))u(a_{k+1}) w(a_{k+1}) 
		\nonumber\\&
		+ (1-\bar{u}(a_{1}))\cdots(1-\bar{u}(a_{k}))u(a_{k+1}) w(a_{k+1})  
		\nonumber\\&
		- (1-u(a_{1}))\cdots(1-u(a_{k})) u(a_{k+1}) w(a_{k+1})
		\nonumber\\
		=\ & (1-\bar{u}(a_{1}))\cdots(1-\bar{u}(a_{k}))\sbr{\bar{u}(a_{k+1})-u(a_{k+1})} w(a_{k+1})
		\nonumber\\
		& +  \Big[(1-\bar{u}(a_{1}))\cdots(1-\bar{u}(a_{k}))  
		- (1-u(a_{1}))\cdots(1-u(a_{k})) \Big] u(a_{k+1}) w(a_{k+1})
		\nonumber\\
		=\ & (1-\bar{u}(a_{1}))\cdots(1-\bar{u}(a_{k}))\sbr{\bar{u}(a_{k+1})-u(a_{k+1})} w(a_{k+1})
		\nonumber\\
		& +  \Big[(1-\bar{u}(a_{1}))\cdots(1-\bar{u}(a_{k})) 
		\nonumber\\&- (1-\bar{u}(a_{1}))\cdots(1-\bar{u}(a_{k-1})(1-u(a_{k})) 
		\nonumber\\&+ (1-\bar{u}(a_{1}))\cdots(1-\bar{u}(a_{k-1})(1-u(a_{k}))
		\nonumber\\&- (1-u(a_{1}))\cdots(1-u(a_{k})) \Big] u(a_{k+1}) w(a_{k+1})
		\nonumber\\
		=\ & (1-\bar{u}(a_{1}))\cdots(1-\bar{u}(a_{k}))\sbr{\bar{u}(a_{k+1})-u(a_{k+1})} w(a_{k+1})
		\nonumber\\
		& +   (1-\bar{u}(a_{1}))\cdots(1-\bar{u}(a_{k-1})(u(a_{k})-\bar{u}(a_{k})) u(a_{k+1}) w(a_{k+1})
		\nonumber\\&+ \Big[ (1-\bar{u}(a_{1}))\cdots(1-\bar{u}(a_{k-1}) - (1-u(a_{1}))\cdots(1-u(a_{k-1})) \Big] \cdot \nonumber\\& (1-u(a_{k})) u(a_{k+1}) w(a_{k+1})
		\nonumber\\
		=\ & (1-\bar{u}(a_{1}))\cdots(1-\bar{u}(a_{k}))\sbr{\bar{u}(a_{k+1})-u(a_{k+1})} w(a_{k+1})
		\nonumber\\
		& +   (1-\bar{u}(a_{1}))\cdots(1-\bar{u}(a_{k-1})(u(a_{k})-\bar{u}(a_{k})) u(a_{k+1}) w(a_{k+1})
		\nonumber\\
		& + 
		(1-\bar{u}(a_{1}))\cdots(1-\bar{u}(a_{k-2})(u(a_{k-1})-\bar{u}(a_{k-1}))(1-u(a_{k})) u(a_{k+1}) w(a_{k+1})
		\nonumber\\
		& + \dots
		\nonumber\\
		& + (u(a_{1})-\bar{u}(a_{1})) (1-u(a_{2}))\cdots(1-u(a_{k})) u(a_{k+1}) w(a_{k+1}) . \label{eq:term_b}
	\end{align}
	
	Plugging Eq.~\eqref{eq:term_b} and the induction hypothesis into Eq.~\eqref{eq:induction_step}, we have
	\begin{align*}
		& \sum_{i=1}^{k+1} \prod_{j=1}^{i-1} (1-\bar{u}(a_{j})) \bar{u}(a_{i}) w(a_{i}) - \sum_{i=1}^{k+1} \prod_{j=1}^{i-1} (1-u(a_{j})) u(a_{i}) w(a_{i}) 
		\nonumber\\
		\geq\ &
		(1-\bar{u}(a_{1}))\cdots(1-\bar{u}(a_{k}))\sbr{\bar{u}(a_{k+1})-u(a_{k+1})} w(a_{k+1})
		\nonumber\\
		& +   (1-\bar{u}(a_{1}))\cdots(1-\bar{u}(a_{k-1})(\bar{u}(a_{k})-u(a_{k})) \Big[ w(a_{k}) - u(a_{k+1}) w(a_{k+1}) \Big]
		\nonumber\\
		& + 
		(1-\bar{u}(a_{1}))\cdots(1-\bar{u}(a_{k-2})(\bar{u}(a_{k-1})-u(a_{k-1}))(1-u(a_{k})) \Big[ w(a_{k}) - u(a_{k+1}) w(a_{k+1}) \Big]
		\nonumber\\
		& + \dots
		\nonumber\\
		& + (\bar{u}(a_{1})-u(a_{1})) (1-u(a_{2}))\cdots(1-u(a_{k})) \Big[ w(a_{k}) - u(a_{k+1}) w(a_{k+1}) \Big]
		\nonumber\\
		\geq\ &
		(1-\bar{u}(a_{1}))\cdots(1-\bar{u}(a_{k}))\sbr{\bar{u}(a_{k+1})-u(a_{k+1})} w(a_{k+1})
		\nonumber\\
		& +   (1-\bar{u}(a_{1}))\cdots(1-\bar{u}(a_{k-1})(\bar{u}(a_{k})-u(a_{k})) (1-u(a_{k+1})) w(a_{k+1})
		\nonumber\\
		& + 
		(1-\bar{u}(a_{1}))\cdots(1-\bar{u}(a_{k-2})(\bar{u}(a_{k-1})-u(a_{k-1}))(1-u(a_{k})) (1-u(a_{k+1})) w(a_{k+1})
		\nonumber\\
		& + \dots
		\nonumber\\
		& + (\bar{u}(a_{1})-u(a_{1})) (1-u(a_{2}))\cdots(1-u(a_{k})) (1-u(a_{k+1})) w(a_{k+1}) .
	\end{align*}
	Therefore, Eq.~\eqref{eq:induction} holds for $k+1$, and we complete the proof.
\end{proof}

Now we prove the optimism of $\bar{V}^k_h(s)$ and pessimism of $\underline{V}^k_h(s)$.

\begin{lemma}[Optimism and Pessimism]\label{lemma:optimism_pessimism}
	Assume that event $\cE \cap \cF \cap \cG$ holds. Then, for any $k \in [K]$, $h \in [H]$ and $s \in \cS$,
	\begin{align*}
		\bar{V}^k_h(s) \geq V^{*}_h(s) \geq \underline{V}^k_h(s).
	\end{align*}
	In addition, for any $k \in [K]$, $h \in [H]$ and $(s,a) \in \cS \times A^{\univ}$, it holds that
	\begin{align*}
		& \abr{ \sbr{\hat{p}^k(\cdot|s,a) - p(\cdot|s,a)}^\top V^*_{h+1} } \leq 2 \sqrt{ \frac{\var_{s'\sim \hat{p}^k }\sbr{\bar{V}^k_{h+1}(s')} \log\sbr{ \frac{KHSA}{\delta'} } }{n^{k,p}(s,a)} } \\& + 2 \sqrt{ \frac{\ex_{s'\sim\hat{p}^k} \mbr{\sbr{ \bar{V}^k_{h+1}(s')-\underline{V}^k_{h+1}(s') }^2 } \log\sbr{ \frac{KHSA}{\delta'} } }{n^{k,p}(s,a)} } + \frac{5 H \log\sbr{ \frac{KHSA}{\delta'} } }{n^{k,p}(s,a)} .
	\end{align*}
\end{lemma}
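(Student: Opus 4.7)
The plan is to prove the statement by backward induction on $h$ from $H+1$ down to $1$, simultaneously establishing the three-way sandwich $\bar{V}^k_h(s) \geq V^*_h(s) \geq \underline{V}^k_h(s)$ and the variance-based concentration bound on $(\hat{p}^k - p)^\top V^*_{h+1}$. The base case $h = H+1$ is immediate since all three value functions vanish by definition.

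For the inductive step, assume the sandwich holds at step $h+1$. Under event $\cF \cap \cG$, this inductive hypothesis lets us invoke Lemma~\ref{lemma:con_var}, which directly yields the claimed bound on $|(\hat{p}^k - p)^\top V^*_{h+1}|$. Comparing this bound with the definition of $b^{k,pV}(s,a)$ (before the $H$-cap), we obtain $|(\hat{p}^k(\cdot|s,a) - p(\cdot|s,a))^\top V^*_{h+1}| \leq b^{k,pV}(s,a)$. Combined with $\hat{p}^k(\cdot|s,a)^\top \bar{V}^k_{h+1} \geq \hat{p}^k(\cdot|s,a)^\top V^*_{h+1}$ from the induction, this gives $\bar{w}^k(s,a) \geq w^*(s,a)$, where $w^*(s,a) := r(s,a) + p(\cdot|s,a)^\top V^*_{h+1}$; an analogous estimate using $\underline{V}^k_{h+1} \leq V^*_{h+1}$ shows $\underline{w}^k(s,a) := r(s,a) + \hat{p}^k(\cdot|s,a)^\top \underline{V}^k_{h+1} - b^{k,pV}(s,a) \leq w^*(s,a)$. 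Event $\cE$ (Lemma~\ref{lemma:con_q}) likewise yields $\underline{q}^k(s,a) \leq q(s,a) \leq \bar{q}^k(s,a)$ for every regular item, while all three quantities equal $1$ at $a_\bot$ by construction.

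For optimism, let $A^*$ denote the optimal action at $(s,h)$ under the true MDP, so $V^*_h(s) = f(A^*, q, w^*)$. By Lemma~\ref{lemma:f_property}~(i), the items of $A^*$ are sorted by $w^*$ descending, and by Lemma~\ref{lemma:f_property}~(ii) they all satisfy $w^*(a) \geq w^*(a_\bot)$, which is exactly the sortedness hypothesis required to invoke Lemma~\ref{lemma:monotonicity}. I chain
\[
V^*_h(s) = f(A^*, q, w^*) \leq f(A^*, \bar{q}^k, w^*) \leq f(A^*, \bar{q}^k, \bar{w}^k) \leq \max_{A \in \cA} f(A, \bar{q}^k, \bar{w}^k),
\]
where the first inequality is Lemma~\ref{lemma:monotonicity} with $\bar{u} = \bar{q}^k \geq q = u$, the second uses linearity of $f$ in $w$ with non-negative coefficients combined with $\bar{w}^k \geq w^*$, and the third is trivial. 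Since $V^*_h(s) \leq H$, the $\min\{\cdot, H\}$ cap on $\bar{V}^k_h(s)$ preserves the bound, yielding $\bar{V}^k_h(s) \geq V^*_h(s)$.

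For pessimism, write $A' := \pi^k_h(s)$. By suboptimality under the true MDP, $V^*_h(s) \geq Q^*_h(s, A') = f(A', q, w^*)$, so it suffices to bound the algorithm's value $\sum_i P'_i \underline{w}^k(A'(i))$, where $P'_i := \prod_{j<i}(1-\bar{q}^k(A'(j)))\,\underline{q}^k(A'(i))$, by $\sum_i P_i w^*(A'(i))$ where $P_i := \prod_{j<i}(1-q(A'(j)))\,q(A'(i))$. Using $\bar{q}^k \geq q$ and $\underline{q}^k \leq q$, one has $0 \leq P'_i \leq P_i$. A termwise check then suffices: when $\underline{w}^k(A'(i)) \geq 0$, we get $P'_i \underline{w}^k(A'(i)) \leq P_i \underline{w}^k(A'(i)) \leq P_i w^*(A'(i))$ via $\underline{w}^k \leq w^*$; when $\underline{w}^k(A'(i)) < 0$, we get $P'_i \underline{w}^k(A'(i)) \leq 0 \leq P_i w^*(A'(i))$ using $w^* \geq 0$. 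Summing and combining with $V^*_h(s) \geq 0$ gives $\underline{V}^k_h(s) \leq V^*_h(s)$. The main obstacle is the optimism chain's interaction with the permutation orderings: $A^*$ is sorted by $w^*$ rather than by $\bar{w}^k$, so Lemma~\ref{lemma:monotonicity} can only be invoked thanks to Lemma~\ref{lemma:f_property}~(ii) guaranteeing $w^*(a) \geq w^*(a_\bot)$ for every item of $A^*$, and the reweighting $w^* \mapsto \bar{w}^k$ must be performed before passing to the max over $\cA$, which is why the third inequality in the chain, rather than an early maximization, does the heavy lifting.
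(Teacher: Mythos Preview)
Your proof is correct and follows essentially the same approach as the paper's: backward induction on $h$, invoking Lemma~\ref{lemma:con_var} under the inductive hypothesis to obtain the concentration bound (hence $\bar{w}^k \geq w^* \geq \underline{w}^k$), then Lemma~\ref{lemma:monotonicity} for the optimism step and a direct termwise comparison for pessimism. You are in fact slightly more explicit than the paper in two places---you spell out the sign-case split needed for pessimism (using $w^*(A'(i))\geq 0$), and you note that Lemma~\ref{lemma:monotonicity} requires not only that $A^*$ be sorted by $w^*$ but also that $w^*(a)\geq w^*(a_\bot)$ for each item, which the paper's proof leaves implicit.
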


\begin{proof}[Proof of Lemma~\ref{lemma:optimism_pessimism}]
	We prove this lemma by induction.
	
	For any $k \in [K]$ and $s \in \cS$, it holds that $\bar{V}^k_{H+1}(s) = V^{*}_{H+1}(s)= \underline{V}^k_{H+1}(s)=0$.
	
	First, we prove optimism.
	For any $k \in [K]$ and $h \in [H]$, if $\bar{V}^k_h(s)=H$,  then $\bar{V}^k_h(s) \geq V^{*}_h(s)$ trivially holds. Otherwise, supposing $\bar{V}^k_{h+1}(s') \geq V^{*}_{h+1}(s') \geq \underline{V}^k_{h+1}(s')$ for any $s' \in \cS$, we have
	\begin{align*}
		\bar{V}^k_h(s) &=  \bar{Q}^k_h(s,\pi^k_h(s)) 
		\\
		&\geq 
		\bar{Q}^k_h(s,A^{*})
		\\ 
		&=   \sum_{i=1}^{|A^{*}|} \prod_{j=1}^{i-1} (1-\bar{q}^k(s,A^{*}(j))) \bar{q}^k(s,A^{*}(i)) \bigg( r(s,A^{*}(i))+\hat{p}^k(\cdot|s,A^{*}(i))^\top \bar{V}^k_{h+1} 
		\\& \quad +  b^{k,pV}(s,A^{*}(i)) \bigg)
		\\
		&\overset{\textup{(a)}}{\geq}  \sum_{i=1}^{|A^{*}|} \prod_{j=1}^{i-1} (1-\bar{q}^k(s,A^{*}(j))) \bar{q}^k(s,A^{*}(i)) \bigg( r(s,A^{*}(i))+  \hat{p}^k(\cdot|s,A^{*}(i))^\top V^{*}_{h+1} 
		\\& \quad +  b^{k,pV}(s,A^{*}(i)) \bigg)
		\\
		&\overset{\textup{(b)}}{\geq}  \sum_{i=1}^{|A^{*}|} \prod_{j=1}^{i-1} (1-\bar{q}^k(s,A^{*}(j))) \bar{q}^k(s,A^{*}(i)) \bigg(r(s,A^{*}(i))+ p(\cdot|s,A^{*}(i))^\top V^{*}_{h+1}  \bigg)
		\\
		&\overset{\textup{(c)}}{\geq}  \sum_{i=1}^{|A^{*}|} \prod_{j=1}^{i-1} (1-q(s,A^{*}(j))) q(s,A^{*}(i)) \bigg(r(s,A^{*}(i))+ p(\cdot|s,A^{*}(i))^\top V^{*}_{h+1}  \bigg)
		\\
		&=  Q^{*}_h(s,A^{*}) .
		\\
		&=  V^{*}_h(s) .
	\end{align*}
	Here $A^{*}:=\argmax_{A \in \cA} \sum_{i=1}^{|A|} \prod_{j=1}^{i-1} (1-q(s,A(j))) q(s,A(i)) (r(s,A(i))+ p(\cdot|s,A(i))^\top V^{*}_{h+1} )$. Inequality (a) uses the induction hypothesis. Inequality (b) applies the second statement of Lemma~\ref{lemma:con_var} with the induction hypothesis. Inequality (c) follows from Lemma~\ref{lemma:monotonicity} and the fact that the optimal permutation $A^{*}$ satisfies that the items in $A^{*}$ are ranked in descending order of $w(a):=r(s,a)+ p(\cdot|s,a)^\top V^{*}_{h+1}$.

	Next, we prove pessimism.
	For any $k \in [K]$ and $h \in [H]$, if $\underline{V}^k_h(s)=0$, then $\underline{V}^k_h(s) \leq V^{*}_h(s)$ trivially holds. Otherwise, supposing $\bar{V}^k_{h+1}(s') \geq V^{*}_{h+1}(s') \geq \underline{V}^k_{h+1}(s')$ for any $s' \in \cS$, we have
	\begin{align*}
		\underline{Q}^k_h(s,A) &=  \sum_{i=1}^{|A|} \prod_{j=1}^{i-1} (1-\bar{q}^k(s,A(j))) \underline{q}^k(s,A(i)) \cdot
		\\& \quad \sbr{r(s,A(i))+\hat{p}^k(\cdot|s,A(i))^\top \underline{V}^k_{h+1} - b^{k,pV}(s,A(i))}	
		\\
		&\overset{\textup{(a)}}{\leq}  \sum_{i=1}^{|A|} \prod_{j=1}^{i-1} (1-\bar{q}^k(s,A(j))) \underline{q}^k(s,A(i)) \cdot
		\\& \quad \sbr{r(s,A(i))+\hat{p}^k(\cdot|s,A(i))^\top V^{*}_{h+1} - b^{k,pV}(s,A(i))}
		\\
		&\leq  \sum_{i=1}^{|A|} \prod_{j=1}^{i-1} (1-q(s,A(j))) q(s,A(i)) \sbr{ r(s,A(i))+p(\cdot|s,A(i))^\top V^{*}_{h+1} }
		\\
		&=  Q^{*}_h(s, A) ,
	\end{align*}
	where inequality (a) uses the induction hypothesis.
	
	Then, we have
	$$
	\underline{V}^k_h(s) = \underline{Q}^k_h(s,\pi^k_h(s)) \leq Q^{*}_h(s, \pi^k_h(s)) \leq Q^{*}_h(s, A^*) = V^{*}_h(s) ,
	$$
	which completes the proof of the first statement.
	
	Combining the first statement and Lemma~\ref{lemma:con_var}, we obtain the second statement.
\end{proof}

\subsubsection{Second Order Term}

\begin{lemma}[Gap between Optimism and Pessimism] \label{lemma:V_var_minus_V_underline}
	For any $k>0$, $h \in [H]$ and $s \in \cS$,
	\begin{align*}
		\bar{V}^k_h(s)-\underline{V}^k_h(s) \leq & \sum_{t=h}^{H} \ex_{(s_t,A_t)\sim \pi^k} \Bigg[ \sum_{i=1}^{|A_t|-1} \prod_{j=1}^{i-1} (1-q^k(s_t,A_t(j))) \frac{66H \sqrt{L}}{\sqrt{n^{k,q}(s_t,A_t(i))}}
		\\ & + \sum_{i=1}^{|A_t|} \prod_{j=1}^{i-1} (1-q^k(s_t,A_t(j))) q(s_t,A_t(i))  \frac{20HL\sqrt{S}}{\sqrt{n^{k,p}(s_t,A_t(i))}} \Big| s_h=s,\pi^k \Bigg] .
	\end{align*}
\end{lemma}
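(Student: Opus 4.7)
The plan is to derive the bound by backward induction on $h$, using a one-step expansion of $\bar V^k_h(s) - \underline V^k_h(s)$ at the greedy action $A = \pi^k_h(s)$. First I drop the $\min\{\cdot,H\}$ and $\max\{\cdot,0\}$ clips (they only shrink the gap) and get
$$\bar V^k_h(s) - \underline V^k_h(s) \leq \sum_{i=1}^{|A|}\prod_{j=1}^{i-1}\bigl(1-\bar q^k(s,A(j))\bigr)\bigl[\bar q^k_i \bar w_i - \underline q^k_i \underline w_i\bigr],$$
where $\bar w_i := r(s,A(i)) + \hat p^k(\cdot|s,A(i))^\top \bar V^k_{h+1} + b^{k,pV}(s,A(i))$ and $\underline w_i$ is defined analogously with $\underline V$ and $-b^{k,pV}$. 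The algebraic core is the identity
$$\bar q^k_i \bar w_i - \underline q^k_i \underline w_i = 2 b^{k,q}(s,A(i))\,\underline w_i + \bar q^k_i\bigl(\hat p^k(\cdot|s,A(i))^\top (\bar V^k_{h+1} - \underline V^k_{h+1}) + 2 b^{k,pV}(s,A(i))\bigr),$$
obtained from $\bar q^k - \underline q^k = 2 b^{k,q}$ and $\bar w - \underline w = \hat p^k(\bar V - \underline V) + 2 b^{k,pV}$. This splits the one-step gap into a $q$-uncertainty term, a $p$-uncertainty bonus term, and a propagation term that serves as the inductive payload.

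I then bound the two bonus contributions in closed form. For the $q$-uncertainty term, $\underline w_i \leq H$ and the definition of $b^{k,q}$ yield $2 b^{k,q}(s,A(i))\,\underline w_i \leq O(H\sqrt L)/\sqrt{n^{k,q}(s,A(i))}$, after absorbing the $L/n^{k,q}$ higher-order term into the constant (when $n^{k,q} \geq L$; otherwise the gap is trivially $\leq H$). Since $b^{k,q}(s,a_\bot)=0$, this piece vanishes at $i=|A|$, explaining the upper limit $|A_t|-1$ in the first sum. For the $b^{k,pV}$ piece I use its definition with $\bar V^k_{h+1}, \underline V^k_{h+1} \in [0,H]$ to get $2 b^{k,pV}(s,A(i)) \leq O(H\sqrt L)/\sqrt{n^{k,p}(s,A(i))}$, and I replace $\bar q^k_i$ by the true $q(s,A(i))$ via $\bar q^k_i \leq q_i + 2 b^{k,q}_i$ on event $\cE$; any cross-term is a product of two bonuses and gets folded into the first sum.

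For the propagation term $\bar q^k_i\,\hat p^k(\cdot|s,A(i))^\top(\bar V^k_{h+1} - \underline V^k_{h+1})$, I split $\hat p^k = p + (\hat p^k - p)$. The deviation piece $|(\hat p^k_i - p_i)^\top (\bar V - \underline V)| \leq \|\hat p^k_i - p_i\|_1 \cdot H \leq O(H\sqrt{SL})/\sqrt{n^{k,p}(s,A(i))}$ on event $\cF$ (Eq.~\eqref{eq:event_con_p_ell_1}) yields the $\sqrt S$ factor visible in the second sum of the statement. The remaining piece $\bar q^k_i\, p(\cdot|s,A(i))^\top (\bar V^k_{h+1} - \underline V^k_{h+1})$, after a further swap $\bar q^k_i \to q_i$, equals $q(s,A(i))\,\ex_{s' \sim p(\cdot|s,A(i))}[\bar V^k_{h+1}(s') - \underline V^k_{h+1}(s')]$. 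Applying the inductive hypothesis at level $h+1$ and the tower property telescopes the recursion into the stated single expectation $\ex_{(s_t,A_t)\sim\pi^k}[\cdot \mid s_h=s]$ summed over $t = h,\dots,H$, with $\prod_{j<i}(1-q^k_j)\,q_i$ interpreting as the probability of clicking item $i$ at step $t$.

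The hardest part will be keeping the constants honest as one repeatedly swaps the estimated $\bar q^k$ (which appears naturally in the value decomposition) for the true $q$ (which appears in the statement's products and expectation): each swap leaves behind a $b^{k,q}$-sized residual that must be absorbed into the existing first sum. Tracking these cascaded substitutions, together with the $\ell_1$-conversion from $\hat p^k$ to $p$ inside the propagation term, is what inflates the leading constants to $66$ and $20$ and forces the $H$ factor on the $q$-bonus term even though $\bar q^k - \underline q^k$ itself is only $O(\sqrt{L/n^{k,q}})$.
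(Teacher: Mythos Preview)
Your proposal is correct and follows essentially the same route as the paper: drop the clips, algebraically split $\bar q^k\bar w - \underline q^k\underline w$ into a $b^{k,q}$-piece and a propagation piece, bound the bonuses and the $(\hat p^k-p)$ deviation via the $\ell_1$ concentration (Eq.~\eqref{eq:event_con_p_ell_1}), replace $\bar q^k$ by the true $q$ (note the product swap $(1-\bar q^k)\to(1-q)$ is one-sided since $\bar q^k\ge q$ and needs no residual), and telescope. The only cosmetic difference is that the paper writes the split as $\underline q^k(\bar w-\underline w)+2b^{k,q}\bar w$ rather than your $\bar q^k(\bar w-\underline w)+2b^{k,q}\underline w$, so its propagation coefficient $\underline q^k\le q$ swaps without any residual; both variants land on the same constants.
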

\begin{proof}[Proof of Lemma~\ref{lemma:V_var_minus_V_underline}]
	Let $A'=\pi^k_h(s)$.
	Recall that
	\begin{align*}
		\bar{V}^k_h(s) &\leq  \sum_{i=1}^{|A'|} \prod_{j=1}^{i-1} (1-\bar{q}^k(s,A'(j))) \bar{q}^k(s,A'(i)) \cdot
		\\& \quad \sbr{r(s,A'(i))+\hat{p}^k(\cdot|s,A'(i))^\top \bar{V}^k_{h+1} + b^{k,pV}(s,A'(i))} ,
	\end{align*}
	and
	\begin{align*}
		\underline{V}^k_h(s) &\geq \sum_{i=1}^{|A'|} \prod_{j=1}^{i-1} (1-\bar{q}^k(s,A'(j))) \underline{q}^k(s,A'(i)) \cdot
		\\& \quad \sbr{r(s,A'(i))+\hat{p}^k(\cdot|s,A'(i))^\top \underline{V}^k_{h+1} - b^{k,pV}(s,A'(i))} .
	\end{align*}
	
	Then, we have
	\begin{align*}
		\bar{V}^k_h(s)-\underline{V}^k_h(s) \leq & \sum_{i=1}^{|A'|} \prod_{j=1}^{i-1} (1-\bar{q}^k(s,A'(j))) \cdot \bigg( 2 r(s,A'(i)) b^{k,q}(s,A'(i)) \\& + \sbr{\underline{q}^k(s,A'(i))+2b^{k,q}(s,A'(i))} \sbr{\hat{p}^k(\cdot|s,A'(i))^\top \bar{V}^k_{h+1} + b^{k,pV}(s,A'(i))}
		\\
		& - \underline{q}^k(s,A'(i)) \sbr{\hat{p}^k(\cdot|s,A'(i))^\top \underline{V}^k_{h+1} - b^{k,pV}(s,A'(i))} \bigg)
		\\
		\leq & \sum_{i=1}^{|A'|} \prod_{j=1}^{i-1} (1-q^k(s,A'(j))) \cdot \bigg( q(s,A'(i)) \Big( \hat{p}^k(\cdot|s,A'(i))^\top \bar{V}^k_{h+1} 
		\\& - \hat{p}^k(\cdot|s,A'(i))^\top \underline{V}^k_{h+1}  + 2b^{k,pV}(s,A'(i)) \Big) + 6H b^{k,q}(s,A'(i)) \bigg)
		\\
		= & 6H \sum_{i=1}^{|A'|-1} \prod_{j=1}^{i-1} (1-q^k(s,A'(j))) \sbr{ 2 \sqrt{\frac{\hat{q}^k(s,a) (1-\hat{q}^k(s,a)) L}{n^{k,q}(s,a)}} + \frac{5 L}{n^{k,q}(s,a)} }
		\\& 
		+ 2 \sum_{i=1}^{|A'|} \prod_{j=1}^{i-1} (1-q^k(s,A'(j))) q(s,A'(i)) \cdot \Bigg( 2\sqrt{\frac{\var_{s'\sim\hat{p}^k}\sbr{\bar{V}^k_{h+1}(s')} L }{n^{k,p}(s,A'(i))}} 
		\\& + 2\sqrt{\frac{\ex_{s'\sim\hat{p}^k} \mbr{\sbr{ \bar{V}^k_{h+1}(s')-\underline{V}^k_{h+1}(s') }^2 } L }{n^{k,p}(s,A'(i))}} + \frac{5 H L}{n^{k,p}(s,A'(i))} \Bigg)
		\\
		& + \sum_{i=1}^{|A'|} \prod_{j=1}^{i-1} (1-q^k(s,A'(j))) q(s,A'(i))
		\cdot
		\\& \quad \sbr{\hat{p}^k(\cdot|s,A'(i))-p(\cdot|s,A'(i))}^\top  \sbr{\bar{V}^k_{h+1}-\underline{V}^k_{h+1}} 
		\\
		& + \sum_{i=1}^{|A'|} \prod_{j=1}^{i-1} (1-q^k(s,A'(j))) q(s,A'(i)) p(\cdot|s,A'(i))^\top  \sbr{\bar{V}^k_{h+1}-\underline{V}^k_{h+1}}
		\\
		\overset{\textup{(a)}}{\leq} & 6H \sum_{i=1}^{|A'|-1} \prod_{j=1}^{i-1} (1-q^k(s,A'(j))) \sbr{ 2 \sqrt{\frac{q(s,a) (1-q(s,a)) L}{n^{k,q}(s,a)}} + \frac{9 L}{n^{k,q}(s,a)} }
		\\& 
		+ \sum_{i=1}^{|A'|} \prod_{j=1}^{i-1} (1-q^k(s,A'(j))) q(s,A'(i))  \frac{18HL}{\sqrt{n^{k,p}(s,A'(i))}}
		\\
		& + \sum_{i=1}^{|A'|} \prod_{j=1}^{i-1} (1-q^k(s,A'(j))) q(s,A'(i))  \frac{2H\sqrt{SL}}{\sqrt{n^{k,p}(s,A'(i))}} 
		\\
		& + \sum_{i=1}^{|A'|} \prod_{j=1}^{i-1} (1-q^k(s,A'(j))) q(s,A'(i)) p(\cdot|s,A'(i))^\top  \sbr{\bar{V}^k_{h+1}-\underline{V}^k_{h+1}}
		\\
		\leq & \sum_{i=1}^{|A'|-1} \prod_{j=1}^{i-1} (1-q^k(s,A'(j))) \frac{66H \sqrt{L}}{\sqrt{n^{k,q}(s,A'(i))}}
		\\& + \sum_{i=1}^{|A'|} \prod_{j=1}^{i-1} (1-q^k(s,A'(j))) q(s,A'(i))  \frac{20HL\sqrt{S}}{\sqrt{n^{k,p}(s,A'(i))}}
		\\
		& + \sum_{i=1}^{|A'|} \prod_{j=1}^{i-1} (1-q^k(s,A'(j))) q(s,A'(i)) p(\cdot|s,A'(i))^\top  \sbr{\bar{V}^k_{h+1}-\underline{V}^k_{h+1}}
		\\
		\leq & \sum_{t=h}^{H} \ex_{(s_t,A_t)\sim \pi^k} \Bigg[ \sum_{i=1}^{|A_t|-1} \prod_{j=1}^{i-1} (1-q^k(s_t,A_t(j))) \frac{66H \sqrt{L}}{\sqrt{n^{k,q}(s_t,A_t(i))}}
		\\ & + \sum_{i=1}^{|A_t|} \prod_{j=1}^{i-1} (1-q^k(s_t,A_t(j))) q(s_t,A_t(i))  \frac{20HL\sqrt{S}}{\sqrt{n^{k,p}(s_t,A_t(i))}} \Big| s_h=s,\pi^k \Bigg] ,
	\end{align*}
	where inequality (a) is due to Eq.~\eqref{eq:event_con_p_ell_1}.
\end{proof}

\begin{lemma}[Cumulative Gap between Optimism and Pessimism]\label{lemma:cum_gap_opti_pess}
	It holds that
	\begin{align*}
		& \sum_{k=1}^{K} \sum_{h=1}^{H} \sum_{(s,a) \in \cS \times A^{\univ}} \!\!\! v^{observe,p}_{k,h}(s,a) \cdot p(\cdot|s,a)^\top \sbr{\bar{V}^k_{h+1}-\underline{V}^k_{h+1}}^2
		\leq 152192 (m+1) H^5 S^2 N L^3 .
	\end{align*}
\end{lemma}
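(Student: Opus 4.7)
The plan is to reduce the outer visitation sum to a trajectory expectation, and then apply Lemma~\ref{lemma:V_var_minus_V_underline} via a Jensen/Cauchy--Schwarz ``squaring trick'' that promotes the order-$1/\sqrt{n^k}$ bonuses appearing in $\bar V - \underline V$ into order-$1/n^k$ squared bonuses, whose cumulative sum over $k$ is $K$-independent by Lemmas~\ref{lemma:minimal_regret} and~\ref{lemma:visitation_ratio}.

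First, by the definition of $v^{observe,p}_{k,h}$, we have $\sum_{s,a} v^{observe,p}_{k,h}(s,a)\,p(s'|s,a)=\Pr_{\pi^k}[s_{h+1}=s']$, so for any state function $W$, $\sum_{s,a} v^{observe,p}_{k,h}(s,a)\,p(\cdot|s,a)^\top W = \mathbb{E}_{\pi^k}[W(s_{h+1})]$. Choosing $W=(\bar V^k_{h+1}-\underline V^k_{h+1})^2$ reduces the target to controlling $\sum_k \sum_{h=2}^{H+1}\mathbb{E}_{\pi^k}[(\bar V^k_h(s_h)-\underline V^k_h(s_h))^2]$. Writing $B_{k,t}=B^{(1)}_{k,t}+B^{(2)}_{k,t}$ for the two-part integrand of Lemma~\ref{lemma:V_var_minus_V_underline} at step $t$ (with $B^{(1)}$ the cascading $66H\sqrt L/\sqrt{n^{k,q}}$ term and $B^{(2)}$ the click-weighted $20HL\sqrt S/\sqrt{n^{k,p}}$ term), that lemma gives the pointwise bound $\bar V^k_h(s)-\underline V^k_h(s)\le \mathbb{E}_{\pi^k}[\sum_{t=h}^{H} B_{k,t}\mid s_h=s]$, which is non-negative by Lemma~\ref{lemma:optimism_pessimism}. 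Squaring both sides, then applying Jensen's inequality to the convex map $y\mapsto y^2$ and Cauchy--Schwarz to the length-$(H-h+1)$ inner sum, I obtain
\[
(\bar V^k_h(s)-\underline V^k_h(s))^2 \;\le\; H\,\mathbb{E}_{\pi^k}\!\Big[\sum_{t=h}^{H} B_{k,t}^2\,\Big|\,s_h=s\Big].
\]
Taking outer expectation over $s_h$, summing over $h$ (an extra factor of $H$) and over $k$, it suffices to bound $H^2\sum_{k,t}\mathbb{E}_{\pi^k}[B_{k,t}^2]$. This is precisely the step that removes the $\sqrt K$ dependence one would get from the weaker bound $(\bar V-\underline V)^2\le H(\bar V-\underline V)$.

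Next, expand $B_{k,t}^2\le 2(B^{(1)}_{k,t})^2+2(B^{(2)}_{k,t})^2$ and apply $(\sum_{i=1}^{m+1}c_i)^2\le (m+1)\sum_i c_i^2$ across cascading positions. Using $\prod_{j<i}(1-q)^2\le \prod_{j<i}(1-q)$ together with the identity
\[
\mathbb{E}_{\pi^k}\!\Big[\prod_{j<i}(1-q(s_t,A_t(j)))\,f(s_t,A_t(i))\Big]=\sum_{s,a} v^{observe,q}_{k,t,i}(s,a)\,f(s,a),
\]
and its click-weighted analogue (which carries an extra $q(s_t,A_t(i))$ factor and produces $v^{observe,p}_{k,t,i}$), the sum over positions collapses via $\sum_i v^{observe,\cdot}_{k,t,i}=v^{observe,\cdot}_{k,t}$, yielding
\[
\mathbb{E}_{\pi^k}[(B^{(1)}_{k,t})^2]\le O(mH^2L)\sum_{s,a}\frac{v^{observe,q}_{k,t}(s,a)}{n^{k,q}(s,a)},\qquad \mathbb{E}_{\pi^k}[(B^{(2)}_{k,t})^2]\le O((m{+}1)H^2L^2S)\sum_{s,a}\frac{v^{observe,p}_{k,t}(s,a)}{n^{k,p}(s,a)}.
\]

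Finally, sum over $k,t$ and split each state-item pair into the sufficient-visitation set $B^q_k$ (resp.\ $B^p_k$) and its complement. Lemma~\ref{lemma:visitation_ratio} bounds the sufficient part by $O(SN\log(KH))$; on the complement, $n^{k,\cdot}\ge 1$ on the support of $v^{observe,\cdot}$ and Lemma~\ref{lemma:minimal_regret} bounds the total visitation mass by $O(HSNL)$, so $\sum_{k,t,s,a}v^{observe,\cdot}_{k,t}(s,a)/n^{k,\cdot}(s,a)=O(HSNL)$. Combining, $\sum_{k,t}\mathbb{E}[B_{k,t}^2]=O((m+1)H^3 S^2 N L^3)$, and multiplying by the outer $H^2$ yields the claimed $O((m+1)H^5 S^2 N L^3)$ bound. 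The main obstacle is the bookkeeping in the third paragraph: one must align the $\prod(1-q)$ and $q$ factors arising from squaring the cascading bonus with the correct $v^{observe,q}_{k,t,i}$ or $v^{observe,p}_{k,t,i}$ weights so that the position sum collapses cleanly, and confirm that the $B^{(2)}$ contribution---which carries the extra $\sqrt{SL}$---is the binding one.
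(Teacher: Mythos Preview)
Your proposal is correct and follows essentially the same route as the paper's proof: reduce the left-hand side to a trajectory expectation of $(\bar V^k_h-\underline V^k_h)^2$, square the bound of Lemma~\ref{lemma:V_var_minus_V_underline}, use Jensen and Cauchy--Schwarz to extract one factor of $H$ from the inner sum over $t$ and another from the outer sum over $h$, then a factor $2(m+1)$ across cascading positions, collapse to $\sum_{k,h,s,a} v^{observe,\cdot}_{k,h}(s,a)/n^{k,\cdot}(s,a)$, and close with Lemmas~\ref{lemma:minimal_regret} and~\ref{lemma:visitation_ratio}. The only cosmetic differences are that the paper applies Cauchy--Schwarz before Jensen (first on the sum over $t$ of conditional expectations, then pushing the square inside), and applies the $(m+1)$-term Cauchy--Schwarz to the combined position sum rather than first splitting $B_{k,t}^2\le 2(B^{(1)}_{k,t})^2+2(B^{(2)}_{k,t})^2$; both orderings yield the same final bound.
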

\begin{proof}[Proof of Lemma~\ref{lemma:cum_gap_opti_pess}]
	For any $k>0$, $h \in [H]$ and $s \in \cS$, let $w_{k,h}(s)$ denote the probability that state $s$ is visited at step $h$ in episode $k$.
	
	We have
	\begin{align*}
		& \sum_{k=1}^{K} \sum_{h=1}^{H} \sum_{(s,a) \in \cS \times A^{\univ}} v^{observe,p}_{k,h}(s,a) \cdot p(\cdot|s,a)^\top \sbr{\bar{V}^k_{h+1}-\underline{V}^k_{h+1}}^2
		\\
		= & \sum_{k=1}^{K} \sum_{h=1}^{H} \sum_{(s,a) \in \cS \times A^{\univ}} v^{observe,p}_{k,h}(s,a)  \sum_{s' \in \cS} p(s'|s,a) \sbr{\bar{V}^k_{h+1}(s')-\underline{V}^k_{h+1}(s')}^2
		\\
		= & \sum_{k=1}^{K} \sum_{h=1}^{H} \sum_{(s,a) \in \cS \times A^{\univ}} \sum_{s' \in \cS} v^{observe,p}_{k,h}(s',s,a)  \sbr{\bar{V}^k_{h+1}(s')-\underline{V}^k_{h+1}(s')}^2
		\\
		= & \sum_{k=1}^{K} \sum_{h=1}^{H} \sum_{s' \in \cS} w_{k,h+1}(s') \sbr{\bar{V}^k_{h+1}(s')-\underline{V}^k_{h+1}(s')}^2
		\\
		= & \sum_{k=1}^{K} \sum_{h=1}^{H} \ex_{s_{h+1}\sim \pi^k} \mbr{\sbr{\bar{V}^k_{h+1}(s_{h+1})-\underline{V}^k_{h+1}(s_{h+1})}^2}
		\\
		\leq & \sum_{k=1}^{K} \sum_{h=1}^{H} \ex_{s_h \sim \pi^k} \mbr{\sbr{\bar{V}^k_{h}(s_{h})-\underline{V}^k_{h}(s_{h})}^2}
		\\
		\overset{\textup{(a)}}{\leq} & \sum_{k=1}^{K} \sum_{h=1}^{H} \ex_{s_h \sim \pi^k} \Bigg[ \Bigg(
		\sum_{t=h}^{H} \ex_{(s_t,A_t)\sim \pi^k} \Bigg[ \sum_{i=1}^{|A_t|-1} \prod_{j=1}^{i-1} (1-q^k(s_t,A_t(j))) \frac{66H \sqrt{L}}{\sqrt{n^{k,q}(s_t,A_t(i))}}
		\\ & + \sum_{i=1}^{|A_t|} \prod_{j=1}^{i-1} (1-q^k(s,A_t(j))) q(s_t,A_t(i))  \frac{20HL\sqrt{S}}{\sqrt{n^{k,p}(s_t,A_t(i))}} \Big| s_h=s,\pi^k \Bigg] \Bigg)^2
		\Bigg]
		\\
		\overset{\textup{(b)}}{\leq} & H \sum_{k=1}^{K} \sum_{h=1}^{H} \ex_{s_h \sim \pi^k} \Bigg[ \sum_{t=h}^{H} \Bigg(
		\ex_{(s_t,A_t)\sim \pi^k} \Bigg[ \sum_{i=1}^{|A_t|-1} \prod_{j=1}^{i-1} (1-q(s_t,A_t(j))) \frac{66H \sqrt{L}}{\sqrt{n^{k,q}(s_t,A_t(i))}}
		\\ & + \sum_{i=1}^{|A_t|} \prod_{j=1}^{i-1} (1-q(s,A_t(j))) q(s_t,A_t(i))  \frac{20HL\sqrt{S}}{\sqrt{n^{k,p}(s_t,A_t(i))}} \Big| s_h=s,\pi^k \Bigg] \Bigg)^2
		\Bigg]
		\\
		\overset{\textup{(c)}}{\leq} & H \sum_{k=1}^{K} \sum_{h=1}^{H} \ex_{s_h \sim \pi^k} \Bigg[ \sum_{t=h}^{H} 
		\ex_{(s_t,A_t)\sim \pi^k} \Bigg[ \Bigg( \sum_{i=1}^{|A_t|-1} \prod_{j=1}^{i-1} (1-q(s_t,A_t(j))) \frac{66H \sqrt{L}}{\sqrt{n^{k,q}(s_t,A_t(i))}}
		\\ & + \sum_{i=1}^{|A_t|} \prod_{j=1}^{i-1} (1-q(s,A_t(j))) q(s_t,A_t(i))  \frac{20HL\sqrt{S}}{\sqrt{n^{k,p}(s_t,A_t(i))}} \Bigg)^2 \Big| s_h=s,\pi^k \Bigg] 
		\Bigg]
		\\
		\overset{\textup{(d)}}{\leq} & 2(m+1)H \sum_{k=1}^{K} \sum_{h=1}^{H} \sum_{t=h}^{H} 
		\ex_{(s_t,A_t)\sim \pi^k} \Bigg[  \sum_{i=1}^{|A_t|-1} \prod_{j=1}^{i-1} (1-q(s_t,A_t(j)))^2 \frac{4356H^2 L}{n^{k,q}(s_t,A_t(i))}
		\\ & + \sum_{i=1}^{|A_t|} \prod_{j=1}^{i-1} (1-q(s,A_t(j)))^2 q(s_t,A_t(i))^2  \frac{400 H^2 L^2 S}{n^{k,p}(s_t,A_t(i))} 
		\Bigg]
		\\
		= & 2(m+1)H^2 \sum_{k=1}^{K} \sum_{h=1}^{H}    \Bigg( \sum_{(s,a) \in \cS \times (A^{\univ} \setminus \{a_{\bot}\})} v^{observe,q}_{k,h}(s,a) \frac{4356H^2 L}{n^{k,q}(s,a)} \\& + \sum_{(s,a) \in \cS \times A^{\univ}} v^{observe,p}_{k,h}(s,a)  \frac{400 H^2 L^2 S}{n^{k,p}(s,a)} \Bigg) 
		\\
		\overset{\textup{(e)}}{\leq} & 2(m+1)H^2 \cdot \sbr{ 8SN\log(KH)+8HSN\log\sbr{\frac{HSN}{\delta'}} } \cdot \sbr{ 4356 H^2 L + 400 H^2 L^2 S } 
		\\
		\leq & 152192 (m+1) H^5 S^2 N L^3.
	\end{align*}
	Here inequality (a) uses Lemma~\ref{lemma:V_var_minus_V_underline}. Inequalities (b) and (d) are due to the Cauchy-Schwarz inequality. Inequality (c) comes from Jensen's inequality. Inequality (e) follows from Lemmas \ref{lemma:minimal_regret} and \ref{lemma:visitation_ratio}.
\end{proof}

\subsubsection{Proof of Theorem~\ref{thm:regret_ub}}

\begin{proof}[Proof of Theorem~\ref{thm:regret_ub}]
	Recall that $\delta':=\frac{\delta}{14}$.
	Combining Lemmas~\ref{lemma:con_q}-\ref{lemma:con_n_s_a}, we have $\Pr[\cE \cap \cF \cap \cG \cap \cK] \geq 1-14\delta'= 1 - \delta$.
	
	In the following, we assume that event $\cE \cap \cF \cap \cG \cap \cK$ holds, and then prove the regret upper bound.
	\begin{align}
		\cR(K) = & \sum_{k=1}^{K} \sbr{ V^{*}_1(s^k_1)-V^{\pi_k}_1(s^k_1)}
		\nonumber\\
		\leq & \sum_{k=1}^{K} \sbr{ \bar{V}^k_1(s^k_1)-V^{\pi_k}_1(s^k_1)}
		\nonumber\\
		\leq & \sum_{k=1}^{K} \sum_{h=1}^{H} \ex \Bigg[ \sum_{i=1}^{|A_h|} \prod_{j=1}^{i-1} (1-\bar{q}^k(s_h,A_h(j))) \bar{q}^k(s_h,A_h(i)) r(s_h,A_h(i)) 
		\nonumber\\
		& - \sum_{i=1}^{|A_h|} \prod_{j=1}^{i-1} (1-q(s_h,A_h(j))) q(s_h,A_h(i)) r(s_h,A_h(i))
		\nonumber\\
		& + \sum_{i=1}^{|A_h|} \prod_{j=1}^{i-1} (1-\bar{q}^k(s_h,A_h(j))) \bar{q}^k(s_h,A_h(i)) \sbr{\hat{p}^k(\cdot|s_h,A_h(i))^\top \bar{V}^k_{h+1} + b^{k,pV}(s_h,A_h(i))}
		\nonumber\\
		& - \sum_{i=1}^{|A_h|} \prod_{j=1}^{i-1} (1-q(s_h,A_h(j))) q(s_h,A_h(i)) p(\cdot|s_h,A_h(i))^\top \bar{V}^k_{h+1} \Bigg]
		\nonumber\\
		\leq & \sum_{k=1}^{K} \sum_{h=1}^{H} \ex \Bigg[ 2\sum_{i=1}^{|A_h|} \prod_{j=1}^{i-1} (1-q(s_h,A_h(j))) r(s_h,A_h(i)) b^{k,q}(s_h,A_h(i))
		\nonumber\\
		& + \sum_{i=1}^{|A_h|} \prod_{j=1}^{i-1} (1-q(s_h,A_h(j))) \sbr{q(s_h,A_h(i))+2b^{k,q}(s_h,A_h(i))} \cdot \nonumber\\& \sbr{\hat{p}^k(\cdot|s_h,A_h(i))^\top \bar{V}^k_{h+1} + b^{k,pV}(s_h,A_h(i))}
		\nonumber\\
		& - \sum_{i=1}^{|A_h|} \prod_{j=1}^{i-1} (1-q(s_h,A_h(j))) q(s_h,A_h(i)) p(\cdot|s_h,A_h(i))^\top \bar{V}^k_{h+1} \Bigg]
		\nonumber\\
		\leq & \sum_{k=1}^{K} \sum_{h=1}^{H} \ex \Bigg[ 2\sum_{i=1}^{|A_h|} \prod_{j=1}^{i-1} (1-q(s_h,A_h(j))) r(s_h,A_h(i)) b^{k,q}(s_h,A_h(i))
		\nonumber\\
		& + \sum_{i=1}^{|A_h|} \prod_{j=1}^{i-1} (1-q(s_h,A_h(j))) q(s_h,A_h(i)) \Big( \hat{p}^k(\cdot|s_h,A_h(i))^\top \bar{V}^k_{h+1} + b^{k,pV}(s_h,A_h(i))
		\nonumber\\& - p(\cdot|s_h,A_h(i))^\top \bar{V}^k_{h+1} \Big) + 4H \sum_{i=1}^{|A_h|} \prod_{j=1}^{i-1} (1-q(s_h,A_h(j))) b^{k,q}(s_h,A_h(i)) \Bigg]
		\nonumber\\
		\overset{\textup{(a)}}{\leq} & \sum_{k=1}^{K} \sum_{h=1}^{H} \ex \Bigg[ 6H \sum_{i=1}^{|A_h|-1} \prod_{j=1}^{i-1} (1-q(s_h,A_h(j))) b^{k,q}(s_h,A_h(i))
		\nonumber\\
		& + \sum_{i=1}^{|A_h|} \prod_{j=1}^{i-1} (1-q(s_h,A_h(j))) q(s_h,A_h(i))  b^{k,pV}(s_h,A_h(i)) 
		\nonumber\\
		& + \sum_{i=1}^{|A_h|} \prod_{j=1}^{i-1} (1-q(s_h,A_h(j))) q(s_h,A_h(i)) \sbr{\hat{p}^k(\cdot|s_h,A_h(i)) - p(\cdot|s_h,A_h(i))}^\top V^{*}_{h+1}  
		\nonumber\\
		& \!\!\!+\! \sum_{i=1}^{|A_h|} \! \prod_{j=1}^{i-1} (1 \!-\! q(s_h,A_h(j))) q(s_h,A_h(i))  \sbr{\hat{p}^k(\cdot|s_h,A_h(i)) \!-\! p(\cdot|s_h,A_h(i))}^{\!\!\top} \!\! \sbr{\bar{V}^k_{h+1} \!-\! V^{*}_{h+1}}  \Bigg]
		\nonumber\\
		\leq & \sum_{k=1}^{K} \sum_{h=1}^{H} \ex \Bigg[ 6H \sum_{i=1}^{m} \sum_{(s,a) \in \cS \times (A^{\univ} \setminus \{a_{\bot}\})} v^{observe,q}_{k,h,i}(s,a) b^{k,q}(s,a)
		\nonumber\\
		& + \sum_{i=1}^{m+1} \sum_{(s,a) \in \cS \times A^{\univ}} v^{observe,p}_{k,h,i}(s,a)  b^{k,pV}(s,a) 
		\nonumber\\
		& + \sum_{i=1}^{m+1} \sum_{(s,a) \in \cS \times A^{\univ}} v^{observe,p}_{k,h,i}(s,a) \sbr{\hat{p}^k(\cdot|s,a)
			- p(\cdot|s,a)}^\top V^{*}_{h+1}  
		\nonumber\\
		& + \sum_{i=1}^{m+1} \sum_{(s,a) \in \cS \times A^{\univ}} v^{observe,p}_{k,h,i}(s,a)  \sbr{\hat{p}^k(\cdot|s,a) - p(\cdot|s,a)}^\top \sbr{\bar{V}^k_{h+1}-V^{*}_{h+1}}  \Bigg]
		\nonumber\\
		\leq & \sum_{k=1}^{K} \sum_{h=1}^{H} \sum_{(s,a) \in B^{q}_k} 
		6H  v^{observe,q}_{k,h}(s,a) b^{k,q}(s,a) 
		\nonumber\\
		& + \sum_{k=1}^{K} \sum_{h=1}^{H} \sum_{(s,a) \in B^{p}_k} \bigg( v^{observe,p}_{k,h}(s,a)  b^{k,pV}(s,a) 
		\nonumber\\
		& + v^{observe,p}_{k,h}(s,a) \sbr{\hat{p}^k(\cdot|s,a) - p(\cdot|s,a)}^\top V^{*}_{h+1}  
		\nonumber\\
		& + v^{observe,p}_{k,h}(s,a) \sbr{\hat{p}^k(\cdot|s,a) - p(\cdot|s,a)}^\top \sbr{\bar{V}^k_{h+1}-V^{*}_{h+1}}  \bigg)
		\nonumber\\
		& + 6H \sum_{k=1}^{K} \sum_{h=1}^{H} \sum_{(s,a) \notin B^{q}_k} v^{observe,q}_{k,h}(s,a) + 3H \sum_{k=1}^{K} \sum_{h=1}^{H} \sum_{(s,a) \notin B^{p}_k} v^{observe,p}_{k,h}(s,a)
		\nonumber\\
		\overset{\textup{(b)}}{\leq} & \sum_{k=1}^{K} \sum_{h=1}^{H} \sum_{(s,a) \in B^{q}_k} 6H v^{observe,q}_{k,h}(s,a) b^{k,q}(s,a) 
		\nonumber\\
		& + \sum_{k=1}^{K} \sum_{h=1}^{H} \sum_{(s,a) \in B^{p}_k} \bigg( v^{observe,p}_{k,h}(s,a)  b^{k,pV}(s,a) 
		\nonumber\\
		& + v^{observe,p}_{k,h}(s,a) \sbr{\hat{p}^k(\cdot|s,a) - p(\cdot|s,a)}^\top V^{*}_{h+1}  
		\nonumber\\
		& + v^{observe,p}_{k,h}(s,a) \sbr{\hat{p}^k(\cdot|s,a) - p(\cdot|s,a)}^\top \sbr{\bar{V}^k_{h+1}-V^{*}_{h+1}} \bigg) + 72 H^2 SNL , \label{eq:regret_decomposition}
	\end{align}
	where inequality (a) is due to that for any $k>0$ and $s \in \cS$, $b^{k,q}(s,a_{\bot}):=0$, and inequality (b) uses Lemma~\ref{lemma:minimal_regret}.
	
	We bound the four terms on the right-hand side of the above inequality as follows.
	
	(i) Term 1:
	\begin{align*}
		& \quad \ 6H \sum_{k=1}^{K} \sum_{h=1}^{H} \sum_{(s,a) \in B^{q}_k} v^{observe,q}_{k,h}(s,a) b^{k,q}(s,a) 
		\\
		& = 6H \sum_{k=1}^{K} \sum_{h=1}^{H} \sum_{(s,a) \in B^{q}_k} v^{observe,q}_{k,h}(s,a) \sbr{ 2 \sqrt{\frac{\hat{q}^k(s,a) (1-\hat{q}^k(s,a)) L}{n^{k,q}(s,a)}} + \frac{5 L}{n^{k,q}(s,a)} }
		\\
		& \leq 6H \sum_{k=1}^{K} \sum_{h=1}^{H} \sum_{(s,a) \in B^{q}_k} v^{observe,q}_{k,h}(s,a) \sbr{ 2 \sqrt{\frac{q(s,a) (1-q(s,a)) L}{n^{k,q}(s,a)}} + \frac{9 L}{n^{k,q}(s,a)} }
		\\
		& \leq 12H \sqrt{L} \sqrt{ \sum_{k=1}^{K} \sum_{h=1}^{H} \sum_{(s,a) \in B^{q}_k} v^{observe,q}_{k,h}(s,a) q(s,a) } \cdot \sqrt{ \sum_{k=1}^{K} \sum_{h=1}^{H} \sum_{(s,a) \in B^{q}_k} \frac{ v^{observe,q}_{k,h}(s,a) }{n^{k,q}(s,a)}  }
		\\
		& \overset{\textup{(a)}}{\leq} 12H \sqrt{L} \cdot \sqrt{KH} \cdot \sqrt{8 SN \log\sbr{ KH }}
		\\
		& \leq 48 H L \sqrt{ KH SN } ,
	\end{align*}
	where inequality (a) is due to Lemma~\ref{lemma:v_q_times_q_equal_to_1}.

	(ii) Term 2:
	\begin{align*}
		& \sum_{k=1}^{K} \sum_{h=1}^{H} \sum_{(s,a) \in B^{p}_k} v^{observe,p}_{k,h}(s,a) b^{k,pV}(s,a) 
		\\
		= & \sum_{k=1}^{K} \sum_{h=1}^{H} \sum_{(s,a) \in B^{p}_k} v^{observe,p}_{k,h}(s,a) \Bigg( 2 \sqrt{ \frac{\var_{s'\sim \hat{p}^k }\sbr{\bar{V}^k_{h+1}(s')} L }{n^{k,p}(s,a)} } \\& + 2 \sqrt{ \frac{\ex_{s'\sim\hat{p}^k} \mbr{\sbr{ \bar{V}^k_{h+1}(s')-\underline{V}^k_{h+1}(s') }^2 } L }{n^{k,p}(s,a)} } + \frac{5 H L }{n^{k,p}(s,a)} \Bigg)
		\\
		\leq & 2 \sqrt{L} \! \sqrt{\sum_{k=1}^{K} \sum_{h=1}^{H} \sum_{(s,a) \in B^{p}_k}    \frac{v^{observe,p}_{k,h}(s,a)}{n^{k,p}(s,a)} } \!\cdot\! \sqrt{ \sum_{k=1}^{K} \sum_{h=1}^{H} \sum_{(s,a) \in B^{p}_k}    v^{observe,p}_{k,h}(s,a) \var_{s'\sim \hat{p}^k }\sbr{\bar{V}^k_{h+1}(s')}  }
		\\
		& + 2 \sqrt{L} \sqrt{\sum_{k=1}^{K} \sum_{h=1}^{H} \sum_{(s,a) \in B^{p}_k}  \frac{v^{observe,p}_{k,h}(s,a)}{n^{k,p}(s,a)} } \cdot \\& \Bigg( \sqrt{\sum_{k=1}^{K} \sum_{h=1}^{H} \sum_{(s,a) \in B^{p}_k} v^{observe,p}_{k,h}(s,a) p(\cdot|s,a)^\top \sbr{ \bar{V}^k_{h+1}(s')-\underline{V}^k_{h+1}(s') }^2 }
		\\
		& + \sqrt{\sum_{k=1}^{K} \sum_{h=1}^{H} \sum_{(s,a) \in B^{p}_k} v^{observe,p}_{k,h}(s,a) \sbr{\hat{p}^k(\cdot|s,a)-p(\cdot|s,a)}^\top \sbr{ \bar{V}^k_{h+1}(s')-\underline{V}^k_{h+1}(s') }^2 } \Bigg)
		\\
		& + 5 HL \sum_{k=1}^{K} \sum_{h=1}^{H} \sum_{(s,a) \in B^{p}_k} \frac{v^{observe,p}_{k,h}(s,a)}{n^{k,p}(s,a)}
		\\
		\leq & 2 \sqrt{L} \cdot \sqrt{8 SN \log\sbr{ KH }} \cdot H \sqrt{KH} \\& + 2 \sqrt{L} \cdot \sqrt{8 SN \log\sbr{ KH }} \cdot \Big( \sqrt{152192 (m+1) H^5 S^2 N L^3} \\& + \sqrt{H} \cdot \sqrt{ 1112 H^2 S^2 N L^2 \sqrt{ (m+1) HL } } \Big) + 5HL \cdot 8 SN \log\sbr{ KH } 
		\\
		\leq & 8 H L \sqrt{KHSN}  + 3428 H^2 S N L^2 \sqrt{ (m+1) H S L} .
	\end{align*}
	
	(iii) Term 3:
	\begin{align*}
		& \sum_{k=1}^{K} \sum_{h=1}^{H} \sum_{(s,a) \in B^{p}_k} v^{observe,p}_{k,h}(s,a) \sbr{\hat{p}^k(\cdot|s,a) - p(\cdot|s,a)}^\top V^{*}_{h+1}
		\\
		= & 2 \sqrt{L} \sum_{k=1}^{K} \sum_{h=1}^{H} \sum_{(s,a) \in B^{p}_k} v^{observe,p}_{k,h}(s,a) \sqrt{\frac{\var_{s'\sim p(\cdot|s,a)}\sbr{V^{*}_{h+1}(s')}  }{n^{k,p}(s,a)}} 
		\\& + H L \sum_{k=1}^{K} \sum_{h=1}^{H} \sum_{(s,a) \in B^{p}_k} \frac{v^{observe,p}_{k,h}(s,a) }{n^{k,p}(s,a)} 
		\\
		\leq & 2 \sqrt{L} \! \sqrt{\sum_{k=1}^{K} \sum_{h=1}^{H} \! \sum_{(s,a) \in B^{p}_k} \!\!\!\! \frac{v^{observe,p}_{k,h}(s,a)}{n^{k,p}(s,a)}} \!\cdot\!\!\!\! \sqrt{\sum_{k=1}^{K} \sum_{h=1}^{H} \! \sum_{(s,a) \in B^{p}_k} \!\!\!\! v^{observe,p}_{k,h}(s,a) \var_{s'\sim p(\cdot|s,a)}\sbr{V^{*}_{h+1}(s')} }  
		\\
		& + H L \sum_{k=1}^{K} \sum_{h=1}^{H} \sum_{(s,a) \in B^{p}_k} \frac{v^{observe,p}_{k,h}(s,a) }{n^{k,p}(s,a)} 
		\\
		\leq & 2 \sqrt{L} \cdot \sqrt{8 SN \log\sbr{ KH } } \cdot H \sqrt{KH} + HL \cdot 8 SN \log\sbr{ KH }
		\\
		\leq & 8 H L \sqrt{KHSN} + 8HSNL^2 .
	\end{align*}
	
	(iv) Term 4:
	\begin{align*}
		& \sum_{k=1}^{K} \sum_{h=1}^{H} \sum_{(s,a) \in B^{p}_k} v^{observe,p}_{k,h}(s,a)  \sbr{\hat{p}^k(\cdot|s,a) - p(\cdot|s,a)}^\top \sbr{\bar{V}^k_{h+1}-V^{*}_{h+1}} 
		\\
		\overset{\textup{(a)}}{\leq} & \sum_{k=1}^{K} \sum_{h=1}^{H} \sum_{(s,a) \in B^{p}_k} v^{observe,p}_{k,h}(s,a) \sum_{s'} \Bigg(\sqrt{\frac{ p(s'|s,a)(1-p(s'|s,a)) L }{n^{k,p}(s,a)} } \abr{\bar{V}^k_{h+1}(s')-V^{*}_{h+1}(s')} \\& + \frac{HL}{n^{k,p}(s,a)} \Bigg) 
		\\
		\leq & \sqrt{L} \sum_{k=1}^{K} \sum_{h=1}^{H} \sum_{(s,a) \in B^{p}_k} v^{observe,p}_{k,h}(s,a)
		\sum_{s'} \sqrt{\frac{ p(s'|s,a)  }{n^{k,p}(s,a)} } \abr{\bar{V}^k_{h+1}(s')-V^{*}_{h+1}(s')} 
		\\& + HL \sum_{k=1}^{K} \sum_{h=1}^{H} \sum_{(s,a) \in B^{p}_k}  \frac{v^{observe,p}_{k,h}(s,a)}{n^{k,p}(s,a)} 
		\\
		\leq & \sqrt{L} \sum_{k=1}^{K} \sum_{h=1}^{H} \sum_{(s,a) \in B^{p}_k} v^{observe,p}_{k,h}(s,a)
		\sqrt{S \sum_{s'} \frac{ p(s'|s,a)  }{n^{k,p}(s,a)}  \sbr{\bar{V}^k_{h+1}(s')-V^{*}_{h+1}(s')}^2 } 
		\\& + HL \sum_{k=1}^{K} \sum_{h=1}^{H} \sum_{(s,a) \in B^{p}_k}  \frac{v^{observe,p}_{k,h}(s,a)}{n^{k,p}(s,a)} 
		\\
		= & \sqrt{SL} \sum_{k=1}^{K} \sum_{h=1}^{H} \sum_{(s,a) \in B^{p}_k} v^{observe,p}_{k,h}(s,a)
		\sqrt{ \frac{ p(\cdot|s,a)^\top \sbr{\bar{V}^k_{h+1}-V^{*}_{h+1}}^2 }{n^{k,p}(s,a)}   } 
		\\& + HL \sum_{k=1}^{K} \sum_{h=1}^{H} \sum_{(s,a) \in B^{p}_k}  \frac{v^{observe,p}_{k,h}(s,a)}{n^{k,p}(s,a)}
		\\
		\leq & \sqrt{SL} \sqrt{\sum_{k=1}^{K} \sum_{h=1}^{H} \sum_{(s,a) \in B^{p}_k} \frac{ v^{observe,p}_{k,h}(s,a) }{n^{k,p}(s,a)} } \cdot \\& \sqrt{\sum_{k=1}^{K} \sum_{h=1}^{H} \sum_{(s,a) \in B^{p}_k} v^{observe,p}_{k,h}(s,a) \cdot p(\cdot|s,a)^\top \sbr{\bar{V}^k_{h+1}-V^{*}_{h+1}}^2 }
		\\
		& + HL \sum_{k=1}^{K} \sum_{h=1}^{H} \sum_{(s,a) \in B^{p}_k}  \frac{v^{observe,p}_{k,h}(s,a)}{n^{k,p}(s,a)} 
		\\
		\leq & \sqrt{SL} \cdot \sqrt{8 SN \log\sbr{ KH }} \cdot \sqrt{ 152192 (m+1) H^5 S^2 N L^3 } + HL \cdot 8 SN \log\sbr{ KH }
		\\
		\leq & 1104 H^2 S^2 N L^2 \sqrt{ (m+1) HL } + 8H SN L^2 
		\\
		\leq & 1112 H^2 S^2 N L^2 \sqrt{ (m+1) HL } ,
	\end{align*}
	where inequality (a) follows from Eq.~\eqref{eq:event_con_p_s'_s_a}.
	
	Plugging the above four terms into Eq.~\eqref{eq:regret_decomposition}, we obtain
	\begin{align*}
		\cR(K) = & \ 48 H L \sqrt{ KH SN } + 8 H L \sqrt{KHSN}  + 3428 H^2 S N L^2 \sqrt{ (m+1) H S L}
		\\&
		+ 8 H L \sqrt{KHSN} + 8HSNL^2 + 1112 H^2 S^2 N L^2 \sqrt{ (m+1) HL } + 72 H^2 SAL
		\\
		= & \ \tilde{O}\sbr{ H \sqrt{KHSN} } .
	\end{align*}
\end{proof}

\section{Algorithm and Proofs for Cascading RL with Best Policy Identification}

In this section, we present algorithm $\algcascadingbpi$ and proofs for cascading RL with the best policy identification objective.

\subsection{Algorithm $\algcascadingbpi$} \label{apx:algorithm_bpi}

\begin{algorithm}[h]
	\caption{$\algcascadingbpi$} \label{alg:cascading_bpi}
	\KwIn{$\varepsilon$, $\delta$,
		$\delta':=\frac{\delta}{7}$. For any $\kappa \in (0,1)$ and $n>0$, $L^{*}(\kappa,n):=\log(\frac{HSN}{\kappa}) + \log(8e (n+1))$ and $L(\kappa,n):=\log(\frac{HSN}{\kappa}) + S \log(8e (n+1))$. For any $k>0$ and $s \in \cS$, $\bar{q}^{k}(s,a_{\bot})=\underline{q}^{k}(s,a_{\bot}):=1$ and $\bar{V}^k_{H+1}(s)=\underline{V}^k_{H+1}(s):=0$. Initialize $n^{1,q}(s,a)=n^{1,p}(s,a):=0$ for any $(s,a) \in \cS \times \cA$.}
	\For{$k=1,2,\dots,K$}
	{
		\For{$h=H,H-1,\dots,1$}
		{
			\For{$s \in \cS$}
			{
				\For{$a \in A^{\univ} \setminus \{a_{\bot}\}$}
				{
					$b^{k,q}(s,a) \leftarrow \min \{ 4 \sqrt{\frac{\hat{q}^k(s,a) (1-\hat{q}^k(s,a)) L^*(\delta',k)}{n^{k,q}(s,a)}} + \frac{15 L^*(\delta',k)}{n^{k,q}(s,a)} ,\ 1 \}$\;
					$\bar{q}^k(s,a) \leftarrow \hat{q}^k(s,a)+b^{k,q}(s,a)$. $\underline{q}^k(s,a) \leftarrow \hat{q}^k(s,a)-b^{k,q}(s,a)$\;
				}
				\For{$a \in A^{\univ}$}
				{
					$b^{k,pV}(s,a) \leftarrow \min \{ 4\sqrt{\frac{\var_{s'\sim\hat{p}^k} (\bar{V}^k_{h+1}(s')) L^*(\delta',k) }{n^{k,p}(s,a)}} + \frac{15 H^2 L(\delta',k)}{n^{k,p}(s,a)}  + \frac{2}{H} \hat{p}^k(\cdot|s,a)^\top (\bar{V}^k_{h+1}-\underline{V}^k_{h+1}) ,\ H \}$\;
					$\bar{w}^k(s,a) \leftarrow r(s,a)+\hat{p}^k(\cdot|s,a)^\top \bar{V}^k_{h+1}+ b^{k,pV}(s,a)$\;
				}
				$\bar{V}^k_h(s),\ \pi^k_h(s) \leftarrow \algbestperm(A^{\univ}, \bar{q}^k(s,\cdot), \bar{w}^k(s,\cdot))$\;
				$\bar{V}^k_h(s) \leftarrow \min\{\bar{V}^k_h(s),\ H\}$. $A'\leftarrow\pi^k_h(s)$\;
				$\underline{V}^k_h(s) \leftarrow \max \{ \sum_{i=1}^{|A'|} \prod_{j=1}^{i-1} (1-\bar{q}^k(s,A'(j))) \underline{q}^k(s,A'(i)) (r(s,A'(i))+\hat{p}^k(\cdot|s,A'(i))^\top \underline{V}^k_{h+1} - b^{k,pV}(s,A'(i))),\ 0\}$\;
				$G^k_h(s) \leftarrow \min\{ \sum_{i=1}^{|A'|} \prod_{j=1}^{i-1} (1-\bar{q}^k(s,A'(j))) ( 6H b^{k,q}(s,A'(i)) + \underline{q}^k(s,A'(i)) ( 2b^{k,pV}(s,A'(i)) + \hat{p}^k(\cdot|s,A'(i))^\top G^k_{h+1} ) ) ,\ H\}$\;
			}
		}
		\If{$G^k_1(s_1) \leq \varepsilon$}
		{
			\textbf{return} $\pi^k$; \algcomment{Estimation error is small enough}
		}
		\For{$h=1,2,\dots,H$}
		{
			Observe the current state $s^k_h$; \algcomment{Take policy $\pi^k$ and observe the trajectory}\\
			Take action $A^k_h=\pi^k_h(s^k_h)$.
			$i \leftarrow 1$\;
			\While{$i \leq m$}
			{
				Observe if $A^k_h(i)$ is clicked or not.
				Update $n^{k,q}(s^k_h,A^k_h(i))$ and $\hat{q}^{k}(s^k_h,A^k_h(i))$\;
				\If{$A^k_h(i)$ \textup{is clicked}}
				{
					Receive reward $r(s^k_h,A^k_h(i))$, and transition to a next state $s^k_{h+1} \sim p(\cdot|s^k_h,A^k_h(i))$\;
					$I_{k,h} \leftarrow i$. Update $n^{k,p}(s^k_h,A^k_h(i))$ and $\hat{p}^{k}(\cdot|s^k_h,A^k_h(i))$\;
					\textbf{break while}; \algcomment{Skip  subsequent items}
				}
				\Else
				{
					$i \leftarrow i+1$\;
				}
			} 
			\If{$i=m+1$}
			{
				Transition to a next state $s^k_{h+1} \sim p(\cdot|s^k_h,a_{\bot})$; \algcomment{No item was clicked}\\
				Update $n^{k,p}(s^k_h,a_{\bot})$ and $\hat{p}^{k}(\cdot|s^k_h,a_{\bot})$\;
			}
		}
	}
\end{algorithm}

Algorithm~\ref{alg:cascading_bpi} gives the pseudo-code of $\algcascadingbpi$.
Similar to $\algcascadingeuler$, in each episode, we estimates the attraction and transition for each item independently, and calculates the optimistic attraction probability $\bar{q}^k(s,a)$ and weight $\bar{w}^k(s,a)$ using exploration bonuses. 
Here $\bar{w}^k(s,a)$ represents the optimistic cumulative reward that can be received if item $a$ is clicked in state $s$. Then, we call the oracle $\algbestperm$ to compute the maximum optimistic value $\bar{V}^k_h(s)$ and its greedy policy $\pi^k_h(s)$.
Furthermore, we build an estimation error $G^k_h(s)$ which upper bounds the difference between $\bar{V}^k_h(s)$ and $V^{\pi^k}_h(s)$ with high confidence. If $G^k_h(s)$ shrinks within the accuracy parameter $\varepsilon$, we output the policy $\pi^k$. Otherwise, we play episode $k$ with policy $\pi^k$, and update the estimates of attraction and transition for the clicked item and the items prior to it.

Employing the efficient oracle $\algbestperm$, $\algcascadingbpi$ only maintains the estimated  attraction and transition probabilities for each $a \in A^{\univ}$, instead of calculating $\bar{Q}^k_h(s,A)$ for each $A \in \cA$ as in a naive adaption of existing RL algorithms~\citep{kaufmann2021adaptive,menard2021fast}. Therefore,  $\algcascadingbpi$ achieves a superior computation cost that only depends on $N$, rather than $|\cA|$.

\subsection{Sample Complexity for Algorithm $\algcascadingbpi$}

In the following, we prove the sample complexity for algorithm $\algcascadingbpi$.

\subsubsection{Concentration}

For any $\kappa \in (0,1)$ and $n>0$, let $L(\kappa,n):=\log(\frac{HSN}{\kappa}) + S \log(8e (n+1))$ and $L^*(\kappa,n):=\log(\frac{HSN}{\kappa}) + \log(8e (n+1))$.

Let event
\begin{align}
	\cL:=\Bigg\{ & \abr{ \hat{q}^k(s,a) - q(s,a) } \leq 4 \sqrt{\frac{\hat{q}^k(s,a) (1-\hat{q}^k(s,a)) L^*(\delta',k)}{n^{k,q}(s,a)}} + \frac{15 L^*(\delta',k)}{n^{k,q}(s,a)} , 
	\nonumber\\
	& \abr{ \sqrt{ \hat{q}^k(s,a) (1-\hat{q}^k(s,a)) } - \sqrt{ q(s,a) (1-q(s,a)) } } \leq 4 \sqrt{\frac{L^*(\delta',k)}{n^{k,q}(s,a)}} , 
	\label{eq:bpi_q_var} 
	\\
	& \forall k>0, \forall h \in [H], \forall (s,a) \in \cS \times A^{\univ} \setminus \{a_{\bot}\} \Bigg\} . \nonumber
\end{align}

\begin{lemma}[Concentration of Attractive Probability] \label{lemma:bpi_con_q}
	It holds that 
	$$
	\Pr \mbr{\cL} \geq 1-4\delta' .
	$$
\end{lemma}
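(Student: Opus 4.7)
The fundamental difference from Lemma~\ref{lemma:con_q} is that in the best policy identification setting the total number of episodes is not bounded in advance, so the naive union bound over $k \in [K]$ used there is unavailable. Instead I would rely on a \emph{time-uniform} (anytime) form of Bernstein's inequality, and this is exactly where the additional $\log(8e(n+1))$ term inside $L^*(\delta',n)$ comes from: a standard peeling argument applies classical Bernstein separately on each dyadic block $n \in [2^j, 2^{j+1})$ of the sample count and unions over $j \geq 0$ with weights proportional to, say, $(j+1)^{-2}$.

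Fix $(s,a) \in \cS \times (A^{\univ} \setminus \{a_{\bot}\})$. Conditionally on each observation event, successive observations of whether $a$ is clicked in $s$ form an i.i.d.\ Bernoulli$(q(s,a))$ sequence. I would first establish the variance-type bound (the second inequality in $\cL$). Applying a time-uniform Maurer--Pontil-style empirical variance inequality via the peeling scheme above, with probability at least $1-\delta'/(SN)$ one obtains simultaneously for all $n \geq 1$:
$$
\abr{ \sqrt{\hat{q}(1-\hat{q})} - \sqrt{q(1-q)} } \leq 4 \sqrt{\frac{L^*(\delta',n)}{n}} .
$$
A union bound over the $SN$ pairs $(s,a)$ gives the second statement with failure probability $\leq 2\delta'$, after using $n^{k,q}(s,a) \leq kH$ to relax $L^*(\delta',n)$ to the slightly weaker $L^*(\delta',k)$ appearing in the definition of $\cL$ (the gap is absorbed in the constants $HSN$ inside the logarithm).

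Next I would prove the mean bound. A time-uniform Bernstein inequality, obtained by the same peeling strategy, gives with probability $\geq 1-\delta'/(SN)$ that for all $n \geq 1$,
$$
\abr{\hat{q} - q} \leq 2\sqrt{\frac{q(1-q)\, L^*(\delta',n)}{n}} + \frac{c\, L^*(\delta',n)}{n}
$$
for an absolute constant $c$. To convert the true variance on the right-hand side into the empirical one, I would multiply the variance inequality by $\sqrt{L^*(\delta',n)/n}$ to get $\sqrt{q(1-q)L^*/n} \leq \sqrt{\hat{q}(1-\hat{q})L^*/n} + 4L^*/n$, substitute, and absorb the resulting lower-order terms into the linear remainder. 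With constants tracked as in the algorithm's definition of $b^{k,q}$, this yields precisely the prefactors $4$ and $15$ stated in $\cL$. A final union bound over $(s,a)$ contributes the remaining $2\delta'$, giving the total failure probability of $4\delta'$.

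The main obstacle is purely bookkeeping: arranging the peeling so that the net log-penalty reduces to exactly $\log(8e(n+1))$ (rather than a looser $\log^2 n$ from cruder splits) and carefully chaining the variance inequality into the mean inequality so the constants $4$ and $15$ emerge intact; the probabilistic content is essentially standard once the time-uniform step is in place.
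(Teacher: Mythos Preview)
Your proposal is correct and is essentially an expanded version of the paper's own proof, which is a single sentence: ``Using a similar analysis as that for Lemma~\ref{lemma:con_q} and a union bound over $k=1,\dots,\infty$, we have that event $\cL$ holds with probability $1-4\delta'$.'' You have correctly identified that the phrase ``union bound over $k=1,\dots,\infty$'' cannot be literal and must be realised through a time-uniform concentration argument, and the peeling scheme you outline is exactly the standard way to produce the $\log(8e(n+1))$ penalty appearing in $L^*(\delta',\cdot)$; your observation that the extra factor $H$ coming from $n^{k,q}(s,a)\le kH$ is absorbed into the $\log(HSN/\delta')$ term is also the right bookkeeping.
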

\begin{proof}[Proof of Lemma~\ref{lemma:bpi_con_q}]
	Using a similar analysis as that for Lemma~\ref{lemma:con_q} and a union bound over $k=1,\dots,\infty$, we have that  event $\cL$ holds with probability $1-4\delta'$. 
\end{proof}

Let event
\begin{align*}
	\cM:= \Bigg\{ 
	& \kl \sbr{ \hat{p}^k(\cdot|s,a) \| p(\cdot|s,a) } \leq \frac{L(\delta',k) }{n^{k,p}(s,a)} ,
	\\
	& \abr{ \sbr{\hat{p}^k(\cdot|s,a) - p(\cdot|s,a)}^\top V^*_{h+1} } \leq 2 \sqrt{ \frac{\var_{s'\sim p}\sbr{V^*_{h+1}(s')} L^*(\delta',k) }{n^{k,p}(s,a)} }  +  \frac{3H L^*(\delta',k)}{n^{k,p}(s,a)}  , 
	\\
	& \forall k>0, \forall h \in [H], \forall (s,a) \in \cS \times A^{\univ} \Bigg\} .
\end{align*}

\begin{lemma}[Concentration of Transition Probability] \label{lemma:bpi_con_pV}
	It holds that 
	$$
	\Pr \mbr{\cM} \geq 1-3\delta' .
	$$
	Furthermore, if event $\cL \cap \cM$ holds, we have that for any $k>0$, $h \in [H]$ and $(s,a) \in \cS \times A^{\univ}$,
	\begin{align*}
		\abr{ \sbr{\hat{p}^k(\cdot|s,a) - p(\cdot|s,a)}^\top V^*_{h+1} } \leq & 4\sqrt{\frac{\var_{s'\sim\hat{p}^k}\sbr{\bar{V}^k_{h+1}(s')} L^*(\delta',k) }{n^{k,p}(s,a)}} + \frac{15 H^2 L(\delta',k)}{n^{k,p}(s,a)} \\ & + \frac{2}{H} \hat{p}^k(\cdot|s,a)^\top \sbr{\bar{V}^k_{h+1}-\underline{V}^k_{h+1}} .
	\end{align*}
\end{lemma}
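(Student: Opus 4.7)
The statement has two parts and I would handle them in turn.

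For the probability bound $\Pr[\cM]\ge 1-3\delta'$, the two constituent events are standard and independent in form, so I would establish each and take a union bound with adjusted constants. The time-uniform KL bound on $\hat{p}^k(\cdot|s,a)$ is the usual categorical Laplace/method-of-mixtures concentration used in recent BPI work (as in, e.g., \citep{jonsson2020planning,kaufmann2021adaptive}): the $S\log(8e(n+1))$ factor encodes the $S$ possible successor states together with the self-normalised stitching needed to handle the stopping-time $n^{k,p}(s,a)$, and the $\log(HSN/\delta')$ piece pays for a union over $(h,s,a)$. The empirical Bernstein bound on $(\hat{p}^k-p)^\top V^*_{h+1}$ comes from standard Bernstein concentration applied to the i.i.d.\ draws $V^*_{h+1}(s')$ with $s'\sim p(\cdot|s,a)$, followed by a peeling argument in $n^{k,p}(s,a)$ and a union bound over $(k,h,s,a)$. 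Adding the two failure probabilities yields at most $3\delta'$.

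For the ``furthermore'' part I assume $\cL\cap\cM$ and, in parallel with Lemma~\ref{lemma:optimism_pessimism}, induct on $h$ to maintain the sandwich $\underline{V}^k_{h+1}(s')\le V^*_{h+1}(s')\le \bar{V}^k_{h+1}(s')$. The goal is to trade the true variance $\var_p(V^*_{h+1})$ appearing in $\cM$ for $\var_{\hat{p}^k}(\bar{V}^k_{h+1})$, which the algorithm can compute. Two swaps are needed. First, a KL-to-variance inequality of Talebi--Maillard type, applied with the KL bound in $\cM$, gives
\[
\bigl|\sqrt{\var_p(V^*_{h+1})}-\sqrt{\var_{\hat{p}^k}(V^*_{h+1})}\bigr| \le H\sqrt{\tfrac{2L(\delta',k)}{n^{k,p}(s,a)}},
\]
whose product with the prefactor $2\sqrt{L^*(\delta',k)/n^{k,p}(s,a)}$ is $O(HL/n^{k,p}(s,a))$ and can be absorbed into the $15H^2L/n^{k,p}(s,a)$ residual. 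Second, the triangle inequality in $L^2(\hat{p}^k)$ yields
\[
\sqrt{\var_{\hat{p}^k}(V^*_{h+1})}\le \sqrt{\var_{\hat{p}^k}(\bar{V}^k_{h+1})}+\sqrt{\ex_{s'\sim\hat{p}^k}\!\bigl[(\bar{V}^k_{h+1}(s')-V^*_{h+1}(s'))^2\bigr]},
\]
and the sandwich $0\le \bar{V}^k_{h+1}-V^*_{h+1}\le \bar{V}^k_{h+1}-\underline{V}^k_{h+1}\le H$ upgrades the second term to $\sqrt{H\,\hat{p}^k(\cdot|s,a)^\top(\bar{V}^k_{h+1}-\underline{V}^k_{h+1})}$. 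Multiplying by the prefactor $2\sqrt{L^*(\delta',k)/n^{k,p}(s,a)}$ and splitting by AM--GM $2\sqrt{xy}\le x/\alpha+\alpha y$ with $\alpha=2/H^2$ produces precisely the target term $(2/H)\,\hat{p}^k(\cdot|s,a)^\top(\bar{V}^k_{h+1}-\underline{V}^k_{h+1})$, with the companion $H^2L^*(\delta',k)/(2n^{k,p}(s,a))$ folded into the $15H^2L/n^{k,p}(s,a)$ residual along with the original $3HL^*/n$ term and the variance-swap contribution. A factor of at most $2$ on the leading $\sqrt{\var_{\hat{p}^k}(\bar{V}^k_{h+1})\,L^*/n}$ term (yielding the stated coefficient $4$) is the price for the two applications of $\sqrt{a+b}\le \sqrt{a}+\sqrt{b}$ in the variance swaps.

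The main obstacle is the first swap: the true-to-empirical variance gap must be tight enough that its contribution is $O(H^2 L/n)$ rather than a slower-in-$n$ rate. This is precisely why $\cM$ is stated as a \emph{KL} bound rather than an $L_1$ bound, since the Talebi--Maillard-style variance Lipschitz estimate converts a $1/n$-scale KL deviation into a $1/\sqrt{n}$-scale standard-deviation deviation, which then combines correctly with the $\sqrt{L^*/n}$ Bernstein prefactor to give $1/n$ overall. Matching the specific constant $2/H$ on $\hat{p}^k(\bar{V}^k-\underline{V}^k)$ via the AM--GM balance $\alpha=2/H^2$ is the one step where the precise numerical constants chosen in $\algcascadingbpi$'s bonus definition are essential.
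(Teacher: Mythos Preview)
Your plan follows the same two-swap strategy as the paper: start from the Bernstein term $2\sqrt{\var_p(V^*_{h+1})L^*/n}$ in $\cM$, replace $p$ by $\hat p^k$ in the variance via the KL control, then replace $V^*_{h+1}$ by $\bar V^k_{h+1}$ using the inductive sandwich, and finally AM--GM to extract the $(2/H)\hat p^k(\bar V^k_{h+1}-\underline V^k_{h+1})$ term. The second swap and the AM--GM split are correct and match the paper.

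The first swap, however, is stated in a form that does not follow from the KL bound in $\cM$. A clean additive estimate $\bigl|\sqrt{\var_p(V^*_{h+1})}-\sqrt{\var_{\hat p^k}(V^*_{h+1})}\bigr|\le H\sqrt{2L/n}$ is not a consequence of $\kl(\hat p^k\|p)\le L/n$; what one actually gets from the KL-based variance comparison (Lemma~\ref{lemma:kl_based_techniques}, Eq.~\eqref{eq:kl_tech_var_change_p}) is the multiplicative bound $\var_p(V^*_{h+1})\le 2\var_{\hat p^k}(V^*_{h+1})+4H^2 L/n$, which after $\sqrt{a+b}\le\sqrt a+\sqrt b$ yields $\sqrt{\var_p(V^*_{h+1})}\le \sqrt{2}\,\sqrt{\var_{\hat p^k}(V^*_{h+1})}+2H\sqrt{L/n}$ with an extra $\sqrt 2$ on the surviving variance. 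The paper handles this by substituting the multiplicative bounds (Eqs.~\eqref{eq:kl_tech_var_change_p} and \eqref{eq:kl_tech_fix_p_change_f}) directly under the square root $\sqrt{\var_p(V^*_{h+1})\cdot L^*/n}$ before splitting, which is what produces the leading coefficient $4$ and keeps the residuals at order $H^2 L/n$. Your route is easily repaired the same way: replace your first-swap inequality by the multiplicative variance comparison, carry the harmless $\sqrt 2$ through your Minkowski step, and the rest of your argument (including the AM--GM balance) goes through with the stated constants.
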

\begin{proof}[Proof of Lemma~\ref{lemma:bpi_con_pV}]
	Following Lemma 3 in \citep{menard2021fast}, we have $\Pr \mbr{\cM} \geq 1-3\delta'$.
	
	Using Eq.~\eqref{eq:kl_tech_var_change_p} and \eqref{eq:kl_tech_fix_p_change_f} in Lemma~\ref{lemma:kl_based_techniques}, we have
	\begin{align*}
		& \sqrt{\var_{s'\sim p}\sbr{V^*_{h+1}(s')} \cdot \frac{L^*(\delta',k) }{n^{k,p}(s,a)} } 
		\\
		\leq & \sqrt{ 2\var_{s'\sim \hat{p}^k}\sbr{V^*_{h+1}(s')} \cdot \frac{L^*(\delta',k) }{n^{k,p}(s,a)} + \frac{ 4H^2 L(\delta',k)}{n^{k,p}(s,a)} \cdot \frac{L^*(\delta',k) }{n^{k,p}(s,a)} }
		\\
		\leq & \sqrt{ 4\var_{s'\sim \hat{p}^k}\sbr{\bar{V}^k_{h+1}(s')} \cdot \frac{L^*(\delta',k) }{n^{k,p}(s,a)} + 4H\hat{p}^k(\cdot|s,a)^\top \sbr{\bar{V}^k_{h+1}-V^*_{h+1}} \cdot \frac{L^*(\delta',k) }{n^{k,p}(s,a)}  }
		\\& +  \frac{2 H L(\delta',k)}{n^{k,p}(s,a)}
		\\
		\overset{\textup{(a)}}{\leq} & \sqrt{ 4\var_{s'\sim \hat{p}^k}\sbr{\bar{V}^k_{h+1}(s')} \cdot \frac{L^*(\delta',k) }{n^{k,p}(s,a)} } + \sqrt{\frac{1}{H} \hat{p}^k(\cdot|s,a)^\top \sbr{\bar{V}^k_{h+1}-\underline{V}^k_{h+1}} \cdot  \frac{4H^2 L^*(\delta',k) }{n^{k,p}(s,a)} } \\&+ \frac{2 H L(\delta',k)}{n^{k,p}(s,a)}
		\\
		\leq & 2\sqrt{\var_{s'\sim \hat{p}^k}\sbr{\bar{V}^k_{h+1}(s')} \cdot  \frac{ L^*(\delta',k) }{n^{k,p}(s,a)} } + \frac{1}{H} \hat{p}^k(\cdot|s,a)^\top \sbr{\bar{V}^k_{h+1}-\underline{V}^k_{h+1}} + \frac{6 H^2 L(\delta',k)}{n^{k,p}(s,a)} ,
	\end{align*}
	where inequality (a) uses the induction hypothesis of Lemma~\ref{lemma:optimism_pessimism}.

	Thus, we have
	\begin{align*}
		\abr{ \sbr{\hat{p}^k(\cdot|s,a) - p(\cdot|s,a)}^\top V^*_{h+1} } \leq &  4\sqrt{\var_{s'\sim \hat{p}^k}\sbr{\bar{V}^k_{h+1}(s')} \cdot  \frac{ L^*(\delta',k) }{n^{k,p}(s,a)} } \\& + \frac{2}{H} \hat{p}^k(\cdot|s,a)^\top \sbr{\bar{V}^k_{h+1}-\underline{V}^k_{h+1}} + \frac{15 H^2 L(\delta',k)}{n^{k,p}(s,a)} .
	\end{align*}
\end{proof}

\subsubsection{Optimism and Estimation Error}

For any $k>0$, $h \in [H]$ and $s \in \cS$, we define 
\begin{equation*}
	\left\{\begin{aligned}
		b^{k,q}(s,a) & :=  \min \lbr{ 4 \sqrt{\frac{\hat{q}^k(s,a) (1-\hat{q}^k(s,a)) L^*(\delta',k)}{n^{k,q}(s,a)}} + \frac{15 L^*(\delta',k)}{n^{k,q}(s,a)} ,\ 1 } , \\& \qquad \forall a \in A^{\univ} \setminus \{a_{\bot}\} ,
		\\
		b^{k,q}(s,a_{\bot}) & := 0 , 
		\\
		b^{k,pV}(s,a) & := \min \Bigg\{ 4\sqrt{\frac{\var_{s'\sim\hat{p}^k}\sbr{\bar{V}^k_{h+1}(s')} L^*(\delta',k) }{n^{k,p}(s,a)}} + \frac{15 H^2 L(\delta',k)}{n^{k,p}(s,a)} \\ & \qquad  + \frac{2}{H} \hat{p}^k(\cdot|s,a)^\top \sbr{\bar{V}^k_{h+1}-\underline{V}^k_{h+1}} ,\ H \Bigg\} , \ \forall a \in A^{\univ} .
	\end{aligned}\right.
\end{equation*}

For any $k>0$, $h \in [H]$, $s \in \cS$ and $A \in \cA$, we define
\begin{equation*}
	\left\{\begin{aligned}
		\bar{Q}^k_h(s, A) & = \min\Bigg\{ \sum_{i=1}^{|A|} \prod_{j=1}^{i-1} (1-\bar{q}^k(s,A(j))) \bar{q}^k(s,A(i)) \cdot \\& \quad \Big(r(s,A(i))+\hat{p}^k(\cdot|s,A(i))^\top \bar{V}^k_{h+1} + b^{k,pV}(s,A(i))\Big) ,\ H \Bigg\}
		\\
		\pi^k_h(s) & =\argmax_{A \in \cA} \bar{Q}^k_h(s, A) ,
		\\
		\bar{V}^k_h(s) & = \bar{Q}^k_h(s, \pi^k_h(s)) ,
		\\
		\bar{V}^k_{H+1}(s) & = 0 ,
	\end{aligned}\right.
\end{equation*}
\begin{equation*}
	\left\{\begin{aligned}
		\underline{Q}^k_h(s,A) & = \max\Bigg\{ \sum_{i=1}^{|A|} \prod_{j=1}^{i-1} (1-\bar{q}^k(s,A(j))) \underline{q}^k(s,A(i)) \cdot \\& \quad \Big(r(s,A(i))+\hat{p}^k(\cdot|s,A(i))^\top \underline{V}^k_{h+1} - b^{k,pV}(s,A(i))\Big) ,\ 0 \Bigg\} ,
		\\
		\underline{V}^k_h(s) & = \underline{Q}^k_h(s, \pi^k_h(s)) ,
		\\
		\underline{V}^k_{H+1}(s) & = 0 ,
	\end{aligned}\right.
\end{equation*}
\begin{equation*}
	\left\{\begin{aligned}
		G^k_h(s, A) & =  \sum_{i=1}^{|A|} \prod_{j=1}^{i-1} (1-\bar{q}^k(s,A(j))) \Bigg( 6H \min\bigg\{  4 \sqrt{\frac{\hat{q}^k(s,a) (1-\hat{q}^k(s,a)) L^*(\delta',k)}{n^{k,q}(s,a)}} \\& \quad + \frac{15 L^*(\delta',k)}{n^{k,q}(s,a)} ,\ 1 \bigg\} +  \underline{q}^k(s,A(i)) \cdot \bigg( \min\bigg\{ 8\sqrt{\frac{\var_{s'\sim\hat{p}^k}\sbr{\bar{V}^k_{h+1}(s')} L^*(\delta',k) }{n^{k,p}(s,a)}} \\& \quad + \frac{30 H^2 L(\delta',k)}{n^{k,p}(s,a)}+ \frac{4}{H} \hat{p}^k(\cdot|s,a)^\top G^k_{h+1} ,\ 2H \bigg\} +  \hat{p}^k(\cdot|s,a)^\top G^k_{h+1} \bigg) \Bigg) ,
		\\
		G^k_h(s) & = G^k_h(s, \pi^k_h(s)) ,
		\\
		G^k_{H+1}(s) & = 0 .
	\end{aligned}\right.
\end{equation*}

\begin{lemma}[Optimism]\label{lemma:bpi_optimism_pessimism}
	Assume that event $\cL \cap \cM$ holds. Then, for any $k > 0$, $h \in [H]$ and $s \in \cS$,
	\begin{align*}
		\bar{V}^k_h(s) & \geq V^{*}_h(s) .
	\end{align*}
\end{lemma}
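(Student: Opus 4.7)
The plan is to mirror the backward induction argument of Lemma~\ref{lemma:optimism_pessimism}, but replace the concentration tool by the second statement of Lemma~\ref{lemma:bpi_con_pV} which has the bonus form matched to $\algcascadingbpi$. Although the lemma statement only claims optimism of $\bar{V}^k_h$, the inductive hypothesis used inside Lemma~\ref{lemma:bpi_con_pV} requires both $\bar{V}^k_{h+1}\ge V^*_{h+1}$ and $V^*_{h+1}\ge\underline{V}^k_{h+1}$, so I will prove these two inequalities jointly by backward induction on $h$ (the pessimism half is symmetric and identical to the argument already carried out for $\algcascadingeuler$).

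Base case: $\bar{V}^k_{H+1}(s)=V^*_{H+1}(s)=\underline{V}^k_{H+1}(s)=0$ by definition. For the inductive step at level $h$, the clipped case $\bar{V}^k_h(s)=H$ trivially satisfies $\bar{V}^k_h(s)\ge V^*_h(s)$. In the non-clipped case, let $A^*$ be an optimal action at $(s,h)$ in the true MDP. By correctness of the oracle (Lemma~\ref{lemma:guarantee_oracle}), $\bar{V}^k_h(s)=\bar{Q}^k_h(s,\pi^k_h(s))\ge \bar{Q}^k_h(s,A^*)$, so it suffices to show $\bar{Q}^k_h(s,A^*)\ge Q^*_h(s,A^*)$.

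The core step is the per-item domination: under event $\cL\cap\cM$ together with the induction hypothesis at $h+1$, Lemma~\ref{lemma:bpi_con_pV} gives
\[
\hat{p}^k(\cdot|s,a)^\top\bar{V}^k_{h+1}+b^{k,pV}(s,a)\;\ge\;p(\cdot|s,a)^\top V^*_{h+1}
\]
for every $a$, because $\hat{p}^k(\cdot|s,a)^\top(\bar{V}^k_{h+1}-V^*_{h+1})\ge 0$ by induction, and $b^{k,pV}(s,a)$ is designed to absorb $|(\hat{p}^k-p)^\top V^*_{h+1}|$ via its $4\sqrt{\cdot}$, $\tfrac{15H^2 L}{n^{k,p}}$ and $\tfrac{2}{H}\hat{p}^k(\cdot|s,a)^\top(\bar{V}^k_{h+1}-\underline{V}^k_{h+1})$ pieces. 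On the attraction side, event $\cL$ directly gives $\bar{q}^k(s,a)\ge q(s,a)$.

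With those two per-item inequalities, I will invoke the monotonicity property (Lemma~\ref{lemma:monotonicity}) to lift the inequality to the cascading reward $f$. To apply Lemma~\ref{lemma:monotonicity}, I need $A^*$ to be sorted in descending order of the weight $w(a):=r(s,a)+p(\cdot|s,a)^\top V^*_{h+1}$; this holds by part (i) of Lemma~\ref{lemma:f_property} applied to the Bellman-optimality maximizer $A^*$ in Eq.~\eqref{eq:bellman_optimality_equation}. The chain then reads
\[
\bar{Q}^k_h(s,A^*)\;\ge\;f\bigl(A^*,\bar{q}^k(s,\cdot),\,r(s,\cdot)+\hat{p}^k(\cdot|s,\cdot)^\top\bar{V}^k_{h+1}+b^{k,pV}(s,\cdot)\bigr)\;\ge\;f(A^*,q(s,\cdot),w)\;=\;Q^*_h(s,A^*)\;=\;V^*_h(s),
\]
where the first inequality is just rewriting $\bar{Q}^k_h$ in the $f$ notation (before the min with $H$), the second uses the per-item bounds together with Lemma~\ref{lemma:monotonicity}, and the last equalities are the Bellman optimality equation.

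The main obstacle is the second $f$-inequality: I cannot simply substitute per-item bounds into the cascading sum, because the products $\prod_{j<i}(1-\bar{q}^k(s,A^*(j)))$ get smaller when $\bar{q}^k$ replaces $q$, which works against us. This is precisely the content of Lemma~\ref{lemma:monotonicity}, whose hypothesis requires that the weights be sorted in descending order and dominate $w(a_\bot)$; I will argue this sortedness holds for $A^*$ by the same reasoning used in the oracle's correctness proof (any violation can be fixed by an interchange that only improves $Q^*_h(s,A^*)$, contradicting optimality). Once monotonicity is invoked, the conclusion $\bar{V}^k_h(s)\ge V^*_h(s)$ is immediate, completing the induction.
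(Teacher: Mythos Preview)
Your proposal is correct and follows essentially the same route as the paper, which simply defers to the proof of Lemma~\ref{lemma:optimism_pessimism} with the concentration inputs swapped for Lemmas~\ref{lemma:bpi_con_q} and~\ref{lemma:bpi_con_pV}. In particular, you correctly observe that the second statement of Lemma~\ref{lemma:bpi_con_pV} implicitly relies on the two-sided hypothesis $\bar{V}^k_{h+1}\ge V^*_{h+1}\ge\underline{V}^k_{h+1}$, so the pessimism half must be carried along in the induction even though the lemma only states optimism.
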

\begin{proof}[Proof of Lemma~\ref{lemma:bpi_optimism_pessimism}]
	By a similar analysis as that for Lemma~\ref{lemma:optimism_pessimism} with different definitions of $b^{k,q}(s,a), b^{k,pV}(s,a)$ and Lemmas~\ref{lemma:bpi_con_q}, \ref{lemma:bpi_con_pV}, we obtain this lemma. 
	
	%
	%
	%
\end{proof}

\begin{lemma}[Estimation Error] \label{lemma:estimation_error}
	Assume that event $\cK \cap \cL \cap \cM$ holds. Then, with probability at least $1-\delta$, for any $k>0$, $h \in [H]$ and $s \in \cS$,
	$$
	\bar{V}^k_h(s) - \min\{\underline{V}^k_h(s), V^{\pi^k}_h(s)\} \leq G^k_h(s) .
	$$
\end{lemma}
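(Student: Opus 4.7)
The plan is to run a backward induction on $h$, establishing the stronger pair of bounds $\bar V^k_h(s) - \underline V^k_h(s) \leq G^k_h(s)$ and $\bar V^k_h(s) - V^{\pi^k}_h(s) \leq G^k_h(s)$ simultaneously. Together these imply the stated inequality since $\bar V^k_h(s) - \min\{\underline V^k_h(s), V^{\pi^k}_h(s)\} = \max\{\bar V^k_h(s) - \underline V^k_h(s),\ \bar V^k_h(s) - V^{\pi^k}_h(s)\}$. The base case $h=H+1$ is immediate as all three value functions and $G^k_{H+1}$ vanish identically.

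For the gap $\bar V^k_h(s) - \underline V^k_h(s)$, the strategy mirrors Lemma~\ref{lemma:V_var_minus_V_underline}. Writing $A'=\pi^k_h(s)$, I would plug the definitions into the difference and apply the identities $\bar q^k - \underline q^k = 2 b^{k,q}$ and $(r + \hat p^k \bar V^k_{h+1} + b^{k,pV}) - (r + \hat p^k \underline V^k_{h+1} - b^{k,pV}) = \hat p^k(\bar V^k_{h+1} - \underline V^k_{h+1}) + 2 b^{k,pV}$. Bounding the scalar multiplying $b^{k,q}$ crudely by $6H$ (reward at most $1$ and the cumulative weighted tail at most $H$), and then invoking the inductive hypothesis $\bar V^k_{h+1} - \underline V^k_{h+1} \leq G^k_{h+1}$ both inside the explicit $\hat p^k$ propagation and inside the self-bounded term $\tfrac{2}{H}\hat p^k(\bar V^k_{h+1}-\underline V^k_{h+1})$ contained in $b^{k,pV}$, the resulting expression matches the definition of $G^k_h(s, A')$ term by term.

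For the gap $\bar V^k_h(s) - V^{\pi^k}_h(s)$, I would apply the cascading value difference identity of Lemma~\ref{lemma:value_diff_lemma} to the optimistic ``pseudo-MDP'' with parameters $(\bar q^k, \hat p^k, r + b^{k,pV})$ and the true MDP $(q,p,r)$ under the common policy $\pi^k$, producing a sum of per-step slacks. Each slack decomposes into (i) the attraction inflation controlled by $\bar q^k - q \leq 2 b^{k,q}$ from event $\cL$; (ii) the transition-value gap $(\hat p^k - p)^\top V^{\pi^k}_{h+1}$ controlled via event $\cM$; and (iii) the additive bonus $b^{k,pV}$ built into the pseudo-reward. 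Term (ii) is the main obstacle, because the concentration in Lemma~\ref{lemma:bpi_con_pV} is stated only for $V^*_{h+1}$. I would repeat that derivation with $V^{\pi^k}_{h+1}$ in place of $V^*_{h+1}$: KL-Bernstein on event $\cM$ bounds $(\hat p^k-p)^\top V^{\pi^k}_{h+1}$ by $2\sqrt{\var_p(V^{\pi^k}_{h+1})\, L^*/n}$ plus a lower-order term; the variance can then be migrated from $p$ to $\hat p^k$ via the KL bound and from $V^{\pi^k}_{h+1}$ to $\bar V^k_{h+1}$ using the crude inequality $(\bar V^k_{h+1}-V^{\pi^k}_{h+1})^2 \leq H(\bar V^k_{h+1}-V^{\pi^k}_{h+1})$, which by induction is at most $H\,G^k_{h+1}$. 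This produces a $\tfrac{2}{H}\hat p^k\, G^k_{h+1}$ contribution that is dominated by the $\tfrac{2}{H}\hat p^k(\bar V^k_{h+1}-\underline V^k_{h+1})$ piece already present in $b^{k,pV}$ after invoking the other half of the induction. Combining (i)--(iii) with the $6H\, b^{k,q}$ coefficient inherited from the prefix-weight expansion, the per-step slack collapses to $G^k_h(s, A')$ and the induction closes.

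The ``with probability at least $1-\delta$'' clause merely records the probability of the assumed events $\cK \cap \cL \cap \cM$; conditional on these the argument above is deterministic, so no additional concentration is needed beyond what is already folded into the definitions of $\cL$ and $\cM$.
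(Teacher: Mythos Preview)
Your plan for $\bar V^k_h-\underline V^k_h\le G^k_h$ is fine and matches the paper. The gap is in the second half. You claim that ``KL-Bernstein on event $\cM$ bounds $(\hat p^k-p)^\top V^{\pi^k}_{h+1}$ by $2\sqrt{\var_p(V^{\pi^k}_{h+1})\,L^*/n}$''. That is not what $\cM$ gives you. Event $\cM$ provides the Bernstein bound with the \emph{light} log factor $L^*(\delta',k)$ only for the fixed, data-independent function $V^*_{h+1}$; for any other (in particular, data-dependent) function you must go through the KL bound $\kl(\hat p^k\|p)\le L(\delta',k)/n$ and Lemma~\ref{lemma:kl_based_techniques}, which yields $\sqrt{2\var_p(V^{\pi^k}_{h+1})\,L/n}$ with the \emph{heavy} factor $L(\delta',k)=L^*(\delta',k)+(S-1)\log(8e(k+1))$. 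After migrating the variance from $p$ to $\hat p^k$ and from $V^{\pi^k}_{h+1}$ to $\bar V^k_{h+1}$ you end up with a leading term of order $\sqrt{\var_{\hat p^k}(\bar V^k_{h+1})\,L/n}$, whereas $G^k_h$ (through $2b^{k,pV}$) only allots $8\sqrt{\var_{\hat p^k}(\bar V^k_{h+1})\,L^*/n}$. Since $L$ can be $\Theta(S)\cdot L^*$, your bound does not fit into $G^k_h$ and the induction does not close.

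The paper's fix is to pivot through $V^*_{h+1}$: write
\[
\hat p^k\bar V^k_{h+1}-pV^{\pi^k}_{h+1}
=\hat p^k(\bar V^k_{h+1}-V^{\pi^k}_{h+1})+(\hat p^k-p)^\top V^*_{h+1}+(p-\hat p^k)^\top(V^*_{h+1}-V^{\pi^k}_{h+1}).
\]
The middle term uses the $L^*$ concentration directly (it is exactly $b^{k,pV}$ up to the self-bounding piece). The last term does use the KL bound with $L$, but it is applied to the \emph{small} nonnegative function $V^*_{h+1}-V^{\pi^k}_{h+1}$, whose variance is at most $H\,\hat p^k(V^*_{h+1}-V^{\pi^k}_{h+1})\le H\,\hat p^k G^k_{h+1}$ by optimism and the induction hypothesis; an AM--GM step then converts $\sqrt{L\cdot(\cdot)}$ into $\tfrac{2}{H}\hat p^k G^k_{h+1}$ plus a lower-order $H^2L/n$ term, both of which are budgeted for in $G^k_h$. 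This decomposition is the missing idea in your proposal. A smaller side remark: invoking Lemma~\ref{lemma:value_diff_lemma} (a full telescope under the true dynamics $q,p$) and simultaneously doing backward induction via $\hat p^k$-propagation are two different bookkeeping schemes; the paper works purely at the one-step $Q$-level with $\hat p^k$ propagation, which is what $G^k_h$'s recursive definition is tailored to.
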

\begin{proof}[Proof of Lemma~\ref{lemma:estimation_error}]
	
	For any $k>0$ and $s \in \cS$, it trivially holds that $\bar{V}^k_{H+1}(s) - \min\{\underline{V}^k_{H+1}(s), V^{\pi^k}_{H+1}(s)\} \leq G^k_{H+1}(s)$.
	
	For any $k>0$, $h \in [H]$ and $(s,A) \in \cS \times \cA$, if $G^k_h(s, A)=H$, then it trivially holds that $\bar{Q}^k_{h+1}(s,A) - \min\{\underline{Q}^k_{h+1}(s,A), Q^{\pi^k}_{h+1}(s,A)\} \leq G^k_{h+1}(s,A)$. 
	
	Otherwise, supposing $\bar{V}^k_{h+1}(s) - \min\{\underline{V}^k_{h+1}(s), V^{\pi^k}_{h+1}(s)\} \leq G^k_{h+1}(s)$, we first prove $\bar{Q}^k_h(s, A) - \underline{Q}^k_h(s,A) \leq G^k_{h}(s,A)$.
	\begin{align*}
		& \quad\ \bar{Q}^k_h(s, A) - \underline{Q}^k_h(s,A) 
		\\
		& \leq  \sum_{i=1}^{|A|} \prod_{j=1}^{i-1} (1-\bar{q}^k(s,A(j))) (\underline{q}^k(s,A(i)) + 2 b^{k,q}(s,A(i)) )  \cdot
		\\&\quad \sbr{r(s,A(i)) \!+\! \hat{p}^k(\cdot|s,A(i))^\top \bar{V}^k_{h+1} \!+\! b^{k,pV}(s,A(i))} \\& \quad - \sum_{i=1}^{|A|} \prod_{j=1}^{i-1} (1-\bar{q}^k(s,A(j))) \underline{q}^k(s,A(i)) \sbr{r(s,A(i))+\hat{p}^k(\cdot|s,A(i))^\top \underline{V}^k_{h+1} - b^{k,pV}(s,A(i))}
		\\
		& = \sum_{i=1}^{|A|} \prod_{j=1}^{i-1} (1-\bar{q}^k(s,A(j))) \bigg( 6H b^{k,q}(s,A(i)) \\& \quad + \underline{q}^k(s,A(i)) \sbr{ 2b^{k,pV}(s,A(i)) + \hat{p}^k(\cdot|s,A(i))^\top \sbr{\bar{V}^k_{h+1} - \underline{V}^k_{h+1}}  } \bigg)
		\\
		&\! \leq \!\! \sum_{i=1}^{|A|} \! \prod_{j=1}^{i-1} (1 \!-\! \bar{q}^k(s,A(j))) \bigg( \! 6H b^{k,q}(s,A(i)) \!+\! \underline{q}^k(s,A(i)) \sbr{ 2b^{k,pV}(s,A(i)) \!+\! \hat{p}^k(\cdot|s,A(i))^{\!\top} G^k_{h+1}  } \!\! \bigg)
		\\
		& = G^k_{h}(s,A) .
	\end{align*}
	
	Now, we prove $\bar{Q}^k_h(s, A) - Q^{\pi^k}_h(s,A) \leq G^k_{h}(s,A)$.
	
	\begin{align}
		& \bar{Q}^k_h(s, A) - Q^{\pi^k}_h(s,A) 
		\nonumber\\
		\leq & \sum_{i=1}^{|A|} \prod_{j=1}^{i-1} (1-\bar{q}^k(s,A(j))) (\underline{q}^k(s,A(i)) \!+\! 2 b^{k,q}(s,A(i)) )  \cdot
		\nonumber\\&\quad \sbr{r(s,A(i)) \!+\! \hat{p}^k(\cdot|s,A(i))^\top \bar{V}^k_{h+1} \!+\! b^{k,pV}(s,A(i))} \nonumber\\& - \sum_{i=1}^{|A|} \prod_{j=1}^{i-1} (1-\bar{q}^k(s,A(j))) \underline{q}^k(s,A(i)) \sbr{r(s,A(i))+p(\cdot|s,A(i))^\top V^{\pi^k}_{h+1}}
		\nonumber\\
		\leq & \sum_{i=1}^{|A|} \prod_{j=1}^{i-1} (1-\bar{q}^k(s,A(j))) \Bigg( 6H b^{k,q}(s,A(i)) +  \underline{q}^k(s,A(i))\bigg( b^{k,pV}(s,A(i)) \nonumber\\& + \hat{p}^k(\cdot|s,A(i))^\top \bar{V}^k_{h+1} - p(\cdot|s,A(i))^\top V^{\pi^k}_{h+1} \bigg) \Bigg)
		\nonumber\\
		\leq & \sum_{i=1}^{|A|} \prod_{j=1}^{i-1} (1-\bar{q}^k(s,A(j))) \Bigg( 6H b^{k,q}(s,A(i)) +  \underline{q}^k(s,A(i))\bigg( b^{k,pV}(s,A(i)) \nonumber\\& + \hat{p}^k(\cdot|s,A(i))^\top \sbr{\bar{V}^k_{h+1} - V^{\pi^k}_{h+1}} + \sbr{ \hat{p}^k(\cdot|s,A(i)) - p(\cdot|s,A(i)) }^\top V^{*}_{h+1} \nonumber\\& + \sbr{ p(\cdot|s,A(i)) - \hat{p}^k(\cdot|s,A(i)) }^\top \sbr{ V^{*}_{h+1}  - V^{\pi^k}_{h+1} } \bigg) \Bigg) \label{eq:bar_Q-Q_pi}
	\end{align}
	
	In addition, for any $(s,a) \in \cS \times A^{\univ}$, we have
	\begin{align}
		& \sbr{ p(\cdot|s,a) - \hat{p}^k(\cdot|s,a) }^\top \sbr{ V^{*}_{h+1}  - V^{\pi^k}_{h+1} } 
		\nonumber\\
		\leq & 2 \sqrt{ \frac{\var_{s'\sim p}\sbr{V^{*}_{h+1}(s') - V^{\pi^k}_{h+1}(s')} L(\delta',k) }{n^{k,p}(s,a)} } +  \frac{3H L(\delta',k)}{n^{k,p}(s,a)}
		\nonumber\\
		\leq & 2 \sqrt{ \frac{ L(\delta',k) }{n^{k,p}(s,a)} \cdot \sbr{ 2 \var_{s'\sim \hat{p}^k}\sbr{V^{*}_{h+1}(s') - V^{\pi^k}_{h+1}(s')} + \frac{4H^2 L(\delta',k)}{n^{k,p}(s,a)} } } +  \frac{3H L(\delta',k)}{n^{k,p}(s,a)}
		\nonumber\\
		\leq & 2 \sqrt{ \frac{ L(\delta',k) }{n^{k,p}(s,a)} \cdot \sbr{ 2 H \hat{p}^k(\cdot|s,a)^\top \sbr{V^{*}_{h+1} - V^{\pi^k}_{h+1}} + \frac{4H^2 L(\delta',k)}{n^{k,p}(s,a)} } } +  \frac{3H L(\delta',k)}{n^{k,p}(s,a)}
		\nonumber\\
		\leq & 2 \sqrt{ \frac{ 2H^2 L(\delta',k) }{n^{k,p}(s,a)} \cdot \frac{1}{H} \hat{p}^k(\cdot|s,a)^\top \sbr{V^{*}_{h+1} - V^{\pi^k}_{h+1}} } + \frac{4H L(\delta',k)}{n^{k,p}(s,a)} + \frac{3H L(\delta',k)}{n^{k,p}(s,a)}
		\nonumber\\
		\leq & \frac{2}{H} \hat{p}^k(\cdot|s,a)^\top \sbr{V^{*}_{h+1} - V^{\pi^k}_{h+1}} + \frac{11 H^2 L(\delta',k)}{n^{k,p}(s,a)} \label{eq:(p-hat_p)(V_star-V_pi)}
	\end{align}
	
	Plugging Eq.~\eqref{eq:(p-hat_p)(V_star-V_pi)} into Eq.~\eqref{eq:bar_Q-Q_pi}, we have
	\begin{align*}
		& \bar{Q}^k_h(s, A) - Q^{\pi^k}_h(s,A) 
		\\
		\leq & \sum_{i=1}^{|A|} \prod_{j=1}^{i-1} (1-\bar{q}^k(s,A(j))) \Bigg( 6H \bigg( 4 \sqrt{\frac{\hat{q}^k(s,a) (1-\hat{q}^k(s,a)) L^*(\delta',k)}{n^{k,q}(s,a)}} + \frac{15 L^*(\delta',k)}{n^{k,q}(s,a)} \bigg) 
		\\& +  \underline{q}^k(s,A(i))\bigg( 4\sqrt{\frac{\var_{s'\sim\hat{p}^k}\sbr{\bar{V}^k_{h+1}(s')} L^*(\delta',k) }{n^{k,p}(s,a)}} \\& + \frac{15 H^2 L(\delta',k)}{n^{k,p}(s,a)}  + \frac{2}{H} \hat{p}^k(\cdot|s,a)^\top \sbr{\bar{V}^k_{h+1}-\underline{V}^k_{h+1}} \\& + \hat{p}^k(\cdot|s,A(i))^\top \sbr{\bar{V}^k_{h+1} - V^{\pi^k}_{h+1}} + 2 \sqrt{ \frac{\var_{s'\sim \hat{p}^k}\sbr{\bar{V}^k_{h+1}(s')} L^*(\delta',k) }{n^{k,p}(s,a)} } +  \frac{3H L(\delta',k)}{n^{k,p}(s,a)} \\& + \frac{2}{H} \hat{p}^k(\cdot|s,a)^\top \sbr{V^{*}_{h+1} - V^{\pi^k}_{h+1}} + \frac{11 H^2 L(\delta',k)}{n^{k,p}(s,a)} \bigg) \Bigg)
		\\
		\leq & \sum_{i=1}^{|A|} \prod_{j=1}^{i-1} (1-\bar{q}^k(s,A(j))) \Bigg( 6H \bigg( 4 \sqrt{\frac{\hat{q}^k(s,a) (1-\hat{q}^k(s,a)) L^*(\delta',k)}{n^{k,q}(s,a)}} + \frac{15 L^*(\delta',k)}{n^{k,q}(s,a)} \bigg) 
		\\& +  \underline{q}^k(s,A(i))\bigg( 6\sqrt{\frac{\var_{s'\sim\hat{p}^k}\sbr{\bar{V}^k_{h+1}(s')} L^*(\delta',k) }{n^{k,p}(s,a)}} \\& + \frac{29 H^2 L(\delta',k)}{n^{k,p}(s,a)} + \sbr{1+\frac{4}{H}} \hat{p}^k(\cdot|s,a)^\top G^k_{h+1} \bigg) \Bigg)
	\end{align*}
	
	By the clipping bound in Eq.~\eqref{eq:bar_Q-Q_pi}, we have
	\begin{align*}
		& \bar{Q}^k_h(s, A) - Q^{\pi^k}_h(s,A) 
		\\
		\leq & \sum_{i=1}^{|A|} \prod_{j=1}^{i-1} (1-\bar{q}^k(s,A(j))) \Bigg( 6H \min\bigg\{ 4 \sqrt{\frac{\hat{q}^k(s,a) (1-\hat{q}^k(s,a)) L^*(\delta',k)}{n^{k,q}(s,a)}} + \frac{15 L^*(\delta',k)}{n^{k,q}(s,a)} ,\ 1 \bigg\} 
		\\& +  \underline{q}^k(s,A(i)) \cdot \bigg( \min\bigg\{ 6\sqrt{\frac{\var_{s'\sim\hat{p}^k}\sbr{\bar{V}^k_{h+1}(s')} L^*(\delta',k) }{n^{k,p}(s,a)}} + \frac{29 H^2 L(\delta',k)}{n^{k,p}(s,a)} \\& + \frac{4}{H} \hat{p}^k(\cdot|s,a)^\top G^k_{h+1} ,\ 2H \bigg\}  + \hat{p}^k(\cdot|s,a)^\top G^k_{h+1} \bigg) \Bigg)
		\\
		\leq & G^k_{h+1}(s,A) .
	\end{align*}
	
	Then, we have
	\begin{align*}
		\bar{V}^k_h(s) - \min\{ \underline{V}^k_h(s) , V^{\pi^k}_h(s) \} &= \bar{Q}^k_h(s, \pi^k_h(s)) - \min\{ \underline{Q}^k_h(s, \pi^k_h(s)) , Q^{\pi^k}_h(s,\pi^k_h(s)) \} 
		\\
		&\leq G^k_h(s, \pi^k_h(s)) 
		\\
		&= G^k_h(s) ,
	\end{align*}
	which completes the proof.
\end{proof}

\subsubsection{Proof of Theorem~\ref{thm:bpi_ub}}

\begin{proof}[Proof of Theorem~\ref{thm:bpi_ub}]
	Recall that $\delta':=\frac{\delta}{7}$. From Lemmas~\ref{lemma:con_n_s_a}, \ref{lemma:bpi_con_q} and \ref{lemma:bpi_con_pV}, we have $\Pr[\cK \cap \cL \cap \cM] \geq 1-7\delta'=1-\delta$. Below we assume that event $\cK \cap \cL \cap \cM$ holds, and then prove the correctness and sample complexity.
	
	First, we prove the correctness.
	Using Lemma~\ref{lemma:bpi_optimism_pessimism} and \ref{lemma:estimation_error}, we have that the output policy $\pi^{K+1}$ satisfies that
	$$
	V^{*}_1(s_1) - V^{\pi^{K+1}}_1(s_1) \leq \bar{V}^{K+1}_1(s_1) - V^{\pi^{K+1}}_1(s_1) \leq G^{K+1}_1(s_1) \leq \varepsilon ,
	$$
	which indicates that policy $\pi^{K+1}$ is $\varepsilon$-optimal. 
	
	Next, we prove sample complexity.
	
	For any $k>0$, $h \in [H]$, $s \in \cS$ and $A \in \cA$,
	\begin{align}
		G^k_h(s, A) \leq & \sum_{i=1}^{|A|} \prod_{j=1}^{i-1} (1-\bar{q}^k(s,A(j))) \Bigg( 6H \bigg( 4 \sqrt{\frac{\hat{q}^k(s,a) (1-\hat{q}^k(s,a)) L^*(\delta',k)}{n^{k,q}(s,a)}} + \frac{15 L^*(\delta',k)}{n^{k,q}(s,a)} \bigg) 
		\nonumber\\& + \underline{q}^k(s,A(i)) \bigg( 8\sqrt{\frac{\var_{s'\sim\hat{p}^k}\sbr{\bar{V}^k_{h+1}(s')} L^*(\delta',k) }{n^{k,p}(s,A(i))}} + \frac{30 H^2 L(\delta',k)}{n^{k,p}(s,A(i))} \nonumber\\& + \sbr{1+\frac{4}{H}} \hat{p}^k(\cdot|s,A(i))^\top G^k_{h+1} \bigg) \Bigg)
		\nonumber\\
		\leq & \sum_{i=1}^{|A|} \prod_{j=1}^{i-1} (1-q(s,A(j))) \Bigg(  24H \sqrt{\frac{\hat{q}^k(s,a) (1-\hat{q}^k(s,a)) L^*(\delta',k)}{n^{k,q}(s,a)}} + \frac{90H L^*(\delta',k)}{n^{k,q}(s,a)} 
		\nonumber\\& + q(s,A(i)) \bigg( 8\sqrt{\frac{\var_{s'\sim\hat{p}^k}\sbr{\bar{V}^k_{h+1}(s')} L^*(\delta',k) }{n^{k,p}(s,A(i))}}  + \frac{30 H^2 L(\delta',k)}{n^{k,p}(s,A(i))} \nonumber\\& + \sbr{1+\frac{4}{H}} \hat{p}^k(\cdot|s,A(i))^\top G^k_{h+1} \bigg) \Bigg)
		\nonumber\\
		\overset{\textup{(a)}}{\leq} & 
		\sum_{i=1}^{|A|} \prod_{j=1}^{i-1} (1-q(s,A(j))) \Bigg(  24H \sqrt{\frac{q(s,a) (1-q(s,a)) L^*(\delta',k)}{n^{k,q}(s,a)}} + \frac{186H L^*(\delta',k)}{n^{k,q}(s,a)}
		\nonumber\\& + q(s,A(i)) \bigg( 8\sqrt{\frac{\var_{s'\sim\hat{p}^k}\sbr{\bar{V}^k_{h+1}(s')} L^*(\delta',k) }{n^{k,p}(s,A(i))}}  + \frac{30 H^2 L(\delta',k)}{n^{k,p}(s,A(i))} \nonumber\\& \!+\! \sbr{1 \!+\! \frac{4}{H}} p(\cdot|s,A(i))^\top G^k_{h+1}  + \sbr{1+\frac{4}{H}} \sbr{\hat{p}^k(\cdot|s,A(i)) - p(\cdot|s,A(i))}^\top G^k_{h+1} \bigg) \Bigg) , \label{eq:G_ub}
	\end{align}
	where inequality (a) comes from Eq.~\eqref{eq:bpi_q_var}.
	
	Using Eq.~\eqref{eq:kl_tech_fix_f_change_p} in Lemma~\ref{lemma:kl_based_techniques}, we have
	\begin{align}
		\sbr{\hat{p}^k(\cdot|s,A(i))-p(\cdot|s,A(i))}^\top G^k_{h+1} \leq & \sqrt{ \frac{2 \var_{s'\sim p}\sbr{G^k_{h+1}(s')} L(\delta,k)}{n^{k,p}(s,A(i))} } + \frac{2 H L(\delta,k)}{3n^{k,p}(s,A(i))} 
		\nonumber\\
		\leq & \sqrt{ \frac{1}{H} p(\cdot|s,A(i))^\top G^k_{h+1} \cdot \frac{ 2 H^2  L(\delta,k)}{n^{k,p}(s,A(i))} } + \frac{2 H L(\delta,k)}{3n^{k,p}(s,A(i))}
		\nonumber\\
		\leq &  \frac{1}{H} p(\cdot|s,A(i))^\top G^k_{h+1} + \frac{ 3 H^2  L(\delta,k)}{n^{k,p}(s,A(i))} \label{eq:p_hat_minus_p_times_G}
	\end{align}

	According to Eqs.~\eqref{eq:kl_tech_var_change_p} and \eqref{eq:kl_tech_fix_p_change_f} in Lemma~\ref{lemma:kl_based_techniques}, we have
	\begin{align*}
		\sqrt{ \var_{s'\sim\hat{p}^k}\sbr{\bar{V}^k_{h+1}(s')} } \leq & \sqrt{ 2\var_{s'\sim p}\sbr{\bar{V}^k_{h+1}(s')} + \frac{4H^2 L(\delta,k)}{n^{k,p}(s,a)} }
		\\
		\leq & \sqrt{ 4\var_{s'\sim p}\sbr{V^{\pi^k}_{h+1}(s')} + 4H p(\cdot|s,a)^\top \sbr{ \bar{V}^k_{h+1} - V^{\pi^k}_{h+1} } + \frac{4H^2 L(\delta,k)}{n^{k,p}(s,a)} }
		\\
		\leq & \sqrt{ 4\var_{s'\sim p}\sbr{V^{\pi^k}_{h+1}(s')} + 4H p(\cdot|s,a)^\top \sbr{ \bar{V}^k_{h+1} - \underline{V}^k_{h+1} } + \frac{4H^2 L(\delta,k)}{n^{k,p}(s,a)} }
		\\
		\leq & \sqrt{ 4\var_{s'\sim p}\sbr{V^{\pi^k}_{h+1}(s')} } + \sqrt{ 4H p(\cdot|s,a)^\top G^k_{h+1} } + \sqrt{ \frac{4H^2 L(\delta,k)}{n^{k,p}(s,a)} }
	\end{align*}
	Then,
	\begin{align}
		\sqrt{ \frac{\var_{s'\sim\hat{p}^k}\sbr{\bar{V}^k_{h+1}(s')} L^*(\delta',k)}{n^{k,p}(s,a)} } 
		\leq & 
		\sqrt{ \frac{ 4\var_{s'\sim p}\sbr{V^{\pi^k}_{h+1}(s')} L^*(\delta',k)}{n^{k,p}(s,a)} } \nonumber\\& + \sqrt{ \frac{ 4H^2 L^*(\delta',k)}{n^{k,p}(s,a)} \cdot \frac{1}{H} p(\cdot|s,a)^\top G^k_{h+1} } + \frac{2H L(\delta',k)}{n^{k,p}(s,a)} 
		\nonumber\\
		\leq & 
		2 \sqrt{ \frac{ \var_{s'\sim p}\sbr{V^{\pi^k}_{h+1}(s')} L^*(\delta',k)}{n^{k,p}(s,a)} } + \frac{6H^2 L(\delta',k)}{n^{k,p}(s,a)} \nonumber\\& + \frac{1}{H} p(\cdot|s,a)^\top G^k_{h+1}  \label{eq:sqrt_Var_L_div_n}
	\end{align}
	
	Plugging Eqs.~\eqref{eq:p_hat_minus_p_times_G} and \eqref{eq:sqrt_Var_L_div_n} into Eq.~\eqref{eq:G_ub}, we obtain
	\begin{align*}
		G^k_h(s, A) \leq & \sum_{i=1}^{|A|} \prod_{j=1}^{i-1} (1-q(s,A(j))) \Bigg(  24H \sqrt{\frac{q(s,a) (1-q(s,a)) L^*(\delta',k)}{n^{k,q}(s,a)}} + \frac{186H L^*(\delta',k)}{n^{k,q}(s,a)} 
		\\& + q(s,A(i)) \bigg( \!  
		16 \sqrt{ \frac{ \var_{s'\sim p}\sbr{V^{\pi^k}_{h+1}(s')} L^*(\delta',k)}{n^{k,p}(s,a)} }  \!+\! \frac{42 H^2 L(\delta',k)}{n^{k,p}(s,a)} \!+\! \frac{8}{H} p(\cdot|s,a)^\top G^k_{h+1} 
		\nonumber\\& + \frac{30 H^2 L(\delta',k)}{n^{k,p}(s,A(i))} + \sbr{1+\frac{4}{H}} p(\cdot|s,A(i))^\top G^k_{h+1} \nonumber\\& + \sbr{1+\frac{4}{H}} \cdot \sbr{ \frac{1}{H} p(\cdot|s,A(i))^\top G^k_{h+1} + \frac{ 3 H^2  L(\delta,k)}{n^{k,p}(s,A(i))} } \bigg) \Bigg)
		\\
		\leq & \sum_{i=1}^{|A|} \prod_{j=1}^{i-1} (1-q(s,A(j))) \Bigg( 24H \sqrt{\frac{q(s,a) (1-q(s,a)) L^*(\delta',k)}{n^{k,q}(s,a)}} + \frac{186H L^*(\delta',k)}{n^{k,q}(s,a)} \\& + q(s,A(i)) \bigg( 16 \sqrt{ \frac{ \var_{s'\sim p}\sbr{V^{\pi^k}_{h+1}(s')} L^*(\delta',k)}{n^{k,p}(s,a)} }  + \frac{ 87H^2 L(\delta',k)}{n^{k,p}(s,A(i))} \\& +  \sbr{1+\frac{17}{H}} p(\cdot|s,A(i))^\top G^k_{h+1} \bigg) \Bigg)
	\end{align*}

	Unfolding the above inequality over $h=1,2,\dots,H$, we have
	\begin{align*}
		G^k_1(s_1) \leq &  \sum_{h=1}^{H} \! \sum_{(s,a) \in \cS \times A^{\univ} \setminus \{a_{\bot}\}} \!\!\!\!\!\!\!\!\!\!\!\!\!\!\!\!\!\! v^{observe,q}_{k,h}(s,a) \Bigg(\! 24H \sqrt{\frac{q(s,a) (1-q(s,a)) L^*(\delta',k)}{n^{k,q}(s,a)}} \!+\! \frac{186H L^*(\delta',k)}{n^{k,q}(s,a)} \!\Bigg) 
		\\&
		+ \sum_{h=1}^{H} \sum_{(s,a) \in \cS \times A^{\univ}} v^{observe,p}_{k,h}(s,a) \Bigg( 16 e^{17} \sqrt{ \frac{ \var_{s'\sim p}\sbr{V^{\pi^k}_{h+1}(s')} L^*(\delta',k)}{n^{k,p}(s,a)} }  \\& + \frac{ 87 e^{17} H^2 L(\delta',k)}{n^{k,p}(s,a)}  \Bigg)
		\\
		\leq & 24H \sum_{h=1}^{H} \sum_{(s,a) \in \cS \times A^{\univ} \setminus \{a_{\bot}\}} v^{observe,q}_{k,h}(s,a) \sqrt{\frac{q(s,a) (1-q(s,a)) L^*(\delta',k)}{n^{k,q}(s,a)}} 
		\\&
		+ 186H \sum_{h=1}^{H} \sum_{(s,a) \in \cS \times A^{\univ} \setminus \{a_{\bot}\}} v^{observe,q}_{k,h}(s,a) \frac{ L^*(\delta',k)}{n^{k,q}(s,a)}
		\\&
		+ 16 e^{17} \sqrt{ \sum_{h=1}^{H} \sum_{(s,a) \in \cS \times A^{\univ}} v^{observe,p}_{k,h}(s,a)  \var_{s'\sim p}\sbr{V^{\pi^k}_{h+1}(s')} } \cdot \\& \sqrt{ \sum_{h=1}^{H} \sum_{(s,a) \in \cS \times A^{\univ}} v^{observe,p}_{k,h}(s,a)  \frac{ L^*(\delta',k)}{n^{k,p}(s,a)} } \\& + 87 e^{17} \sum_{h=1}^{H} \sum_{(s,a) \in \cS \times A^{\univ}} v^{observe,p}_{k,h}(s,a) \frac{ H^2 L(\delta',k)}{n^{k,p}(s,a)} 
		\\
		\leq & 24H \sum_{h=1}^{H} \sum_{(s,a) \in \cS \times A^{\univ} \setminus \{a_{\bot}\}} v^{observe,q}_{k,h}(s,a) \sqrt{\frac{q(s,a) (1-q(s,a)) L^*(\delta',k)}{n^{k,q}(s,a)}} 
		\\&
		+ 186H \sum_{h=1}^{H} \sum_{(s,a) \in \cS \times A^{\univ} \setminus \{a_{\bot}\}} v^{observe,q}_{k,h}(s,a) \frac{ L^*(\delta',k)}{n^{k,q}(s,a)} 
		\\& + 16 e^{17} H \sqrt{H} \sqrt{ \sum_{h=1}^{H} \sum_{(s,a) \in \cS \times A^{\univ}} v^{observe,p}_{k,h}(s,a) \frac{ L^*(\delta',k)}{n^{k,p}(s,a)} } \\& + 87  e^{17} H^2 \sum_{h=1}^{H} \sum_{(s,a) \in \cS \times A^{\univ}} v^{observe,p}_{k,h}(s,a) \frac{  L(\delta',k)}{n^{k,p}(s,a)} .
	\end{align*}
	
	Let $K$ denote the number of episodes that algorithm $\algcascadingbpi$ plays. According to the stopping rule of algorithm $\algcascadingbpi$, we have that for any $k \leq K$,
	\begin{align*}
		\varepsilon < G^k_1(s_1) .
	\end{align*}
	Summing the above inequality over $k=1,\dots,K$, dividing $(s,a)$ by sets $B^{q}_k$ and $B^{p}_k$, and using the clipping construction of $G^k_h(s, A)$, we obtain
	\begin{align*}
		K \varepsilon < & 24H \sum_{k=1}^{K} \sum_{h=1}^{H} \sum_{(s,a) \in B^{q}_k} v^{observe,q}_{k,h}(s,a)   \sqrt{\frac{q(s,a) (1-q(s,a)) L^*(\delta',k)}{n^{k,q}(s,a)}} 
		\\& + 186H \sum_{k=1}^{K} \sum_{h=1}^{H} \sum_{(s,a) \in B^{q}_k} v^{observe,q}_{k,h}(s,a) \frac{ L^*(\delta',k)}{n^{k,q}(s,a)} 
		\\& + 16 e^{17} H \sqrt{H} \sum_{k=1}^{K} \sqrt{ \sum_{h=1}^{H} \sum_{(s,a) \in B^{p}_k} v^{observe,p}_{k,h}(s,a) \frac{ L^*(\delta',k)}{n^{k,p}(s,a)} } 
		\\& + 87 e^{17} H^2 \sum_{k=1}^{K} \sum_{h=1}^{H} \sum_{(s,a) \in B^{p}_k} v^{observe,p}_{k,h}(s,a) \frac{  L(\delta',k)}{n^{k,p}(s,a)} 
		\\& + \sum_{k=1}^{K} \sum_{h=1}^{H} \sum_{(s,a) \notin B^{q}_k} v^{observe,q}_{k,h}(s,a) 6H
		+ \sum_{k=1}^{K} \sum_{h=1}^{H} \sum_{(s,a) \notin B^{p}_k} v^{observe,p}_{k,h}(s,a) 2H
		\\
		\leq & 24H \sqrt{ \sum_{k=1}^{K} \sum_{h=1}^{H} \sum_{(s,a)  \in B^{q}_k} v^{observe,q}_{k,h}(s,a) q(s,a) }  \sqrt{ \sum_{k=1}^{K} \sum_{h=1}^{H} \sum_{(s,a)  \in B^{q}_k}  \frac{v^{observe,q}_{k,h}(s,a) L^*(\delta',k)}{n^{k,q}(s,a)}} 
		\\&
		+ 186H \sum_{k=1}^{K} \sum_{h=1}^{H} \sum_{(s,a) \in B^{q}_k} v^{observe,q}_{k,h}(s,a) \frac{ L^*(\delta',k)}{n^{k,q}(s,a)}
		\\&
		+ 16 e^{17} H \sqrt{KH} \sqrt{ \sum_{k=1}^{K} \sum_{h=1}^{H} \sum_{(s,a) \in B^{p}_k} v^{observe,p}_{k,h}(s,a) \frac{ L^*(\delta',k)}{n^{k,p}(s,a)} } \\& + 87 e^{17} H^2 \sum_{k=1}^{K} \sum_{h=1}^{H} \sum_{(s,a) \in B^{p}_k} v^{observe,p}_{k,h}(s,a) \frac{ L(\delta',k)}{n^{k,p}(s,a)} + 64 H^2SA \log\sbr{\frac{HSN}{\delta'}}
		\\
		\overset{\textup{(a)}}{\leq} & 96H \sqrt{ K H SA \log\sbr{ KH } L^*(\delta',K) } 
		+ 1488H SA \log\sbr{ KH } L^*(\delta',K)
		\\& + 64 e^{17} H \sqrt{ KHSA \log\sbr{ KH } L^*(\delta',K) }  + 696 e^{17} H^2 SA \log\sbr{ KH } L(\delta',K) \\& + 64 H^2SA \log\sbr{\frac{HSN}{\delta'}}
		\\
		\leq & 160 e^{17} H \sqrt{HSA} \sqrt{ K  \sbr{\log\sbr{\frac{HSN}{\delta'}} \log\sbr{ KH } + \log^2(8e H (K+1))} } \\& + 2248 e^{17} H^2 SA  \sbr{ \log\sbr{\frac{HSN}{\delta'}} \log\sbr{\frac{KHSN}{\delta'}} + S  \log^2 \sbr{\frac{8e HSN (K+1)}{\delta'}} } ,
	\end{align*}
	where inequality (a) uses Lemma~\ref{lemma:v_q_times_q_equal_to_1}.
	
	Thus, we have
	\begin{align*}
		K <\ & \frac{160 e^{17} H \sqrt{HSA}}{\varepsilon} \sqrt{ K  \sbr{\log\sbr{\frac{HSN}{\delta'}} \log\sbr{ KH } + \log^2(8e H (K+1))} } \\& + \frac{2248 e^{17} H^2 SA}{\varepsilon}  \sbr{ \log\sbr{\frac{HSN}{\delta'}} \log\sbr{\frac{KHSN}{\delta'}} + S  \log^2 \sbr{\frac{8e HSN (K+1)}{\delta'}} }
	\end{align*}
	
	Using Lemma~\ref{lemma:bpi_tech_compute_sample} with $\tau=K+1$,  $\alpha=\frac{8e HSN}{\delta'}$, $A=\log\sbr{\frac{HSN}{\delta'}}$, $B=1$, $C=\frac{160 e^{17} H \sqrt{HSA}}{\varepsilon}$, $D=\frac{2248 e^{17} H^2 SA}{\varepsilon}$ and $E=S$, we have
	\begin{align*}
		K+1 \leq O \sbr{ \frac{H^3SA}{\varepsilon^2} \log\sbr{\frac{HSN}{\delta}} C_1^2 + \sbr{\frac{H^2 SA}{\varepsilon} + \frac{H^2 \sqrt{H} SA }{\varepsilon \sqrt{\varepsilon}} } \sbr{\log\sbr{\frac{HSN}{\delta}} + S} C_1^2 } ,
	\end{align*}
	where 
	$$
	C_1 = O \sbr{ \log \sbr{ \frac{H S N}{\delta \varepsilon} \sbr{ \log\sbr{\frac{HSN}{\delta}} + S } } } .
	$$
	Therefore, we obtain Theorem~\ref{thm:bpi_ub}.
\end{proof}

\section{Technical Lemmas}

In this section, we present two useful technical lemmas.

\begin{lemma}[Lemmas 10, 11, 12 in \citep{menard2021fast}] \label{lemma:kl_based_techniques}
	Let $p_1$ and $p_2$ be two distributions on $\cS$ such that $\kl(p_1,p_2)\leq \alpha$. Let $f$ and $g$ be two functions defined on $\cS$ such that for any $s \in \cS$, $0\leq f(s), g(s) \leq b$. Then,
	\begin{align}
		\abr{ p_1^\top f- p_2^\top f } & \leq  \sqrt{2 \var_{p_2}(f)\alpha } + \frac{2}{3} b \alpha , \label{eq:kl_tech_fix_f_change_p}
		\\
		\var_{p_2}(f)  \leq 2 \var_{p_1}(f) & + 4b^2\alpha ,
		\quad
		\var_{p_1}(f)  \leq 2 \var_{p_2}(f) + 4b^2\alpha , \label{eq:kl_tech_var_change_p}
		\\
		\var_{p_1}(f) & \leq 2 \var_{p_1}(g) + 2b \cdot p_1^\top |f-g| . \label{eq:kl_tech_fix_p_change_f}
	\end{align}
\end{lemma}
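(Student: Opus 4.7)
The plan is to establish the three inequalities in order; the first is the core Bernstein-type change-of-measure estimate, and the other two reduce to it through standard $L^2$ identities.

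For Eq.~\eqref{eq:kl_tech_fix_f_change_p}, the starting point is the Donsker--Varadhan variational formula: for every $\lambda > 0$,
$$
\lambda (p_1^\top f - p_2^\top f) \leq \kl(p_1,p_2) + \log \E_{p_2}[ e^{\lambda(f - p_2^\top f)} ].
$$
Since $|f(s) - p_2^\top f| \leq b$ under $p_2$, the Bennett inequality yields $\log \E_{p_2}[ e^{\lambda(f - p_2^\top f)} ] \leq \lambda^2 \var_{p_2}(f) / (2(1 - b\lambda/3))$ for $\lambda \in (0, 3/b)$. Substituting $\kl(p_1,p_2) \leq \alpha$ and optimizing over $\lambda$ produces the Bernstein-type bound $\sqrt{2\var_{p_2}(f)\alpha} + \tfrac{2}{3}b\alpha$; the opposite sign follows by running the same argument with $-\lambda$, giving the absolute-value form.

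For Eq.~\eqref{eq:kl_tech_var_change_p}, I would use the auxiliary function $h := (f - p_2^\top f)^2 \in [0, b^2]$. Two ingredients are then combined: (a) the projection identity $\var_{p_1}(f) \leq p_1^\top h$, from the fact that $p_1^\top f$ is the $L^2(p_1)$-minimizer of $c \mapsto p_1^\top(f-c)^2$; (b) Eq.~\eqref{eq:kl_tech_fix_f_change_p} applied to $h$ with bound $b^2$, together with $\var_{p_2}(h) \leq b^2 \cdot p_2^\top h = b^2 \var_{p_2}(f)$ (since $0 \leq h \leq b^2$). These yield $\var_{p_1}(f) \leq \var_{p_2}(f) + b\sqrt{2\var_{p_2}(f)\alpha} + \tfrac{2}{3}b^2\alpha$, and AM-GM on the middle term gives the desired bound $2\var_{p_2}(f) + 4b^2\alpha$. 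The reverse direction is obtained analogously using $h' := (f - p_1^\top f)^2$ and the side of Eq.~\eqref{eq:kl_tech_fix_f_change_p} that bounds $p_2^\top h' - p_1^\top h'$, together with the projection identity $\var_{p_2}(f) \leq p_2^\top h'$.

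Eq.~\eqref{eq:kl_tech_fix_p_change_f} is an elementary same-measure comparison and does not require the KL hypothesis. Since $p_1^\top f$ is the $L^2(p_1)$-minimizer of $c \mapsto p_1^\top(f-c)^2$, one has $\var_{p_1}(f) \leq p_1^\top(f - p_1^\top g)^2$. Decomposing $f - p_1^\top g = (g - p_1^\top g) + (f - g)$ and applying $(x+y)^2 \leq 2x^2 + 2y^2$ produces $\var_{p_1}(f) \leq 2\var_{p_1}(g) + 2 p_1^\top (f-g)^2$, after which $(f-g)^2 \leq b \cdot |f-g|$ closes the step. The main obstacle throughout is the Bennett--Bernstein MGF control and its optimization over $\lambda$ in the proof of Eq.~\eqref{eq:kl_tech_fix_f_change_p}; once that master estimate is in hand, the two variance comparisons reduce to routine applications of it combined with $L^2$ projection identities.
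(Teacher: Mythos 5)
Your proposal is correct; note that the paper itself contains no proof of this lemma---it is imported verbatim from Lemmas 10, 11, 12 of \citep{menard2021fast}---and your route (Donsker--Varadhan plus a Bernstein moment-generating-function bound for Eq.~\eqref{eq:kl_tech_fix_f_change_p}, then $L^2$-projection identities with AM-GM for Eqs.~\eqref{eq:kl_tech_var_change_p} and \eqref{eq:kl_tech_fix_p_change_f}) is essentially the standard argument given in that reference. The only point worth flagging is that in the reverse direction of Eq.~\eqref{eq:kl_tech_var_change_p}, bounding $\var_{p_2}(h')\leq b^2\, p_2^\top h'$ makes the resulting inequality self-referential in $p_2^\top h'$, but your AM-GM step absorbs half of $p_2^\top h'$ back into the left-hand side and resolves this cleanly within the stated constants.
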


\begin{lemma} \label{lemma:bpi_tech_compute_sample}
	Let $A$, $B$, $C$, $D$, $E$ and $\alpha$ be positive scalars such that $1 \leq B \leq E$ and $\alpha \geq e$. If $\tau \geq 0$ satisfies
	$$
	\tau \leq C \sqrt{ \tau \sbr{ A \log \sbr{ \alpha \tau } + B \log^2 \sbr{ \alpha \tau } } }  + D \sbr{ A \log \sbr{ \alpha \tau } + E \log^2 \sbr{ \alpha \tau } } ,
	$$
	then
	$$
	\tau \leq C^2 \sbr{A+B} C_1^2 + \sbr{D+2\sqrt{D} C} \sbr{A+E} C_1^2 +1 ,
	$$
	where
	$$
	C_1=\frac{8}{5} \log \sbr{11 \alpha^2 \sbr{A+E} \sbr{C+D}} .
	$$
\end{lemma}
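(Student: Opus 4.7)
I would first dispose of the degenerate case $\tau<1$, which is trivially bounded by the $+1$ term on the right-hand side. For $\tau\geq 1$, the assumption $\alpha\geq e$ gives $\log(\alpha\tau)\geq 1$, so every linear-in-log term can be absorbed into its quadratic-in-log counterpart: $A\log(\alpha\tau)+B\log^2(\alpha\tau)\leq (A+B)\log^2(\alpha\tau)$ and likewise with $E$. Setting $u:=\sqrt{\tau}$ and $L:=\log(\alpha\tau)$, the hypothesis reduces to the quadratic inequality
$$u^2 \;-\; C\sqrt{A+B}\,L\,u \;-\; D(A+E)\,L^2 \;\leq\; 0.$$

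Next I would solve this via the quadratic formula to obtain $u\leq \tfrac{L}{2}\bigl(C\sqrt{A+B}+\sqrt{C^2(A+B)+4D(A+E)}\bigr)$, and simplify with $\sqrt{x+y}\leq \sqrt{x}+\sqrt{y}$ to $u\leq L\bigl(C\sqrt{A+B}+\sqrt{D(A+E)}\bigr)$. Squaring and expanding, and using $B\leq E$ so that $\sqrt{(A+B)(A+E)}\leq A+E$, the cross term $2C\sqrt{D(A+B)(A+E)}$ collapses to at most $2C\sqrt{D}(A+E)$, yielding the clean intermediate bound
$$\tau \;\leq\; M\,\log^2(\alpha\tau), \qquad M:=C^2(A+B)+(D+2C\sqrt{D})(A+E).$$
Note $M$ already matches the leading algebraic part of the target upper bound, since the target reads $MC_1^2+1$.

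The remaining and principal step is a logarithmic bootstrap: from $\tau\leq M\log^2(\alpha\tau)$ I must deduce $\log(\alpha\tau)\leq C_1=\tfrac{8}{5}\log\bigl(11\alpha^2(A+E)(C+D)\bigr)$. I would apply the elementary inequality $\log x\leq (\beta e)^{-1}x^{1/\beta}$ (valid for all $x,\beta>0$) with a small power, for instance $1/\beta=3/8$, to convert the transcendental inequality into the polynomial one $\tau^{1-3/8}\leq (\text{const})\cdot M\,\alpha^{3/8}$, so $\tau\leq (\text{const})\cdot M^{8/5}\alpha^{3/5}$. Taking logs then gives $\log(\alpha\tau)\leq \tfrac{8}{5}\log(\alpha)+\tfrac{8}{5}\log(M)+\text{const}$; combined with the crude envelope $M\leq (A+E)(C+\sqrt D)^2\leq (A+E)(C+D+1)^2$, this collapses into a bound of the stated form $\tfrac{8}{5}\log\bigl(11\alpha^2(A+E)(C+D)\bigr)$, with the numerical constant $11$ absorbing the $(\text{const})$ factors and the $+1$ slack.

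The main obstacle is exactly this last step: carefully tracking the constants so that the exponent $8/5$ and the multiplier $11$ come out precisely as stated. The reduction in the earlier steps is routine quadratic manipulation, but the bootstrap requires choosing the power $\beta$ with some care and bounding $M$ by an expression of the form $(A+E)(C+D)^2$ without losing too much slack.
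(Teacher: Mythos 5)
Your reduction through the quadratic step is sound: for $\tau\geq 1$ (the case $\tau<1$ being absorbed by the $+1$), with $L:=\log(\alpha\tau)\geq 1$, the quadratic formula, $\sqrt{x+y}\leq\sqrt{x}+\sqrt{y}$, and $\sqrt{(A+B)(A+E)}\leq A+E$ (from $B\leq E$) do give $\tau\leq M L^2$ with $M=C^2(A+B)+(D+2C\sqrt{D})(A+E)$, matching the target coefficient. (For the record, the paper states this lemma without proof, so there is no in-paper argument to compare against; the assessment below is on the merits. Also a small slip: the elementary inequality is $\log x\leq \frac{\beta}{e}x^{1/\beta}$, not $(\beta e)^{-1}x^{1/\beta}$, and your exponent bookkeeping mixes $1/\beta=3/8$ with what is really $3/16$ before squaring — but these are cosmetic next to the real problem.)

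The bootstrap is where the plan genuinely fails, and not in a way you can repair by tuning $\beta$ or the constant $11$. Your own arithmetic gives $\log(\alpha\tau)\leq\frac{8}{5}\log\alpha+\frac{8}{5}\log M+\mathrm{const}$, and with $M\leq(A+E)(C+\sqrt{D})^2$ the $C$-dependence of this is $\frac{16}{5}\log(C+\sqrt{D})$ — twice the $\frac{8}{5}\log(C+D)$ that $C_1$ affords; whenever $D\ll C^2$ these differ unboundedly, so the asserted "collapse" to the stated $C_1$ is arithmetically impossible. Worse, the intermediate goal itself is false at the extremal $\tau$: take $A=B=E=1$, $D=10^{-9}$, $\alpha=e$, $C=e^{20}$; the largest $\tau$ satisfying the hypothesis has $\log(\alpha\tau)\approx 48.8$, while $C_1=\frac{8}{5}\log(22e^{2}(C+D))\approx 40.1$. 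The conclusion still holds in that instance, but only because of slack you discarded at the start: merging $A\log(\alpha\tau)+B\log^2(\alpha\tau)\leq(A+B)\log^2(\alpha\tau)$ throws away the gap between $M\approx C^2(A+B)$ and the true extremal coefficient $\approx C^2 B$, a factor $(A+B)/B\geq 2$ when $A\geq B$, which is exactly what compensates for $\log(\alpha\tau)$ exceeding $C_1$ (asymptotically $\log(\alpha\tau)\approx\frac{5}{4}C_1\leq\sqrt{(A+B)/B}\,C_1$). So any correct argument must carry the $A\log$/$B\log^2$ split through the bootstrap rather than factor through "$\log(\alpha\tau)\leq C_1$". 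Finally, note that the lemma as literally stated (with no lower bound on $A$) is actually false: with $A=D=10^{-9}$, $B=E=1$, $\alpha=e$, $C=e^{20}$, the value $\tau=2300\,e^{40}$ satisfies the hypothesis (the right-hand side evaluates to $\approx 2337\,e^{40}$) yet exceeds the claimed bound $\approx C^2(A+B)C_1^2\approx 1524\,e^{40}$; the statement implicitly needs $A\gtrsim B$, which holds in the paper's application where $A=\log(HSN/\delta')\geq 1=B$. In short: steps one through six of your plan are fine but lossy, and that loss makes your final step unprovable — the stated constants are only reachable (when they are reachable at all) by an argument that never passes through $\log(\alpha\tau)\leq C_1$.
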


\end{document}